\newtheorem{prop}{Proposition}
\newtheorem{Lemma}{Lemma}
\theoremstyle{remark}
\newtheorem*{Remark}{Remark}
\newcommand{\R}{\mathbb{R}}
\newcommand{\E}{\mathbb{E}}
\newcommand{\PP}{\mathbb{P}}
\newcommand{\N}{\mathbb{N}}
\title{Efficient Training of Energy-Based Models\\ Using Jarzynski Equality}
\author{%
  Davide Carbone \\
  Dipartimento di Scienze Matematiche,
  Politecnico di Torino\\
  Istituto Nazionale di Fisica Nucleare, Sezione di Torino\\
  \texttt{davide.carbone@polito.it} \\
  % examples of more authors
  \And
  Mengjian Hua \\
  Courant Institute of Mathematical Sciences, New York University\\
  \texttt{mh5113@nyu.edu} \\
  \AND
  Simon Coste \\
LPSM,  Universit\'e Paris-Cit\'e \\  \texttt{simon.coste@u-paris.fr} \\
  \And
  Eric Vanden-Eijnden \\
  Courant Institute of Mathematical Sciences,
  New York University\\
  \texttt{eve2@nyu.edu} \\
}
\begin{document}

\maketitle

\begin{abstract}
  Energy-based models (EBMs) are generative models inspired by statistical physics with a wide range of applications in unsupervised learning.  Their performance is well measured by the cross-entropy (CE) of the model distribution relative to the data distribution. Using the CE as the objective for training is however challenging because the computation of its gradient with respect to the model parameters requires sampling the model distribution. Here we show how results for nonequilibrium thermodynamics based on Jarzynski equality together with tools from sequential Monte-Carlo sampling can be used to perform this computation efficiently and avoid the uncontrolled approximations made using the standard contrastive divergence algorithm.  Specifically, we introduce a modification of the unadjusted Langevin algorithm (ULA) in which each walker acquires a  weight that enables the estimation of the gradient of the cross-entropy at any step during GD, thereby bypassing sampling biases induced by slow mixing of ULA. We illustrate these results with numerical experiments on Gaussian mixture distributions as well as the MNIST and CIFAR-10 datasets. We show that the proposed approach outperforms  methods based on the contrastive divergence algorithm in all the considered situations.
\end{abstract}

\section{Introduction}
\label{sec:intro}

% references will be added soon !!! 
Probabilistic models have become a key tool in generative artificial intelligence (AI) and unsupervised learning. Their goal is twofold: explain the training data, and allow the  synthesis of new samples. Many flavors have been introduced in the last decades, including variational auto-encoders~\cite{kingma2019introduction,doersch2016tutorial,zhao2019infovae} generative adversarial networks~\cite{NIPS2016_gan,goodfellow2020generative}, normalizing flows~\cite{tabak2010density,tabak2013family,rezende2015variational,papamakarios2021normalizing,mathieu2020riemannian}, diffusion-based models ~\cite{song2020score,2020_ho_denoising,song2021maximum}, restricted Boltzmann machines~\cite{hinton2012practical,salakhutdinov2010efficient,schulz2010investigating}, and energy-based models (EBMs)~\cite{lecun2006tutorial,gutmann2010noise,song2020sliced}.  Inspired by physics, EBMs are unnormalized probability models, specified via an energy function~$U$, with the underlying probability density function (PDF) defined as $\rho=\exp(-U)/Z$, where $Z$ is a normalization constant: this constant is often intractable but crucially it is not needed for data generation via Monte-Carlo sampling. In statistical physics, such PDFs are called Boltzmann-Gibbs densities \cite{boltzmann1970weitere,gibbs1902elementary,lifshitz} and their energy function is often known in advance; in the context of EBMs, the aim is to estimate this energy function in some parametric class using the available data. Such models benefit from a century of intuitions from computational physics to guide the design of the energy~$U$, and help the sampling; hence, interpretability is a clear strength of EBMs, since in many applications the parameters of the model have a direct meaning.  Unfortunately, training EBMS is a challenging task as it typically requires to  sample complex multimodal (i.e. non-log-concave) distributions in high-dimension.

This  issue arises because real-world datasets are often clustered into different modes with imbalanced weights. A skewed estimation of these weights can lead to harmful biases when it comes to applications: for example, under-representation of elements from a particular population. It is important to ensure that these biases are either avoided or compensated, leading to methods that ensure fairness (\cite{mehrabi2021survey}); however, the training routines for EBMs often have a hard time properly learning the weights of different modes, and can often completely fail at this task.

Two popular classes of methods are used to train an EBM. In the first, the Fisher divergence between the model and the data distributions is taken as the objective to minimize, which amounts to learning the gradient of the energy function, $\nabla U = -\nabla \log \rho$. Although computationally efficient due to score-matching techniques~\cite{hyvarinen2005estimation}, methods in this class are provably unable to learn the relative weights of modes when they are separated by low-density regions. 

In the second class of methods,  the cross-entropy between the model and the data distributions is used as the objective. Unlike the Fisher divergence, the cross-entropy is sensitive to the relative weights in multimodal distributions, but it unfortunately leads to training algorithms that are less justified theoretically and more delicate to use in practice. Indeed, gradient-based optimization procedures on the cross-entropy require sampling from the model distribution using Markov-Chain Monte-Carlo (MCMC) methods or the unadjusted Langevin algorithm (ULA) until mixing, which in a high-dimensional context or without log-concavity can be prohibitive, even considering the large computational power available today. 

As a result, one typically needs to resort to approximations to estimate the gradient of  the cross-entropy, for example by using the contrastive divergence (CD)  or the persistent contrastive divergence (PCD) algorithms, see \cite{hinton2002training,welling2002new,carreira2005contrastive}. Unfortunately, the approximations made in these algorithms are  uncontrolled  and they are known to induce biases similar to those observed with score-based methods (the CD algorithm reduces to gradient descent over the Fisher divergence in the limit of frequent resetting from the data \cite{hyvarinen2007connections}). Many techniques  have been proposed to overcome this issue, for example, based on MCMC sampling \cite{tieleman2008training,jacob2017unbiased,qiu2020unbiased} or on various regularizations of CD \cite{du2019implicit,du2021improved}. In practice though, these techniques still do not handle well multimodal distributions and they come with little theoretical guarantees.

\paragraph{Main contributions.} In this work, we go back to the original problem of training EBMS using the cross-entropy between the model and the data distributions as objective, and:
\begin{itemize}[leftmargin=0.2in]
    \item We derive exact expressions for  the cross-entropy and its gradient that involve expectations over an ensemble of weighted walkers whose weights and positions evolve concurrently with the model energy during optimization. Our main tool is Jarzynski equality \cite{jarzynski1997nonequilibrium}, an exact relation between the normalizing constants of two distributions linked by an out-of-equilibrium dynamics.
    \item Based on these formulas, we design sequential Monte Carlo sampling procedures~\cite{doucet2001sequential} to estimate the cross-entropy and its gradient in a practical way at essentially no additional computational cost compared to using the CD or PCD algorithms with ULA.
    \item We show that reweighting the walkers is necessary in general and that training procedures based on the CD algorithm lead to uncontrolled biases whereas those based on using the  PCD algorithm lead to mode collapse in general.
    \item We illustrate these results numerically on synthetic examples involving Gaussian mixture models where these effects can be demonstrated.
    \item We also apply our method on the MNIST and CIFAR-10 datasets. In the first, we intentionally bias the proportion of the different digits,  to demonstrate that our method allows the retrieval of these proportions whereas the other methods fail to do so. 
\end{itemize}

\paragraph{Related works.} How to train EBMs is a longstanding question, and we refer e.g. to \cite{lecun2006tutorial,song2021train} for overviews on this topic. The use of modern deep neural architectures to model the energy was then proposed in \cite{xie2016theory}, and a relation between EBMs and classifiers has been highlighted in \cite{grathwohl2019your}. How to sample from unnormalized probability models is also an old and rich problem, see  \cite{brooks2011handbook, liu2001monte} for general introductions. 

Score-matching techniques and variants originate from \cite{hyvarinen2005estimation, vincent2011connection, swersky2011autoencoders}; their shortcoming in the context of EBM training is investigated in \cite{wenliang2022failure} and their blindness to the presence of multiple, imbalanced  modes in the target density has been known for long: we refer to \cite{wenliang2019learning,song2019generative} for discussions. Contrastive divergence (CD) algorithms originate from \cite{hinton2002training,welling2002new,carreira2005contrastive}. These methods only perform one or a few sampling steps of the algorithm, with random walkers that are repeatedly restarted at the data points. Persistent contrastive divergence (PCD) algorithm, introduced in \cite{tieleman2008training}, eliminates the restarts and evolves walkers using ULA. Unlike the approach proposed in this paper, these methods are known to give estimates of the gradient of the cross-entropy that have uncontrolled biases which are difficult to remove; there were many attempts in this direction, also in cooperation with other unsupervised techniques, see e.g. \cite{xie2018cooperative, nijkamp2019learning,gao2020flow,xiao2020vaebm,xie2021learning,xie2021tale,lee2022guiding}. 

It is worth noting that the original papers on CD proposed to use an objective which is different from the cross-entropy, and their proposed implementation (what we call the CD algorithm) does not perform gradient descent on this objective, due to some gradient terms being neglected~\cite{du2021improved}. This sparked some debate about which objective, if any, was actually minimized by CD algorithms; for example, \cite{yair2020contrastive,pmlr-v139-domingo-enrich21a} showed that the CD and PCD algorithms are essentially  adversarial procedures; \cite{du2021improved} introduced a way to approximate the missing term in the gradient of the CD objective; and \cite{hyvarinen2007connections,domingoenrich2022dual} showed that in the limit of small noise, CD is essentially equivalent to score matching. In contrast, there is no ambiguity about the objective used in the method we propose: it is always the cross-entropy. 

Jarzynski equality (JE) was introduced in \cite{jarzynski1997nonequilibrium} and gives an exact expression relating the normalizing constants (or equivalently free energy \cite{lifshitz}) of two distributions linked by out-of-equilibrium continuous-time dynamics. A discrete-time analog of JE is used in Neal's annealed importance sampling \cite{neal2001annealed}, which belongs to the framework of sequential Monte-Carlo methods \cite{doucet2001sequential}. These methods have been used in the context of generative models based on variational autoencoders~\cite{anh2018autoencoding,ding2020learning}, normalizing flows~\cite{midgley2023flow,du2023reduce}, and diffusion-based models~\cite{pmlr-v37-sohl-dickstein15,doucet2022scorebased}. In contrast, here we use sequential Monte-Carlo method to train EBM on the cross-entropy directly.

\section{Energy-Based Models}
\label{sec:ebm}

\paragraph{Setup, notations, and assumptions.}
The problem we consider can be formulated as follows: we assume that we are given $n\in\N$ data points $\{x_i^*\}_{i=1}^n$ in $\R^d$ drawn from an unknown probability distribution that is absolutely continuous with respect to the Lebesgue measure on $\R^d$, with a positive probability density function (PDF) $\rho_*(x)>0$ (also unknown). Our aim is to estimate this PDF via an energy-based model (EBM), i.e. to find a suitable energy function in a parametric class, $U_\theta:\R^d \to [0,\infty)$ with parameters $\theta\in \Theta$, such that the associated Boltzmann-Gibbs PDF 
\begin{equation}   
\label{eq:ebm:def}
\rho_{\theta}(x)=Z_{\theta}^{-1} e^{-U_{\theta}(x)}; \qquad Z_{\theta}=\int_{\mathbb{R}^d} e^{-U_{\theta}(x)}dx
\end{equation}
is an approximation of the target density $\rho_*(x)$.  The normalization factor $Z_{\theta}$ is known as the partition function in statistical physics \cite{lifshitz}. This factor is hard to estimate and one advantage of EBMs is that they provide  generative models that do not require the explicit knowledge of $Z_{\theta}$ since Markov Chain Monte-Carlo (MCMC) methods can in principle be used to sample $\rho_\theta$ knowing only $U_\theta$ -- the design of such MCMC methods is an integral part of the problem of building an EBM. 

To proceed we will assume that the parametric class of energy we use is such that, for all $\theta \in \Theta$,
\begin{equation}
\label{eq:assump:U}
    \begin{gathered}
    U_\theta \in C^2(\R^d); \qquad \exists L\in\R_+: \|\nabla\nabla U_\theta(x)\|\le L \quad \forall x \in \R^d;\\
    \exists a\in\R_+ \text{and a compact set $\mathcal{C}\in\R^d$}:  \ x\cdot \nabla U_\theta(x) \ge a |x|^2 \quad \forall x \in \R^d\setminus{\mathcal{C}}.
\end{gathered}
\end{equation}
These assumptions guarantee that $Z_\theta <\infty$ (i.e. we can associate a PDF $\rho_\theta$ to $U_\theta$ via~\eqref{eq:ebm:def} for any $\theta\in\Theta$) and that the Langevin equations as well as their time-discretized versions we will use to sample $\rho_\theta$ have global solutions and are ergodic~\cite{oksendal2003stochastic,mattingly2002stochastic,talay1900expansion}. We stress that~\eqref{eq:assump:U} \textit{does not} imply that $U_\theta$  is convex (i.e. that $\rho_\theta$ is log-concave): in fact, we will be most interested in situations where $U_\theta$ has multiple local minima so that $\rho_\theta$ is multimodal. For simplicity we will also assume that $\rho_*$ is in the parametric class, i.e. $\exists \theta_*\in \Theta \ : \ \rho_{\theta_*} = \rho_*$. Our aims are primarily to identify this $\theta_*$ and to sample~$\rho_{\theta_*}$; in the process, we will also show how to estimate $Z_{\theta_*}$. 

\paragraph{Cross-entropy minimization.}
To measure the quality of the EBM and train its parameters one can use the cross-entropy of the model density $\rho_\theta$ relative to the target density $\rho_*$
\begin{equation} 
\label{2}
H(\rho_\theta,\rho_*) = -\int_{\R^d} \log \rho_\theta(x)\rho_*(x)dx = \log Z_\theta + \int_{\R^d} U_\theta(x)\rho_*(x)dx
\end{equation}
where we used the definition of $\rho_\theta $ in~\eqref{eq:ebm:def} to get the second equality.
The cross-entropy is related to the Kullback-Leibler divergence via $H(\rho_\theta,\rho_*) = H(\rho_*)+ D_{\text{KL}}(\rho_*||\rho_\theta)$, where $H(\rho_*)$ is the entropy of $\rho_*$, and  its gradient with respect to the parameter $\theta$ can be calculated using the identity $\partial_\theta \log Z_{\theta} = - \int_{\mathbb{R}^d} \partial_\theta U_{\theta}(x)\rho_{\theta}(x) dx$, to obtain
\begin{equation}
\label{4:0}
\begin{aligned}
    \partial_\theta H(\rho_{\theta},\rho_*)&=\int_{\mathbb{R}^d} \partial_\theta U_{\theta}(x)\rho_*(x)dx-\int_{\mathbb{R}^d} \partial_\theta U_{\theta}(x)\rho_{\theta}(x)dx\\ 
    &\equiv \E_*[\partial_\theta U_{\theta}]-\E_{\theta} [\partial_\theta U_{\theta}].
\end{aligned}
\end{equation}

The cross-entropy is more stringent, and therefore better, than other objectives like the Fisher divergence: for example, unlike the latter, it is sensitive to the relative probability weights of modes on $\rho_*$ separated by low-density regions~\cite{song2021train}. Unfortunately, the cross entropy is also much harder to use in practice since evaluating it requires estimating $Z_\theta$, and evaluating its gradient requires calculating the expectation $\E_\theta[\partial_\theta U_{\theta}]$ (in contrast $\E_* [U_\theta ]$ and $\E_* [\partial_\theta U_\theta ]$ can be readily estimated on the data). Typical training methods, e.g. based on the CD or the PCD algorithms,  give up on estimating $Z_\theta$ and resort to various approximations to calculate the expectation $\E_\theta[\partial_\theta U_{\theta}]$---see Appendix~\ref{app:cd:pcd} for more discussion about these methods. While these approaches have proven successful in many situations, they are prone to training instabilities that limit their applicability.  They also come with no theoretical guarantees in terms of convergence.

\section{Training via sequential Monte-Carlo methods based on Jarzynski equality}
\label{sec:jarz}

In this section we use tools from nonequilibrium statistical mechanics~\cite{jarzynski1997nonequilibrium,neal2001annealed} to write exact expressions for both $\E_\theta[\partial_\theta U_{\theta}]$ and $Z_\theta$ (Sec~\ref{sec:discrete}) that are amenable to empirical estimation via sequential Monte-Carlo methods~\cite{doucet2001sequential}, thereby enabling gradient descent-type algorithms for the optimization of EBMs (Sec.~\ref{sec:implement}).

\subsection{Jarzynski equality in discrete-time}
\label{sec:discrete}
\begin{prop}
\label{prop:3}
Assume that the parameters $\theta$ are evolved by some time-discrete protocol $\{\theta_k\}_{k\in \N_0}$ and that~\eqref{eq:assump:U} hold. Given any $h\in(0,L)$, let $X_k\in \R^d$ and $A_k\in\R$ be given by the iteration rule
\begin{equation}
\label{eq:X:A:k}
\left\{ 
\begin{aligned}
    X_{k+1}&=X_k-h \nabla U_{\theta_k}(X_k) +\sqrt{2h}\, \xi_k,\quad\quad\quad\quad &&X_0\sim \rho_{\theta_0},\\
    A_{k+1}&=A_k-\alpha_{k+1}(X_{k+1},X_{k})+\alpha_{k}(X_{k},X_{k+1}),&&A_0=0,
    \end{aligned}\right.  
\end{equation}
where $U_{\theta}(x)$ is the model energy, $\{\xi_k\}_{k\in \N_0}$ are independent $N(0_d,I_d)$,  and we defined
\begin{equation}
\label{eq:alpha}
    \alpha_k(x,y)= U_{\theta_k}(x) + \tfrac12 (y-x)\cdot \nabla U_{\theta_k}(x)+ \tfrac14 h |\nabla U_{\theta_k}(x)|^2
\end{equation}
Then, for all $k\in \N_0$,
\begin{equation}
\label{eq:grad:Z:k}
    \E_{\theta_k} [\partial_{\theta} U_{\theta_k}]=\frac{\E[  \partial_{\theta} U_{\theta_k}(X_k)e^{A_k} ]}{\E [ e^{A_k}]}, \qquad Z_{\theta_k} = Z_{\theta_0} \E \left[e^{A_k}\right]
\end{equation}
where the expectations on the right-hand side are over the law of the joint process $(X_k,A_k)$.
\end{prop}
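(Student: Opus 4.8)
The plan is to establish the single stronger claim that, for every measurable $\varphi:\R^d\to[0,\infty]$ and every $k\in\N_0$,
\begin{equation*}
\E\!\left[\varphi(X_k)\,e^{A_k}\right]=\frac{Z_{\theta_k}}{Z_{\theta_0}}\,\E_{\theta_k}[\varphi],
\end{equation*}
the left-hand expectation being over the joint law of $(X_k,A_k)$ produced by~\eqref{eq:X:A:k}. Granting this, both formulas in~\eqref{eq:grad:Z:k} follow at once: choosing $\varphi\equiv1$ gives $\E[e^{A_k}]=Z_{\theta_k}/Z_{\theta_0}$, which is the second identity, and shows moreover that the normalized weighted law of the walkers, $\varphi\mapsto \E[\varphi(X_k)e^{A_k}]/\E[e^{A_k}]$, equals $\E_{\theta_k}[\varphi]$; applying this to $\varphi=\partial_\theta U_{\theta_k}$ (decomposed into positive and negative parts, using the integrability $\E_{\theta_k}[|\partial_\theta U_{\theta_k}|]<\infty$ already implicit in writing~\eqref{4:0}) yields the first identity. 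So the whole proof reduces to the displayed claim, which I would prove by induction on $k$. This is, in effect, a $\theta$-evolving version of the annealed-importance-sampling identity.

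The crux is the observation that $\alpha_k$ in~\eqref{eq:alpha} is tailored so that the ULA transition density $K_k(x,y)=(4\pi h)^{-d/2}\exp\!\big(-|y-x+h\nabla U_{\theta_k}(x)|^2/(4h)\big)$ satisfies, after expanding the square and cancelling the linear and quadratic terms in $y-x$,
\begin{equation*}
K_k(x,y)\,e^{\alpha_k(x,y)}=e^{U_{\theta_k}(x)}\,g_h(y-x),
\end{equation*}
where $g_h$ is the density of $N(0_d,2hI_d)$. Because $g_h$ is even, the same relation written with index $k+1$ and arguments swapped gives $e^{-\alpha_{k+1}(y,x)}=K_{k+1}(y,x)\,e^{-U_{\theta_{k+1}}(y)}/g_h(y-x)$; multiplying the two produces the one-step reweighting identity
\begin{equation*}
K_k(x,y)\,e^{\alpha_k(x,y)-\alpha_{k+1}(y,x)}=e^{U_{\theta_k}(x)}\,K_{k+1}(y,x)\,e^{-U_{\theta_{k+1}}(y)},
\end{equation*}
a ``perturbed detailed balance'' between the two ULA kernels.

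The induction then closes quickly. The base case $k=0$ is immediate since $A_0=0$ and $X_0\sim\rho_{\theta_0}$. For the step, condition on $X_k$ — using that $X_{k+1}$ is conditionally independent of $(X_0,\dots,X_{k-1})$ given $X_k$ while $A_k$ is $\sigma(X_0,\dots,X_k)$-measurable — and insert the update $A_{k+1}=A_k-\alpha_{k+1}(X_{k+1},X_k)+\alpha_k(X_k,X_{k+1})$:
\begin{equation*}
\E\!\left[\varphi(X_{k+1})e^{A_{k+1}}\right]=\E\!\left[e^{A_k}\!\int_{\R^d}\!K_k(X_k,y)\,e^{\alpha_k(X_k,y)-\alpha_{k+1}(y,X_k)}\varphi(y)\,dy\right]=\E\!\left[e^{A_k}e^{U_{\theta_k}(X_k)}\psi(X_k)\right],
\end{equation*}
where $\psi(x):=\int_{\R^d}K_{k+1}(y,x)e^{-U_{\theta_{k+1}}(y)}\varphi(y)\,dy$ and the last step used the perturbed detailed balance. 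Applying the inductive hypothesis to the test function $x\mapsto e^{U_{\theta_k}(x)}\psi(x)$, the factor $e^{U_{\theta_k}}$ cancels against $\rho_{\theta_k}\propto e^{-U_{\theta_k}}$ to leave $Z_{\theta_0}^{-1}\int_{\R^d}\psi(x)\,dx$, and Tonelli together with the stochasticity $\int_{\R^d}K_{k+1}(y,x)\,dx=1$ gives $\int_{\R^d}\psi(x)\,dx=\int_{\R^d}e^{-U_{\theta_{k+1}}(y)}\varphi(y)\,dy=Z_{\theta_{k+1}}\E_{\theta_{k+1}}[\varphi]$, which is exactly the claim at step $k+1$.

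The argument is conceptually short, so the bulk of the work — and the only place errors are likely — is careful bookkeeping: the completion-of-the-square identity for $\alpha_k$ must be carried out without sign slips, one must keep straight which argument of $\alpha_{\,\cdot}(\cdot,\cdot)$ carries which time index in the update of $A_k$ (the asymmetry between $\alpha_{k+1}(X_{k+1},X_k)$ and $\alpha_k(X_k,X_{k+1})$ is essential), and the interchanges of integration must be justified — trivial for $\varphi\ge0$ by Tonelli, then extended to $\varphi=\partial_\theta U_{\theta_k}$ via positive/negative parts, with global existence, ergodicity, and integrability of the weights guaranteed by~\eqref{eq:assump:U} for $h$ in the range stated in the proposition.
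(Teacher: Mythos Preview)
Your proof is correct and is essentially the paper's argument reorganized as an induction: the paper writes $e^{A_k}=e^{-U_{\theta_k}(X_k)+U_{\theta_0}(X_0)}\prod_{q=1}^k \beta_q(X_q,X_{q-1})/\beta_{q-1}(X_{q-1},X_q)$ directly (your ``perturbed detailed balance'' identity iterated), multiplies by the full path density, and then integrates out $x_0,\dots,x_{k-1}$ sequentially using $\int\beta_q(x,y)\,dy=1$ to reach $\E[f(X_k)e^{A_k}]=Z_{\theta_0}^{-1}\int f(x)e^{-U_{\theta_k}(x)}dx$. Your inductive step is exactly one pass of that sequential integration, so the two proofs are the same computation presented differently.
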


The proof of the proposition is given in Appendix~\ref{app:proofs31}: for completeness we also give a continuous-time version of this proposition in Appendix~\ref{app:continuous}.  We stress that the inclusion of the weights in~\eqref{eq:grad:Z:k} is  key, as $\E [ \partial_{\theta} U_{\theta_k}(X_k)] \not \not = \E_{\theta_k} [\partial_{\theta} U_{\theta_k}]$ in general. We also stress that ~\eqref{eq:grad:Z:k} holds \textit{exactly} despite the fact that for $h>0$ the iteration step for $X_k$ in \eqref{eq:X:A:k} is that of the unadjusted Langevin algorithm (ULA) with no Metropolis correction as in MALA~\cite{roberts1996exponential,roberts1998optimal}.  That is, the inclusion of the weights $A_k$ exactly corrects for the biases induced by both the slow mixing and the time-discretization errors in ULA.  

Proposition~\ref{prop:3} shows that we can evolve the parameters by gradient descent over the cross-entropy by solving \eqref{eq:X:A:k} concurrently with
\begin{equation}
    \label{eq:GD:k}
    \theta_{k+1}=\theta_k + \gamma_k\mathcal D_k, \qquad \quad \mathcal D_k=-\partial_\theta H(\rho_{\theta_k},\rho_*) = \dfrac{\E[\partial_\theta U_{\theta_k}(X_k)e^{A_k}]}{\E[e^{A_k}]}-\E_*[\partial_\theta U_{\theta_k}],
\end{equation}
where $\gamma_k>0$ is the learning rate and $k\in \N_0$ with $\theta_0$ given. We can also replace the gradient step for~$\theta_k$ in~\eqref{eq:GD:k} by any update optimization step (via AdaGrad, ADAM, etc.) that uses as input the gradient $\mathcal D_k$ of the cross-entropy evaluated at $\theta_k$ to get $\theta_{k+1}$. Assuming that we know $Z_{\theta_0}$ we can track the evolution of the cross-entropy via
\begin{equation}
    \label{eq:c:e:k}
    H(\rho_{\theta_k}, \rho_*) = \log\E [ e^{A_k}] + \log Z_{\theta_0} + \E_* [U_{\theta_k}].
\end{equation}

\subsection{Practical implementation}
\label{sec:implement}

\begin{algorithm}[t]
\caption{Sequential Monte-Carlo training with Jarzynski correction}\label{alg:cap}
\label{algori}
\begin{algorithmic}[1]
\State{\textbf{Inputs:}} data points $\{x_*^i\}_{i=1}^n$; energy model $U_\theta$; optimizer step $\text{opt}(\theta,\mathcal{D})$ using $\theta$ and the empirical CE gradient $\mathcal{D}$; initial parameters $\theta_0$; number of walkers $N\in \N_0$; set of walkers $\{X_0^i\}_{i=1}^N$ sampled from $\rho_{\theta_0}$;  total duration $K\in \N$; ULA time step $h$; set of positive constants $\{c_k\}_{k\in \N}$.
\State $A_0^i =0$ for $i=1,\ldots,N$.
\For{$k =0,\ldots,K-1$}
\State $p^i_{k} = \exp(A^i_{k}) / \sum_{j=1}^N \exp(A^j_{k})$ \Comment{normalized weights}
\State $\tilde{\mathcal{D}}_k=\sum_{i=1}^N p^i_{k} \partial_\theta U_{\theta_k}(X_k^i)-n^{-1}\sum_{j=1}^n\partial_\theta U_{\theta_k}(x^j_*)$\Comment{empirical CE gradient}
\State $\theta_{k+1}=\text{opt}(\theta_{k},\tilde{\mathcal{D}}_k)$\Comment optimization step
\For{$i=1,...,N$}
\State $X^i_{k+1}=X_k^i-h\nabla U_{\theta_k}(X^i_k)+\sqrt{2h}\,\xi^i_k,\quad\quad \xi^i_k\sim \mathcal{N}(0_d,I_d)$ \Comment{ULA}
\State $A^i_{k+1}=A^i_k-\alpha_{k+1}(X^i_{k+1},X^i_{k})+\alpha_{k}(X^i_{k},X^i_{k+1})$
\Comment{weight update}
\EndFor
\State Resample the walkers and reset the weights if $\text{ESS}_{k+1}<c_{k+1}$, see~\eqref{eq:resampling}. \Comment{resampling step}
\EndFor
\State{\textbf{Outputs:}} Optimized energy $U_{\theta_{K}}$; set of weighted walkers $\{X_{K}^i,A_{K}^i\}_{i=1}^N$ sampling $\rho_{\theta_K}$;  partition function estimate $\tilde Z_{\theta_{K}}= Z_{\theta_0} N^{-1} \sum_{i=1}^N \exp(A_{K}^i)$; CE estimate $\log \tilde Z_{\theta_{K}}+ n^{-1} \sum_{j=1}^n U_{\theta_K}(x^j_*)$
\end{algorithmic}
\end{algorithm}

\paragraph{Empirical estimators and optimization step.} We introduce $N$ independent pairs of  walkers and weights, $\{X^i_k,A_k^i\}_{i=1}^N$, which we evolve independently using~\eqref{eq:X:A:k} for each pair. To evolve $\theta_k$ from some prescribed $\theta_0$ we can then use the empirical version of \eqref{eq:GD:k}:
\begin{equation}
    \label{eq:GD:k:emp}
    \theta_{k+1}=\theta_k + \gamma_k\tilde{\mathcal{D}}_k,
\end{equation}
where $\tilde {\mathcal{D}}_k$ is the estimator for the gradient in $\theta$ of the cross-entropy:
\begin{equation}
    \label{eq:D:def}
    \tilde{\mathcal{D}}_k=\frac{\sum_{i=1}^N\partial_\theta U_{\theta_k}(X^i_k)\exp(A_k^i)}{\sum_{i=1}^N \exp(A^i_k)}-\frac1n\sum_{j=1}^n\partial_\theta U_{\theta_k}(x_*^j),
\end{equation} 
These steps are summarized in Algorithm~\ref{algori}, which is a specific instance of a sequential Monte-Carlo algorithm.
We can also use mini-batches of $\{X^i_k,A_k^i\}_{i=1}^N$ and the data set $\{x_*^j\}_{j=1}^n$ at every iteration (see Algorithm~\ref{algori_mini} in Appendix~\ref{app:minib}), and switch to any optimizer step that uses  $\theta_k$ and $\mathcal{D}_k$ as input to get the updated $\theta_{k+1}$. During the calculation, we can monitor the evolution of the partition function and the cross-entropy using as estimators
\begin{equation}
    \label{eq:estim:Z:H}
    \tilde Z_{\theta_{k}}= Z_{\theta_0} \frac1N \sum_{i=1}^N \exp(A_{k}^i), \qquad \tilde H_k = \log \tilde Z_{\theta_{k}}+ \frac1n \sum_{j=1}^n U_{\theta_K}(x^j_*)
\end{equation}
These steps are summarized in Algorithm~\ref{algori}, which is a specific instance of a sequential Monte-Carlo algorithm. We adapted our routine to mini-batches in Algorithm~\ref{algori_mini} without any explicit additional source of error: in fact, the particles outside the mini-batch have their weights updated too.
The only sources of error in these algorithms come from the finite sample sizes, $N<\infty$ and $n<\infty$. Regarding $n$, we may need to add a regularization term in the loss to avoid overfitting: this is standard. Regarding $N$, we need to make sure that the effective sample size of the walkers remains sufficient during the evolution. This is nontrivial since the $A_k^i$'s will spread away from zero during the optimization, implying that the weights $\exp(A_k^i)$ will become non-uniform, thereby reducing the effective sample size. This is a known issue with sequential Monte-Carlo algorithms that can be alleviated by resampling as discussed next. 

\paragraph{Resampling step.} A standard quantity to monitor the effective sample size~\cite{liu2001monte} is the ratio between the square of the empirical mean of the weights and their empirical variance, i.e. 
\begin{equation}
    \label{eq:resampling}
    \text{ESS}_k = \frac{\big(N^{-1}\sum_{i=1}^N \exp(A_k^i)\big)^2}{N^{-1}\sum_{i=1}^N \exp(2A_k^i)} \in (0,1]
\end{equation}
The effective sample size of the $N$ walkers is $\text{ESS}_k N$.
Initially, since $A_0^i=0$, $\text{ESS}_0=1$, but it decreases with $k$. At each iteration $k_r$ such that $\text{ESS}_{k_r}<c_{k_r}$, where $\{c_k\}_{k\in \N}$ is a set of predefined positive constants in $(0,1)$, we then:
\begin{enumerate}[leftmargin=0.2in]
    \item Resample the walkers $X_{k_r}^i$ using $p^i_{k_r}= e^{A_{k_r}^i}/\sum_{j=1}^N e^{A_{k_r}^j}$ as probability to pick walker~$i$;
    \item Reset $A^i_{k_r}=0$;
    \item Use the update $Z_{\theta_{k}}= Z_{\theta_{k_r}} N^{-1} \sum_{i=1}^N \exp(A_{k}^i)$ for $k\ge k_r$ until the next resampling step.
\end{enumerate}
This resampling is standard~\cite{doucet2001sequential} and can be done with various levels of sophistication, as  discussed in Appendix~\ref{app:resamp}. Other criteria, based e.g. on the entropy of the weights, are also possible, see e.g.~\cite{del2012adaptive}.

\paragraph{Generative modeling.} During the training stage, i.e. as the weights $A_k^i$ evolve, the algorithm produces weighted samples $X_k^i$. At any iteration, however, equal-weight samples can be generated by resampling. Notice that, even if we no longer evolve the model parameters $\theta_k$, the algorithm is such that it removes the bias from ULA coming from $h>0$ -- this bias removal is not perfect, again because $N$ is finite, but this can be controlled by increasing $N$ at the stage when the EBM is used as a generative model.

\section{Numerical experiments}
\label{sec:mumeric}

\subsection{Gaussian Mixtures}
\label{sec:gmm}

In this section, we use a synthetic model to illustrate the advantages of our approach. Specifically, we assume that   the data is drawn from the Gaussian mixture density with two modes given by
\begin{equation}
\label{eq:gmm:1}
    \rho_*(x) = Z_*^{-1} \left(e^{-\frac12|x-a_*|^2}+e^{-\frac12|x-b_*|^2-z_*}\right), \qquad Z_* = (2\pi)^{d/2} \left(1+e^{-z_*}\right)
\end{equation}
where $a_*,b_*\in \R^d$ specify the means of the two modes and $z_*\in \R$ controls their relative weights $p_*=1/(1+e^{-z^*})$ and $q_*= 1-p_*=e^{-z_*}/(1+e^{-z^*})$. The values of $a_*, b_*, z_*$ are carefully chosen such that
the modes are well separated and the energy barrier between the modes is high enough such that jumps of the walkers between the modes are not observed during the simulation with ULA. Consistent with \eqref{eq:gmm:1} we use an EBM with
\begin{equation}
    U_\theta(x) = - \log \left(e^{-\frac12|x-a|^2}+e^{-\frac12|x-b|^2-z}\right),
\end{equation}
where $\theta=(a,b,z)$ are the parameters to be optimized. We choose this model as it allows us to calculate the partition function of the model at any value of the parameters, $Z_\theta=(2\pi)^{d/2} \left(1+e^{-z}\right)$. We use this information as a benchmark to compare the prediction with those produced by our method.

\begin{figure}[t]
  \centering
   \includegraphics[scale=0.25]{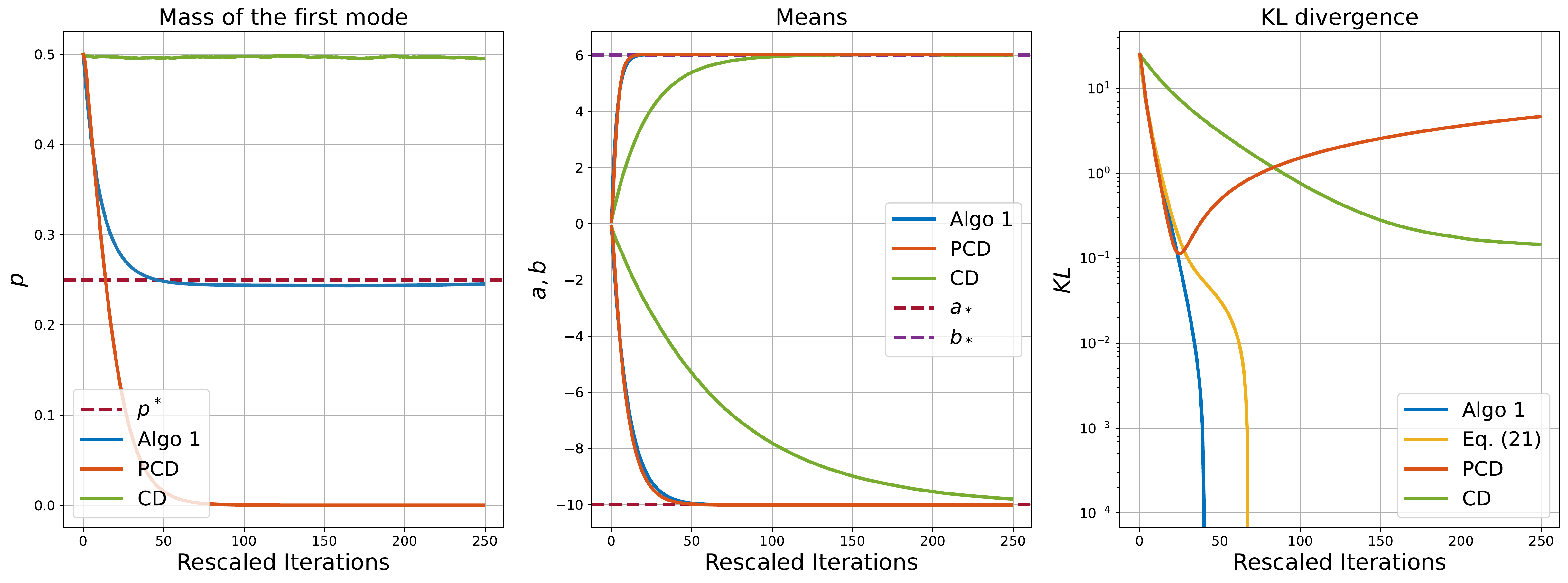}
  \caption{\textit{GMM experiments:} Evolution of the parameters and the cross entropy during training by Algorithm~\ref{algori_mini}, PCD, and CD. Average of $20$ runs. \textit{Left panels:} evolution of $p_k = 1/(1+e^{-z_k})$; \textit{middle panel:} evolution of $a_k$ and $b_k$; \textit{right panel:}   evolution of the Kullback-Leibler divergence. All three methods capture the location of the modes accurately, but only ours get the relative weights of these modes accurately (whereas PCD leads to mode collapse, and CD to an inaccurate estimate). Our method is also the only one that allows for direct estimation of the cross-entropy during training, and the only one performing GD on this cross-entropy--for better visualization we subtract the entropy of the target $H(\rho_*)$ and plot the Kullback-Leibler divergence instead of the cross-entropy.  }
  \label{fig:gmm}
\end{figure}

In our numerical experiments, we set $d=50$, use $N=10^5$ walkers with a mini-batch of $N'=10^4$ and $n=10^5$ data points. We initialize the model at $\theta_0=(a_0,b_0,z_0)$ with $a_0$ and $b_0$ drawn from an $N(0,\epsilon^2I_d)$ with $\epsilon=0.1$ and $z_0 = 0$, meaning that the initial $\rho_{\theta_0}$ is close to the PDF of an $N(0,I_d)$. The training is performed using Algorithm~\ref{algori_mini} with $h=0.1$ and fixed learning rates $\gamma_k = 0.2$ for $a_k$ and $b_k$ and $\gamma_k = 1$ for $z_k$. We perform the resampling step by monitoring $ESS_k$ defined in~\eqref{eq:resampling} with constant $1/c_k=1.05$ and using the systematic method. We also compare our results to those obtained using ULA with these same parameters (which is akin to training with the PCD algorithm) and with those obtained with the CD algorithm: in the latter case, we evolve the walkers by ULA with $h=0.1$ for 4 steps between resets at the data points, and we adjust the learning rates by multiplying them by a factor 10. In all cases, we use the full batches of walkers, weights, and data points to estimate the empirical averages. We also use~\eqref{eq:estim:Z:H} to estimate the cross-entropy $H(\rho_{\theta_k},\rho_*)$ during training by our method (CD and PCD do not provide estimates for these quantities), and in all cases compare the result with the estimate
\begin{equation}
\label{eq:est:ce}
 \tilde H_k = \log\left((2\pi)^{d/2} \left(1+e^{-z_*}\right)\right) - \frac1n   \sum_{j=1}^n \log \left(e^{-\frac12|x_*^j-a_k|^2}+e^{-\frac12|x_*^j-b_k|^2-z_k}\right)
\end{equation}
The results are shown in Figure~\ref{fig:gmm}. As can be seen, all three methods learn well the values of $a_*$ and $b_*$ specifying the positions of the modes. However, only our approach learns the value of $z$ specifying their relative weights.  In contrast, the PCD algorithm leads to mode collapse, consistent with the theoretical explanation given in Appendix~\ref{app:collapse}, and the CD algorithm returns a biased value of $z$, consistent with the fact that it effectively uses the Fisher divergence as the objective. The results also show that the cross-entropy decreases with our approach, but bounces back up  with the PCD algorithm and stalls with the CD algorithms: this is consistent with the fact that only our approach actually performs the  GD on the cross-entropy, which, unlike the other algorithms, our  approach estimates accurately during the training.

\subsection{MNIST}
\label{sec:mnist}

\begin{figure}[t]
  \centering
   \includegraphics[scale=0.45]{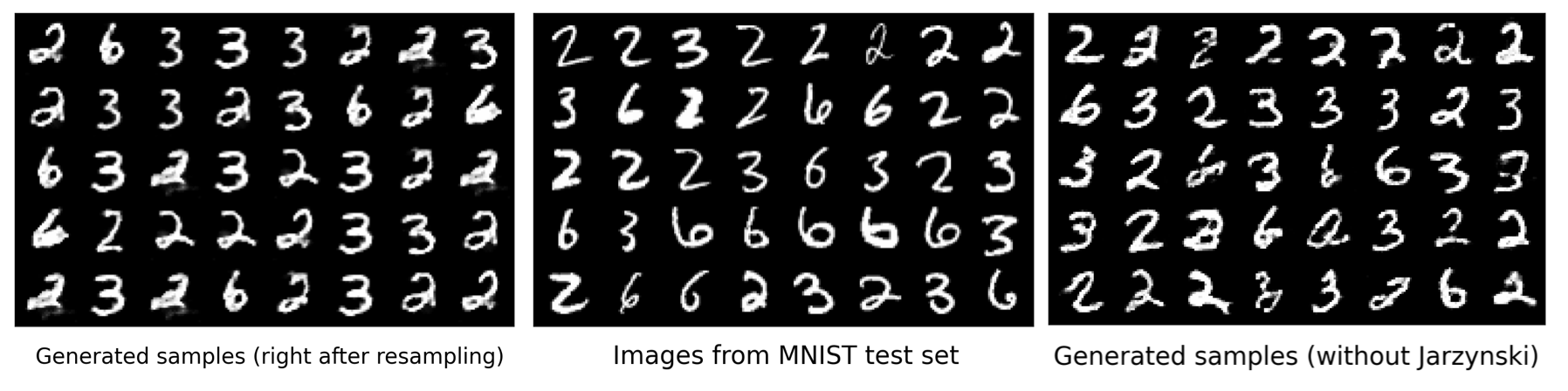}
  \caption{\textit{MNIST:} \textit{Left panel:} Examples of  images generated by our method right after resampling in the last epoch. \textit{Middle panel:} Images randomly selected from the test dataset of MNIST. \textit{Right panel:} Examples of images generated by training using the persistent contrastive divergence (PCD) algorithm. }
  \label{fig:MNIST_compare}
\end{figure}

\begin{figure}[t]
  \centering
   \includegraphics[scale=0.25]{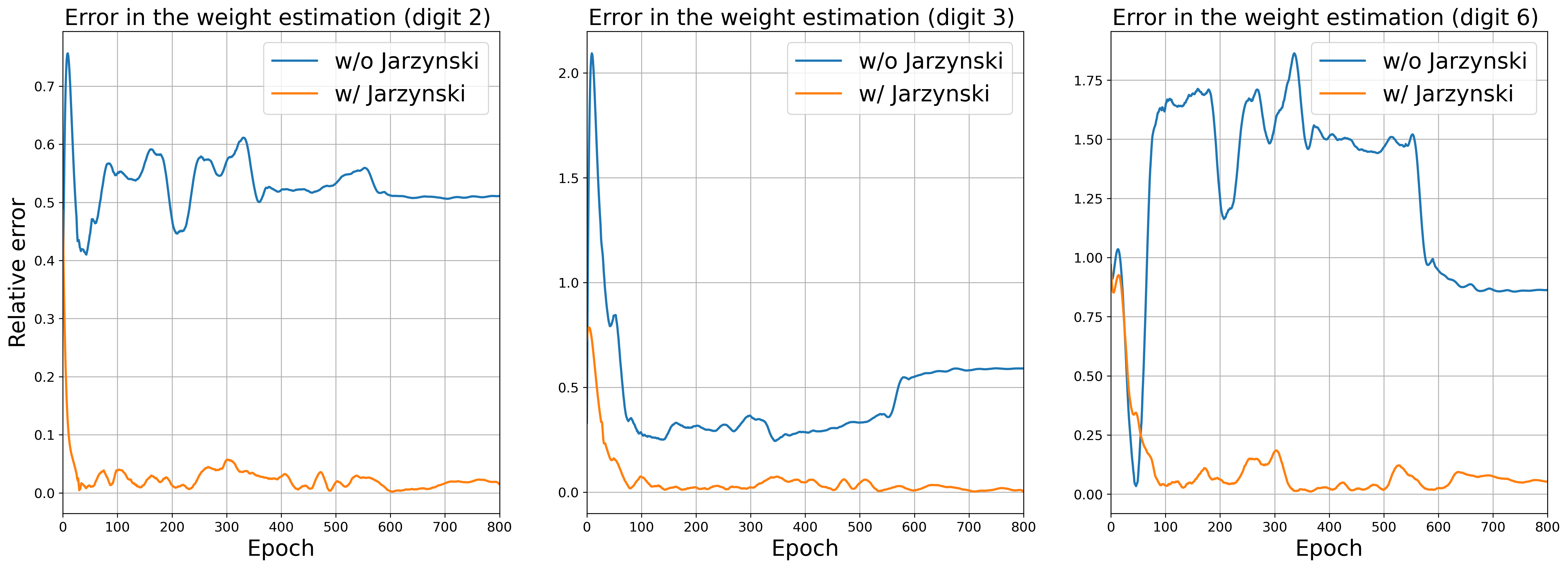}
  \caption{\textit{MNIST:} Relative error of the weight estimation of the three modes (i.e. three digits). Our method outperforms the PCD algorithm in terms of recovery of the weight of each mode. }
  \label{fig:MNIST_weight}
\end{figure}

\begin{figure}[h]
  \centering
   \includegraphics[scale=0.15]{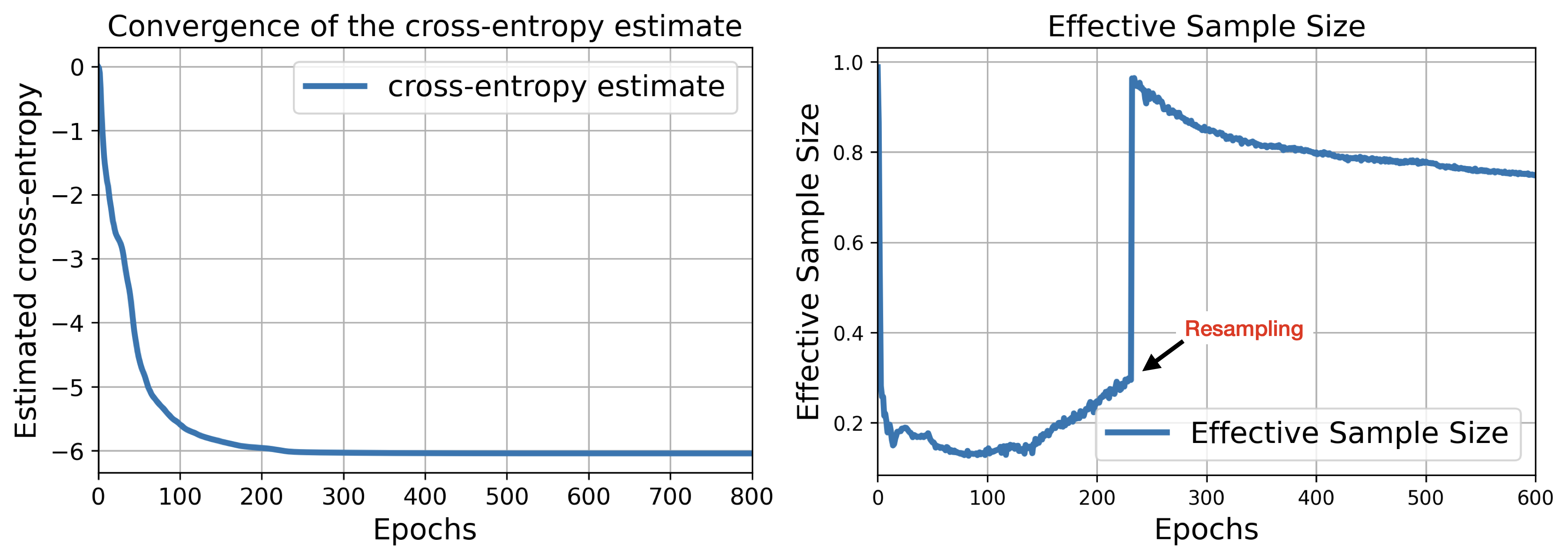}
  \caption{ \textit{MNIST dataset:} \textit{Left Panel:} Convergence of the cross-entropy estimated by using formulae \eqref{eq:estim:Z:H} with the mini-batched algorithm (Algorithm~\ref{algori_mini}). \textit{Right Panel:} Evolution of the effective sample size (ESS) defined in~\eqref{eq:resampling} -- here resampling was started after $240$ epochs (with $c_k=0$ before and $c_k=0.5$ afterwards), and occurred only once immediately after being switched on. }
  \label{fig:MNIST_ESS}
\end{figure}

Next, we perform empirical experiments on the MNIST dataset to answer the following question: when it comes to high-dimensional datasets with multiple modes, can our method produces an EBM that generates high-quality samples and captures the relative weights of the modes accurately? 

To this end, we select a subset of MNIST consisting of only three digits: $2$, $3$, and $6$. Then, we choose $5600$ images of label $2$, $2800$ images of label $3$, and $1400$ images of label $6$ from the training set (for a total of $n=9800$ data points), so that in this manufactured dataset the digits are in have respective weights $4/7$, $2/7$, and $1/7$. 

We train two EBMs to represent this data set, the first using our Algorithm~\ref{algori_mini}, and the second using the PCD Algorithm~\ref{PCD:algori}. We represent the energy using a simple six-layer convolutional neural network with the swish activation and about $77 K$ parameters. We use the ADAM optimizer for the training with a learning rate starting from $10^{-4}$ and linearly decaying to $10^{-10}$ until the final training step. The sample size of the walkers is set to $N=1024$ and it is fixed throughout the training. 

The results are shown in Figure~\ref{fig:MNIST_compare}. Both our Algorithm~\ref{algori} and the PCD Algorithm~\ref{PCD:algori} generate images of reasonable quality; as discussed in Appendix~\ref{sec:sm:mnist}, we observe that the Jarzynski weights can help us track the quality of generated images, and the resampling step is also helpful for improving this quality as well as the learned energy function.

However, the real difference between the methods comes when we look at the relative proportion of the digits generated in comparison to those in the data set.  In Figure~\ref{fig:MNIST_weight}, we show the relative error on the weight estimation of each mode obtained by using a classifier pre-trained on the data set (see Appendix~\ref{sec:sm:mnist} for details). Although the EBM trained with the PCD algorithm can discover all the modes and generate images of all three digits present in the training set, it cannot accurately recover the relative proportions of each digit. In contrast,  our method is successful in both mode exploration and weight recovery. 

Unlike in the numerical experiments done on Gaussian mixtures with teacher-student models, for the MNIST dataset, we cannot estimate the KL divergence throughout the training as we do not know the normalization constant of the true data distribution. Nevertheless, we can still use the formula \eqref{eq:est:ce} to track the cross-entropy between the data distribution and the walker distribution a plot of which is given in the right panel Figure~\ref{fig:MNIST_ESS} for the mini-batched training algorithm (Algorithm~\ref{algori_mini}). 

In these experiments with MNIST, we used the adaptive resampling scheme described after  equation~\ref{eq:resampling}. In practice,  we observed that few resamplings are needed during training, and they can often by avoided altogether if the learning rate is sufficiently small. For example, in the experiment reported in the left panel of Figure~\ref{fig:MNIST_ESS}) a single resampling step was made. We also found empirically that the results are insensitive to the choice of of the parameters $c_{k}$ used for resampling: the rule of thumb we found is to not resample at the beginning of training, to avoid possible mode collapse, and use resampling towards the end, to improve the image quality.

\subsection{CIFAR-10}
\label{sec:cifar}
We perform an empirical evaluation of our method on the full CIFAR-10 (32 $\times$ 32) image dataset. We use the setup of \cite{nijkamp2019learning}, with the same neural architecture with $n_f = 128$ features, and compare the results obtained with our approach with mini-batching (Algorithm \ref{algori_mini}) to those obtained with PCD and PCD with data augmentation of \cite{du2021improved} (which consists of a combination of color distortion, horizontal flip, rescaling, and Gaussian blur augmentations, to help the mixing of the MCMC sampling and stabilize training).

\begin{figure}[H]
  \centering
   \includegraphics[scale=0.27]{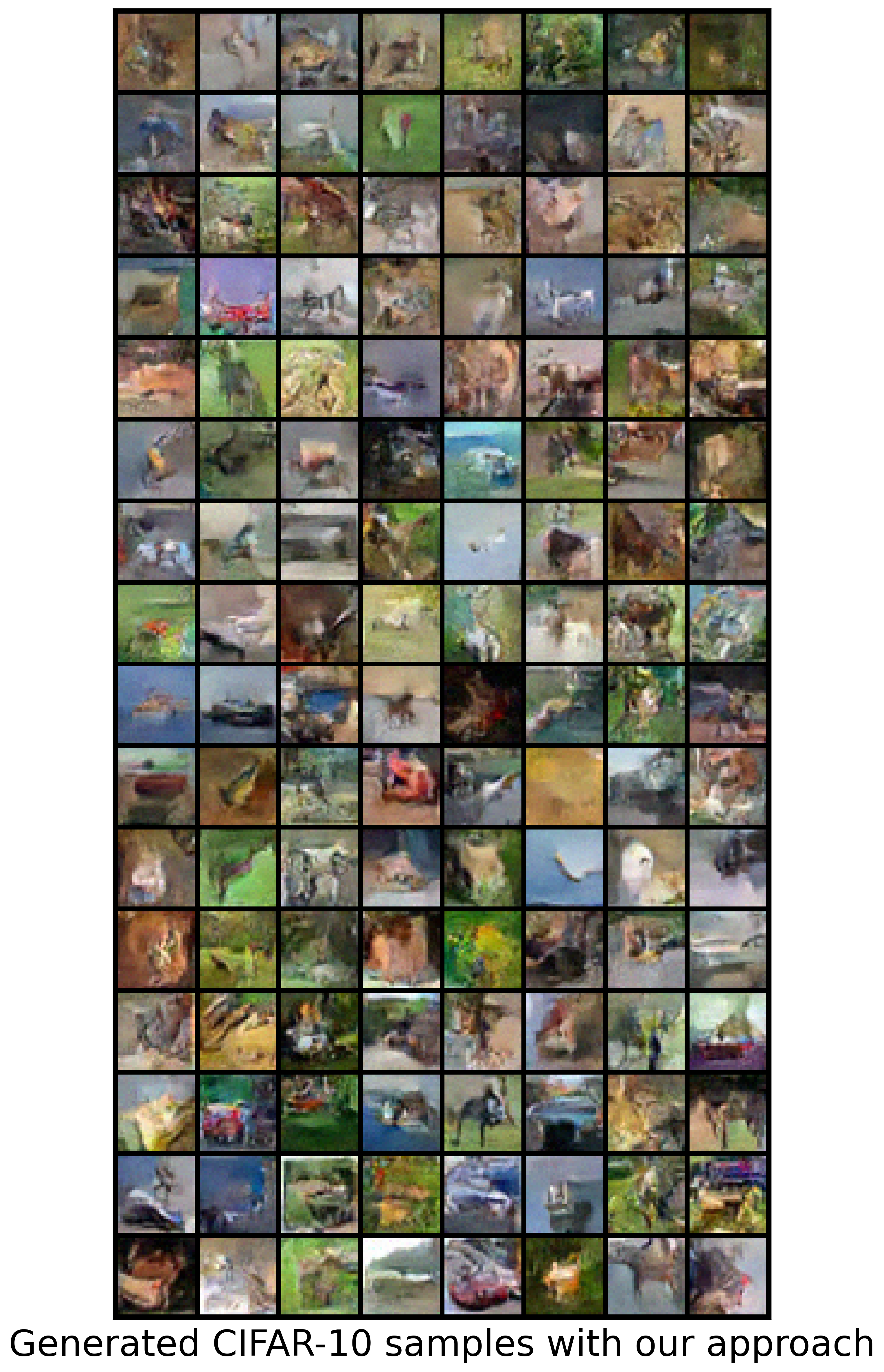}
   \includegraphics[scale=0.27]{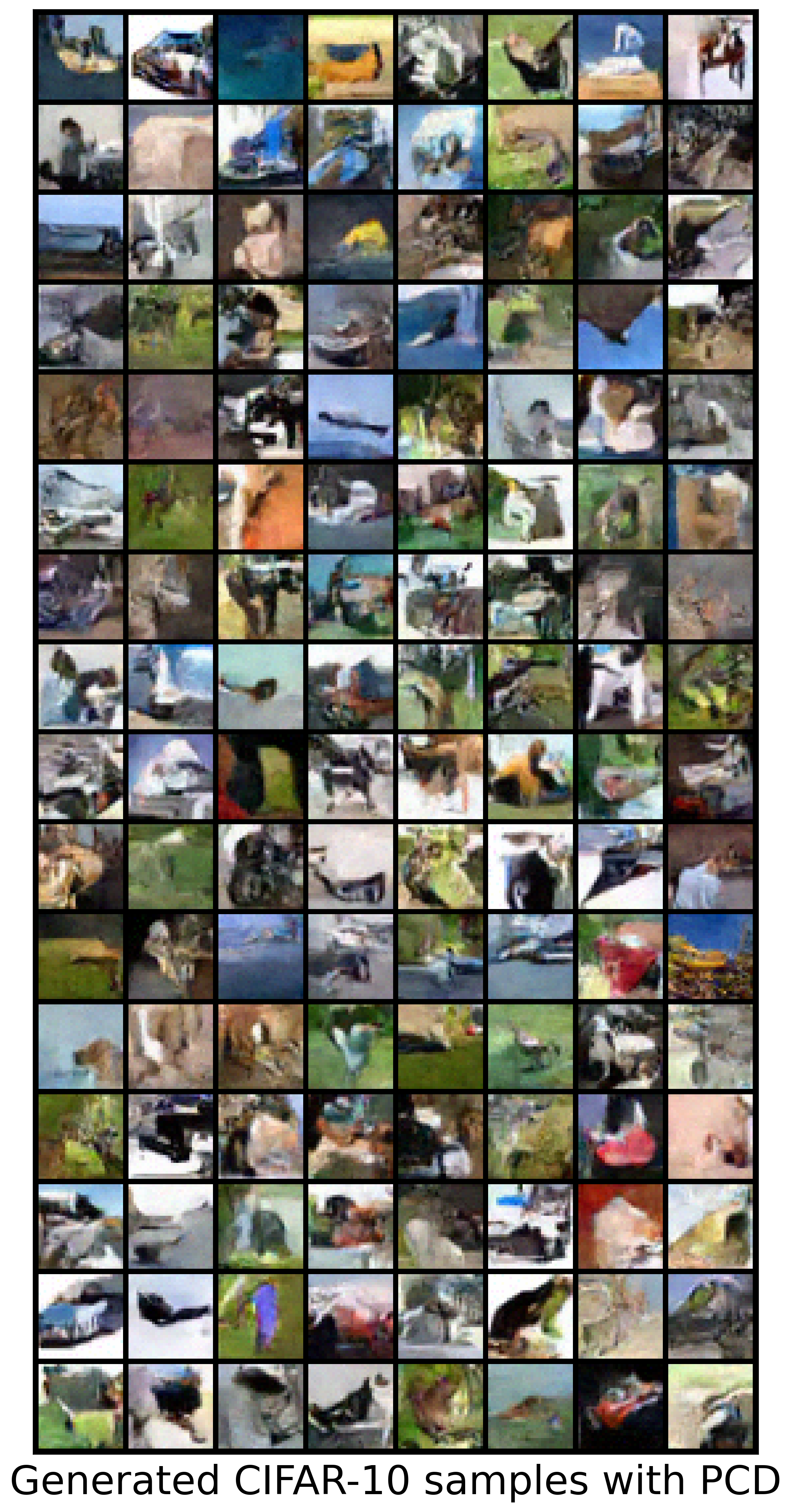}
  \caption{\textit{CIFAR-10 dataset:} \textit{Left panel}: images generated by training with Algorithm~\ref{algori_mini}. \textit{Right panel}: Images generated from the PCD with mini-batches. Up to date images available at \url{https://github.com/submissionx12/EBMs_Jarzynski}. }
  \label{fig:mini_batch_CIFAR}
\end{figure}

The code used to perform these new experiments is available in the anonymized GitHub referenced in our paper. The hyperparameters are the same in all cases: we take $N = 4096$ Langevin walkers with a mini-batch size $N' = 256$. We use the Adam optimizer with learning rate $\eta = 10^{-4}$ and inject a Gaussian noise of standard deviation $\sigma = 3\times 10^{-2}$ to the dataset while performing gradient clipping in Langevin sampling for better performance. All the experiments were performed on a single A100 GPU. Training for 600 epochs took about 34 hours with the PCD algorithm (w/ and w/o data augmentation) and about 36 hours with our method. 

Some of the images  generated by our method are shown  in Figure~\ref{fig:mini_batch_CIFAR}.
We also quantitatively evaluate the performance of our models with the commonly used metrics (e.g. FID and Inception Score): the results are given in Table~\ref{tab:CIFAR10}. They indicate that our method can achieve slightly better performance than the PCD algorithms w/ and w/o data augmentation at a similar computational cost. Furthermore, these results on CIFAR-10 suggest that our method scales well to complicated training tasks on more realistic data sets.

\begin{table}[H]\centering
\resizebox{0.8\columnwidth}{!}{
	\begin{tabular}{lcc}

		\toprule
		\textbf{Method} & \textit{FID} &\textit{Inception Score (IS)}  \\
		\midrule
	    PCD with mini-batches & $38.25$ & $5.96$ \\
	    PCD with mini-batches and data augmentation & $36.43$ & $6.54$ \\
	    Algorithm \ref{algori_mini} with multinomial resampling & $\textbf{32.18}$ & $\textbf{6.88}$ \\
     Algorithm \ref{algori_mini} with systematic resampling & $\textbf{30.24}$ & $\textbf{6.97}$ \\
		\bottomrule \\
	\end{tabular}
	}
 \caption{textit{CIFAR-10 dataset:} Comparison of FID and Inception Score (IS) for PCD and Algorithm \ref{algori_mini}. Experiments performed using the neural architecture in \cite{nijkamp2019learning} to model the energy.}
 \label{tab:CIFAR10}
\end{table}

It is worth stressing that a key component of the success of our method  is the resampling step, as discussed in Appendix~~\ref{app:cifar-10} where we plot the Effective Sample Size (ESS) in Figure~\ref{fig:ESS_cifar}.

\section{Concluding Remarks}
\label{sec:conclu}

In this work, we proposed a simple modification of the persistent contrastive divergence algorithm which corrects its biases and provably allows one to minimize the cross-entropy between the model and the target densities to train EBMs. Our approach rests on results from non-equilibrium thermodynamics (in particular, Jarzynski equality) that show how to take exact expectations over a probability density function (PDF) that is evolving in time---in the present context the PDF is that of the EBM, and it is changing as we train the model parameters. These formulas naturally lend themselves to practical implementation using sequential Monte Carlo sampling methods. The only difference with respect to training methods based on ULA is that the proposed approach maintains a set of weights associated with every walker. These weights correct for the bias induced by the evolution of the energy, and they also give a way to assess the quality of the samples generated by the method. On synthetic examples using Gaussian mixture densities as well as on some simple data sets involving MNIST, we showed that our method dodges some common drawbacks of other EBM training methods. In particular, it is able to learn the relative weights of various high-density regions in a multimodal distribution, thus ensuring a higher level of fairness than common EBM training techniques based on contrastive divergence. 

To focus on the core idea, in the present study we put aside tuning considerations and did not sweep over the various hyperparameters and design choices, like learning rates, number of walkers, or resampling criteria and strategies. These considerations could greatly enhance the performance of our method. We believe that our results show that the method deserves deeper practical investigation on more realistic datasets and with more complex neural architectures.

 \newpage
\section*{Acknowledgements}
 D.C. has worked under the auspices of Italian National Group of Mathematical
Physics (GNFM). M.H. is supported by NYU McCracken doctoral fellowship.  EVE is supported by the National Science Foundation under awards DMR-1420073, DMS-2012510, and DMS-2134216, by the Simons Collaboration on Wave Turbulence,
Grant No. 617006, and by a Vannevar Bush Faculty Fellowship.
 \bibliographystyle{unsrtnat}
 \bibliography{reference}

\begin{thebibliography}{74}
\providecommand{\natexlab}[1]{#1}
\providecommand{\url}[1]{\texttt{#1}}
\expandafter\ifx\csname urlstyle\endcsname\relax
  \providecommand{\doi}[1]{doi: #1}\else
  \providecommand{\doi}{doi: \begingroup \urlstyle{rm}\Url}\fi

\bibitem[Kingma and Welling(2019)]{kingma2019introduction}
Diederik~P Kingma and Max Welling.
\newblock An introduction to variational autoencoders.
\newblock \emph{Foundations and Trends in Machine Learning}, 12\penalty0
  (4):\penalty0 307--392, 2019.

\bibitem[Doersch(2016)]{doersch2016tutorial}
Carl Doersch.
\newblock Tutorial on variational autoencoders.
\newblock \emph{arXiv preprint arXiv:1606.05908}, 2016.

\bibitem[Zhao et~al.(2019)Zhao, Song, and Ermon]{zhao2019infovae}
Shengjia Zhao, Jiaming Song, and Stefano Ermon.
\newblock Infovae: Balancing learning and inference in variational
  autoencoders.
\newblock In \emph{Proceedings of the aaai conference on artificial
  intelligence}, volume~33, pages 5885--5892, 2019.

\bibitem[Ho and Ermon(2016)]{NIPS2016_gan}
Jonathan Ho and Stefano Ermon.
\newblock Generative adversarial imitation learning.
\newblock In D.~Lee, M.~Sugiyama, U.~Luxburg, I.~Guyon, and R.~Garnett,
  editors, \emph{Advances in Neural Information Processing Systems}, volume~29,
  2016.

\bibitem[Goodfellow et~al.(2020)Goodfellow, Pouget-Abadie, Mirza, Xu,
  Warde-Farley, Ozair, Courville, and Bengio]{goodfellow2020generative}
Ian Goodfellow, Jean Pouget-Abadie, Mehdi Mirza, Bing Xu, David Warde-Farley,
  Sherjil Ozair, Aaron Courville, and Yoshua Bengio.
\newblock Generative adversarial networks.
\newblock \emph{Communications of the ACM}, 63\penalty0 (11):\penalty0
  139--144, 2020.

\bibitem[Tabak and Vanden-Eijnden(2010)]{tabak2010density}
Esteban~G. Tabak and Eric Vanden-Eijnden.
\newblock {Density estimation by dual ascent of the log-likelihood}.
\newblock \emph{Communications in Mathematical Sciences}, 8\penalty0
  (1):\penalty0 217 -- 233, 2010.

\bibitem[Tabak and Turner(2013)]{tabak2013family}
E.~G. Tabak and Cristina~V. Turner.
\newblock A family of nonparametric density estimation algorithms.
\newblock \emph{Communications on Pure and Applied Mathematics}, 66\penalty0
  (2):\penalty0 145--164, 2013.

\bibitem[Rezende and Mohamed(2015)]{rezende2015variational}
Danilo Rezende and Shakir Mohamed.
\newblock Variational inference with normalizing flows.
\newblock In \emph{International conference on machine learning}, pages
  1530--1538. PMLR, 2015.

\bibitem[Papamakarios et~al.(2021)Papamakarios, Nalisnick, Rezende, Mohamed,
  and Lakshminarayanan]{papamakarios2021normalizing}
George Papamakarios, Eric Nalisnick, Danilo~Jimenez Rezende, Shakir Mohamed,
  and Balaji Lakshminarayanan.
\newblock Normalizing flows for probabilistic modeling and inference.
\newblock \emph{The Journal of Machine Learning Research}, 22\penalty0
  (1):\penalty0 2617--2680, 2021.

\bibitem[Mathieu and Nickel(2020)]{mathieu2020riemannian}
Emile Mathieu and Maximilian Nickel.
\newblock Riemannian continuous normalizing flows.
\newblock \emph{Advances in Neural Information Processing Systems},
  33:\penalty0 2503--2515, 2020.

\bibitem[Song et~al.(2021{\natexlab{a}})Song, Sohl-Dickstein, Kingma, Kumar,
  Ermon, and Poole]{song2020score}
Yang Song, Jascha Sohl-Dickstein, Diederik~P Kingma, Abhishek Kumar, Stefano
  Ermon, and Ben Poole.
\newblock Score-based generative modeling through stochastic differential
  equations.
\newblock In \emph{International Conference on Learning Representations},
  2021{\natexlab{a}}.

\bibitem[Ho et~al.(2020)Ho, Jain, and Abbeel]{2020_ho_denoising}
Jonathan Ho, Ajay Jain, and Pieter Abbeel.
\newblock Denoising diffusion probabilistic models.
\newblock In \emph{Advances in Neural Information Processing Systems},
  volume~33, pages 6840--6851, 2020.

\bibitem[Song et~al.(2021{\natexlab{b}})Song, Durkan, Murray, and
  Ermon]{song2021maximum}
Yang Song, Conor Durkan, Iain Murray, and Stefano Ermon.
\newblock Maximum likelihood training of score-based diffusion models.
\newblock \emph{Advances in Neural Information Processing Systems},
  34:\penalty0 1415--1428, 2021{\natexlab{b}}.

\bibitem[Hinton(2012)]{hinton2012practical}
Geoffrey~E Hinton.
\newblock A practical guide to training restricted boltzmann machines.
\newblock \emph{Neural Networks: Tricks of the Trade: Second Edition}, pages
  599--619, 2012.

\bibitem[Salakhutdinov and Larochelle(2010)]{salakhutdinov2010efficient}
Ruslan Salakhutdinov and Hugo Larochelle.
\newblock Efficient learning of deep boltzmann machines.
\newblock In \emph{Proceedings of the thirteenth international conference on
  artificial intelligence and statistics}, pages 693--700. JMLR Workshop and
  Conference Proceedings, 2010.

\bibitem[Schulz et~al.(2010)Schulz, M{\"u}ller, Behnke,
  et~al.]{schulz2010investigating}
Hannes Schulz, Andreas M{\"u}ller, Sven Behnke, et~al.
\newblock Investigating convergence of restricted boltzmann machine learning.
\newblock In \emph{NIPS 2010 Workshop on Deep Learning and Unsupervised Feature
  Learning}, volume~1, pages 6--1, 2010.

\bibitem[LeCun et~al.(2007)LeCun, Chopra, and Hadsell]{lecun2006tutorial}
Yann LeCun, Sumit Chopra, and Raia Hadsell.
\newblock A tutorial on energy-based learning.
\newblock In G{\"o}khan BakIr, Thomas Hofmann, Alexander~J Smola, Bernhard
  Sch{\"o}lkopf, and Ben Taskar, editors, \emph{Predicting structured data},
  chapter~10. MIT press, 2007.

\bibitem[Gutmann and Hyv{\"a}rinen(2010)]{gutmann2010noise}
Michael Gutmann and Aapo Hyv{\"a}rinen.
\newblock Noise-contrastive estimation: A new estimation principle for
  unnormalized statistical models.
\newblock In \emph{Proceedings of the thirteenth international conference on
  artificial intelligence and statistics}, pages 297--304. JMLR Workshop and
  Conference Proceedings, 2010.

\bibitem[Song et~al.(2020)Song, Garg, Shi, and Ermon]{song2020sliced}
Yang Song, Sahaj Garg, Jiaxin Shi, and Stefano Ermon.
\newblock Sliced score matching: A scalable approach to density and score
  estimation.
\newblock In \emph{Uncertainty in Artificial Intelligence}, pages 574--584.
  PMLR, 2020.

\bibitem[Boltzmann(1970)]{boltzmann1970weitere}
Ludwig Boltzmann.
\newblock Weitere studien {\"u}ber das w{\"a}rmegleichgewicht unter
  gasmolek{\"u}len.
\newblock \emph{Kinetische Theorie II}, pages 115--225, 1970.

\bibitem[Gibbs(1902)]{gibbs1902elementary}
Josiah~Willard Gibbs.
\newblock \emph{Elementary principles in statistical mechanics: developed with
  especial reference to the rational foundations of thermodynamics}.
\newblock C. Scribner's sons, 1902.

\bibitem[Lifshitz and Pitaevskii(2013)]{lifshitz}
Evgenii~Mikhailovich Lifshitz and Lev~Petrovich Pitaevskii.
\newblock \emph{Statistical physics: theory of the condensed state}, volume~9.
\newblock Elsevier, 2013.

\bibitem[Mehrabi et~al.(2021)Mehrabi, Morstatter, Saxena, Lerman, and
  Galstyan]{mehrabi2021survey}
Ninareh Mehrabi, Fred Morstatter, Nripsuta Saxena, Kristina Lerman, and Aram
  Galstyan.
\newblock A survey on bias and fairness in machine learning.
\newblock \emph{ACM Computing Surveys (CSUR)}, 54\penalty0 (6):\penalty0 1--35,
  2021.

\bibitem[Hyv{\"a}rinen and Dayan(2005)]{hyvarinen2005estimation}
Aapo Hyv{\"a}rinen and Peter Dayan.
\newblock Estimation of non-normalized statistical models by score matching.
\newblock \emph{Journal of Machine Learning Research}, 6\penalty0 (4), 2005.

\bibitem[Hinton(2002)]{hinton2002training}
Geoffrey~E Hinton.
\newblock Training products of experts by minimizing contrastive divergence.
\newblock \emph{Neural computation}, 14\penalty0 (8):\penalty0 1771--1800,
  2002.

\bibitem[Welling and Hinton(2002)]{welling2002new}
Max Welling and Geoffrey~E Hinton.
\newblock A new learning algorithm for mean field boltzmann machines.
\newblock In \emph{International conference on artificial neural networks},
  pages 351--357, 2002.

\bibitem[Carreira-Perpinan and Hinton(2005)]{carreira2005contrastive}
Miguel~A Carreira-Perpinan and Geoffrey Hinton.
\newblock On contrastive divergence learning.
\newblock In \emph{International workshop on artificial intelligence and
  statistics}, pages 33--40. PMLR, 2005.

\bibitem[Hyvarinen(2007)]{hyvarinen2007connections}
Aapo Hyvarinen.
\newblock Connections between score matching, contrastive divergence, and
  pseudolikelihood for continuous-valued variables.
\newblock \emph{IEEE Transactions on neural networks}, 18\penalty0
  (5):\penalty0 1529--1531, 2007.

\bibitem[Tieleman(2008)]{tieleman2008training}
Tijmen Tieleman.
\newblock Training restricted boltzmann machines using approximations to the
  likelihood gradient.
\newblock In \emph{International conference on Machine learning}, pages
  1064--1071, 2008.

\bibitem[Jacob et~al.(2020)Jacob, O’Leary, and
  Atchad{\'e}]{jacob2017unbiased}
Pierre~E Jacob, John O’Leary, and Yves~F Atchad{\'e}.
\newblock Unbiased markov chain monte carlo methods with couplings.
\newblock \emph{Journal of the Royal Statistical Society: Series B (Statistical
  Methodology)}, 82\penalty0 (3), 2020.

\bibitem[Qiu et~al.(2020)Qiu, Zhang, and Wang]{qiu2020unbiased}
Yixuan Qiu, Lingsong Zhang, and Xiao Wang.
\newblock Unbiased contrastive divergence algorithm for training energy-based
  latent variable models.
\newblock In \emph{International Conference on Learning Representations}, 2020.

\bibitem[Du and Mordatch(2019)]{du2019implicit}
Yilun Du and Igor Mordatch.
\newblock Implicit generation and modeling with energy based models.
\newblock In \emph{Advances in Neural Information Processing Systems}, 2019.

\bibitem[Du et~al.(2021)Du, Li, Tenenbaum, and Mordatch]{du2021improved}
Yilun Du, Shuang Li, Joshua Tenenbaum, and Igor Mordatch.
\newblock Improved contrastive divergence training of energy-based models.
\newblock In \emph{International Conference on Machine Learning}, pages
  2837--2848. PMLR, 2021.

\bibitem[Jarzynski(1997)]{jarzynski1997nonequilibrium}
C~Jarzynski.
\newblock Nonequilibrium equality for free energy differences.
\newblock \emph{Physical Review Letters}, 78\penalty0 (14):\penalty0 2690,
  1997.

\bibitem[Doucet et~al.(2001)Doucet, De~Freitas, Gordon,
  et~al.]{doucet2001sequential}
Arnaud Doucet, Nando De~Freitas, Neil~James Gordon, et~al.
\newblock \emph{Sequential Monte Carlo methods in practice}, volume~1.
\newblock Springer, 2001.

\bibitem[Song and Kingma(2021)]{song2021train}
Yang Song and Diederik~P Kingma.
\newblock How to train your energy-based models.
\newblock \emph{arXiv preprint arXiv:2101.03288}, 2021.

\bibitem[Xie et~al.(2016)Xie, Lu, Zhu, and Wu]{xie2016theory}
Jianwen Xie, Yang Lu, Song-Chun Zhu, and Yingnian Wu.
\newblock A theory of generative convnet.
\newblock In \emph{International Conference on Machine Learning}, pages
  2635--2644. PMLR, 2016.

\bibitem[Grathwohl et~al.(2019)Grathwohl, Wang, Jacobsen, Duvenaud, Norouzi,
  and Swersky]{grathwohl2019your}
Will Grathwohl, Kuan-Chieh Wang, Joern-Henrik Jacobsen, David Duvenaud,
  Mohammad Norouzi, and Kevin Swersky.
\newblock Your classifier is secretly an energy based model and you should
  treat it like one.
\newblock In \emph{International Conference on Learning Representations}, 2019.

\bibitem[Brooks et~al.(2011)Brooks, Gelman, Jones, and
  Meng]{brooks2011handbook}
Steve Brooks, Andrew Gelman, Galin Jones, and Xiao-Li Meng.
\newblock \emph{Handbook of Markov chain Monte Carlo}.
\newblock CRC press, 2011.

\bibitem[Liu and Liu(2001)]{liu2001monte}
Jun~S Liu and Jun~S Liu.
\newblock \emph{Monte Carlo strategies in scientific computing}, volume~75.
\newblock Springer, 2001.

\bibitem[Vincent(2011)]{vincent2011connection}
Pascal Vincent.
\newblock A connection between score matching and denoising autoencoders.
\newblock \emph{Neural computation}, 23\penalty0 (7):\penalty0 1661--1674,
  2011.

\bibitem[Swersky et~al.(2011)Swersky, Ranzato, Buchman, Freitas, and
  Marlin]{swersky2011autoencoders}
Kevin Swersky, Marc'Aurelio Ranzato, David Buchman, Nando~D Freitas, and
  Benjamin~M Marlin.
\newblock On autoencoders and score matching for energy based models.
\newblock In \emph{International conference on machine learning (ICML-11)},
  pages 1201--1208, 2011.

\bibitem[Wenliang(2022)]{wenliang2022failure}
Li~Kevin Wenliang.
\newblock On the failure of variational score matching for {VAE} models.
\newblock \emph{arXiv preprint arXiv:2210.13390}, 2022.

\bibitem[Wenliang et~al.(2019)Wenliang, Sutherland, Strathmann, and
  Gretton]{wenliang2019learning}
Li~Wenliang, Danica~J Sutherland, Heiko Strathmann, and Arthur Gretton.
\newblock Learning deep kernels for exponential family densities.
\newblock In \emph{International Conference on Machine Learning}, 2019.

\bibitem[Song and Ermon(2019)]{song2019generative}
Yang Song and Stefano Ermon.
\newblock Generative modeling by estimating gradients of the data distribution.
\newblock In \emph{Advances in neural information processing systems},
  volume~32, 2019.

\bibitem[Xie et~al.(2018)Xie, Lu, Gao, Zhu, and Wu]{xie2018cooperative}
Jianwen Xie, Yang Lu, Ruiqi Gao, Song-Chun Zhu, and Ying~Nian Wu.
\newblock Cooperative training of descriptor and generator networks.
\newblock \emph{IEEE transactions on pattern analysis and machine
  intelligence}, 42\penalty0 (1):\penalty0 27--45, 2018.

\bibitem[Nijkamp et~al.(2019)Nijkamp, Hill, Zhu, and Wu]{nijkamp2019learning}
Erik Nijkamp, Mitch Hill, Song-Chun Zhu, and Ying~Nian Wu.
\newblock Learning non-convergent non-persistent short-run {MCMC} toward
  energy-based model.
\newblock In \emph{Advances in Neural Information Processing Systems},
  volume~32, 2019.

\bibitem[Gao et~al.(2020)Gao, Nijkamp, Kingma, Xu, Dai, and Wu]{gao2020flow}
Ruiqi Gao, Erik Nijkamp, Diederik~P Kingma, Zhen Xu, Andrew~M Dai, and
  Ying~Nian Wu.
\newblock Flow contrastive estimation of energy-based models.
\newblock In \emph{Proceedings of the IEEE/CVF Conference on Computer Vision
  and Pattern Recognition}, pages 7518--7528, 2020.

\bibitem[Xiao et~al.(2020)Xiao, Kreis, Kautz, and Vahdat]{xiao2020vaebm}
Zhisheng Xiao, Karsten Kreis, Jan Kautz, and Arash Vahdat.
\newblock Vaebm: A symbiosis between variational autoencoders and energy-based
  models.
\newblock In \emph{International Conference on Learning Representations}, 2020.

\bibitem[Xie et~al.(2021{\natexlab{a}})Xie, Zheng, and Li]{xie2021learning}
Jianwen Xie, Zilong Zheng, and Ping Li.
\newblock Learning energy-based model with variational auto-encoder as
  amortized sampler.
\newblock In \emph{Proceedings of the AAAI Conference on Artificial
  Intelligence}, volume~35, pages 10441--10451, 2021{\natexlab{a}}.

\bibitem[Xie et~al.(2021{\natexlab{b}})Xie, Zhu, Li, and Li]{xie2021tale}
Jianwen Xie, Yaxuan Zhu, Jun Li, and Ping Li.
\newblock A tale of two flows: Cooperative learning of langevin flow and
  normalizing flow toward energy-based model.
\newblock In \emph{International Conference on Learning Representations},
  2021{\natexlab{b}}.

\bibitem[Lee et~al.(2022)Lee, Jeong, Park, and Shin]{lee2022guiding}
Hankook Lee, Jongheon Jeong, Sejun Park, and Jinwoo Shin.
\newblock Guiding energy-based models via contrastive latent variables.
\newblock In \emph{The Eleventh International Conference on Learning
  Representations}, 2022.

\bibitem[Yair and Michaeli(2021)]{yair2020contrastive}
Omer Yair and Tomer Michaeli.
\newblock Contrastive divergence learning is a time reversal adversarial game.
\newblock In \emph{International Conference on Learning Representations}, 2021.

\bibitem[Domingo-Enrich et~al.(2021{\natexlab{a}})Domingo-Enrich, Bietti,
  Vanden-Eijnden, and Bruna]{pmlr-v139-domingo-enrich21a}
Carles Domingo-Enrich, Alberto Bietti, Eric Vanden-Eijnden, and Joan Bruna.
\newblock On energy-based models with overparametrized shallow neural networks.
\newblock In \emph{International Conference on Machine Learning}, pages
  2771--2782, 2021{\natexlab{a}}.

\bibitem[Domingo-Enrich et~al.(2021{\natexlab{b}})Domingo-Enrich, Bietti,
  Gabri{\'e}, Bruna, and Vanden-Eijnden]{domingoenrich2022dual}
Carles Domingo-Enrich, Alberto Bietti, Marylou Gabri{\'e}, Joan Bruna, and Eric
  Vanden-Eijnden.
\newblock Dual training of energy-based models with overparametrized shallow
  neural networks.
\newblock \emph{arXiv preprint arXiv:2107.05134}, 2021{\natexlab{b}}.

\bibitem[Neal(2001)]{neal2001annealed}
Radford~M Neal.
\newblock Annealed importance sampling.
\newblock \emph{Statistics and computing}, 11:\penalty0 125--139, 2001.

\bibitem[Le et~al.(2018)Le, Igl, Rainforth, Jin, and Wood]{anh2018autoencoding}
Tuan~Anh Le, Maximilian Igl, Tom Rainforth, Tom Jin, and Frank Wood.
\newblock Auto-encoding sequential monte carlo.
\newblock In \emph{International Conference on Learning Representations}, 2018.
\newblock URL \url{https://openreview.net/forum?id=BJ8c3f-0b}.

\bibitem[Ding and Freedman(2020)]{ding2020learning}
Xinqiang Ding and David~J. Freedman.
\newblock Learning deep generative models with annealed importance sampling,
  2020.

\bibitem[Midgley et~al.(2023)Midgley, Stimper, Simm, Sch{\"o}lkopf, and
  Hern{\'a}ndez-Lobato]{midgley2023flow}
Laurence~Illing Midgley, Vincent Stimper, Gregor N.~C. Simm, Bernhard
  Sch{\"o}lkopf, and Jos{\'e}~Miguel Hern{\'a}ndez-Lobato.
\newblock Flow annealed importance sampling bootstrap.
\newblock In \emph{The Eleventh International Conference on Learning
  Representations}, 2023.
\newblock URL \url{https://openreview.net/forum?id=XCTVFJwS9LJ}.

\bibitem[Du et~al.(2023)Du, Durkan, Strudel, Tenenbaum, Dieleman, Fergus,
  Sohl-Dickstein, Doucet, and Grathwohl]{du2023reduce}
Yilun Du, Conor Durkan, Robin Strudel, Joshua~B Tenenbaum, Sander Dieleman, Rob
  Fergus, Jascha Sohl-Dickstein, Arnaud Doucet, and Will~Sussman Grathwohl.
\newblock Reduce, reuse, recycle: Compositional generation with energy-based
  diffusion models and mcmc.
\newblock In \emph{International Conference on Machine Learning}, pages
  8489--8510. PMLR, 2023.

\bibitem[Sohl-Dickstein et~al.(2015)Sohl-Dickstein, Weiss, Maheswaranathan, and
  Ganguli]{pmlr-v37-sohl-dickstein15}
Jascha Sohl-Dickstein, Eric Weiss, Niru Maheswaranathan, and Surya Ganguli.
\newblock Deep unsupervised learning using nonequilibrium thermodynamics.
\newblock In Francis Bach and David Blei, editors, \emph{Proceedings of the
  32nd International Conference on Machine Learning}, volume~37 of
  \emph{Proceedings of Machine Learning Research}, pages 2256--2265, Lille,
  France, 07--09 Jul 2015. PMLR.
\newblock URL \url{https://proceedings.mlr.press/v37/sohl-dickstein15.html}.

\bibitem[Doucet et~al.(2022)Doucet, Grathwohl, Matthews, and
  Strathmann]{doucet2022scorebased}
Arnaud Doucet, Will~Sussman Grathwohl, Alexander G. D.~G. Matthews, and Heiko
  Strathmann.
\newblock Score-based diffusion meets annealed importance sampling.
\newblock In \emph{Advances in Neural Information Processing Systems}, 2022.

\bibitem[Oksendal(2003)]{oksendal2003stochastic}
Bernt Oksendal.
\newblock \emph{Stochastic Differential Equations}.
\newblock Springer-Verlag Berlin Heidelberg, 6 edition, 2003.

\bibitem[Mattingly et~al.(2002)Mattingly, Stuart, and
  Higham]{mattingly2002stochastic}
J.~C. Mattingly, A.~M. Stuart, and D.~J. Higham.
\newblock {Ergodicity for SDEs and approximations: locally Lipschitz vector
  fields and degenerate noise}.
\newblock \emph{Stochastic Processes and their Applications}, 101\penalty0
  (2):\penalty0 185--232, October 2002.

\bibitem[Talay and Tubaro(1990)]{talay1900expansion}
Denis Talay and Luciano Tubaro.
\newblock Expansion of the global error for numerical schemes solving
  stochastic differential equations.
\newblock \emph{Stochastic Analysis and Applications}, 8\penalty0 (4):\penalty0
  483--509, 1990.

\bibitem[Roberts and Tweedie(1996)]{roberts1996exponential}
Gareth~O. Roberts and Richard~L. Tweedie.
\newblock Exponential convergence of langevin distributions and their discrete
  approximations.
\newblock \emph{Bernoulli}, 2\penalty0 (4):\penalty0 341--363, 1996.
\newblock ISSN 13507265.
\newblock URL \url{http://www.jstor.org/stable/3318418}.

\bibitem[Roberts and Rosenthal(1998)]{roberts1998optimal}
Gareth~O Roberts and Jeffrey~S Rosenthal.
\newblock Optimal scaling of discrete approximations to langevin diffusions.
\newblock \emph{Journal of the Royal Statistical Society: Series B (Statistical
  Methodology)}, 60\penalty0 (1):\penalty0 255--268, 1998.

\bibitem[Moral et~al.(2012)Moral, Doucet, and Jasra]{del2012adaptive}
Pierre~Del Moral, Arnaud Doucet, and Ajay Jasra.
\newblock {On adaptive resampling strategies for sequential Monte Carlo
  methods}.
\newblock \emph{Bernoulli}, 18\penalty0 (1):\penalty0 252 -- 278, 2012.

\bibitem[Evans(2022)]{evans2022partial}
Lawrence~C Evans.
\newblock \emph{Partial differential equations}, volume~19.
\newblock American Mathematical Society, 2022.

\bibitem[Gordon et~al.(1993)Gordon, Salmond, and Smith]{gordon1993novel}
Neil~J Gordon, David~J Salmond, and Adrian~FM Smith.
\newblock Novel approach to nonlinear/non-gaussian bayesian state estimation.
\newblock In \emph{IEE proceedings F (radar and signal processing)}, volume
  140, pages 107--113. IET, 1993.

\bibitem[Kitagawa(1996)]{kitagawa1996monte}
Genshiro Kitagawa.
\newblock Monte carlo filter and smoother for non-gaussian nonlinear state
  space models.
\newblock \emph{Journal of computational and graphical statistics}, 5\penalty0
  (1):\penalty0 1--25, 1996.

\bibitem[Carpenter et~al.(1999)Carpenter, Clifford, and
  Fearnhead]{carpenter1999improved}
James Carpenter, Peter Clifford, and Paul Fearnhead.
\newblock Improved particle filter for nonlinear problems.
\newblock \emph{IEE Proceedings-Radar, Sonar and Navigation}, 146\penalty0
  (1):\penalty0 2--7, 1999.

\bibitem[Li et~al.(2015)Li, Bolic, and Djuric]{li2015resampling}
Tiancheng Li, Miodrag Bolic, and Petar~M Djuric.
\newblock Resampling methods for particle filtering: classification,
  implementation, and strategies.
\newblock \emph{IEEE Signal processing magazine}, 32\penalty0 (3):\penalty0
  70--86, 2015.

\bibitem[Albergo et~al.(2023)Albergo, Boffi, and
  Vanden-Eijnden]{albergo2023stochastic}
Michael~S Albergo, Nicholas~M Boffi, and Eric Vanden-Eijnden.
\newblock Stochastic interpolants: A unifying framework for flows and
  diffusions.
\newblock \emph{arXiv preprint arXiv:2303.08797}, 2023.

\end{thebibliography}

 \newpage
 \appendix

\section{Additional theoretical results}
\label{sec:sm:theo}

\subsection{Jarzynski equality in continuous-time}
\label{app:continuous}

Before proving Proposition~\ref{prop:3}, we give a continuous-time version of this result whose proof helps guide the intuition. 
\begin{prop}
\label{prop:2}
Assume that the parameters $\theta$ are evolved according to some time-differentiable protocol $\theta(t)$ such that $\theta(0)=\theta_0$. Given any $\alpha>0$, let $X_t\in \R^d$ and $A_t\in\R$ be the solutions of
\begin{equation}
\label{eq:X:A:t}
\left\{ 
\begin{aligned}
    dX_t&=-\alpha \nabla U_{\theta(t)}(X_t) dt+\sqrt{2\alpha}\,dW_t,\quad\quad &&X_0\sim \rho_{\theta_0},\\
    \dot A_t&=-\partial_\theta{U}_{\theta(t)}(X_t)\cdot \dot \theta(t),   &&A_0=0.
\end{aligned}\right.  
\end{equation}
where $U_{\theta}(x)$ is the model energy and $W_t\in\R^d$ is a standard Wiener process.
Then, for any $t\ge0$,
\begin{equation}
\label{eq:grad:Z:t}
    \E_{\theta(t)} [\partial_\theta U_{\theta(t)}]=\frac{\E[  \partial_\theta U_{\theta(t)}(X_t)e^{A_t}]}{\E [ e^{A_t}]}, \qquad Z_{\theta(t)} = Z_{\theta_0} \E [ e^{A_t}],
\end{equation}
where the expectations on the right-hand side are over the law of the joint process $(X_t,A_t)$.
\end{prop}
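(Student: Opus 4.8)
The plan is to identify the law of $(X_t,A_t)$ through the \emph{weighted occupation measure} $\mu_t$ defined by $\int_{\R^d} g\,d\mu_t = \E[g(X_t)e^{A_t}]$ for test functions $g$, and to show that $\mu_t$ has density $f_t(x) = Z_{\theta_0}^{-1}e^{-U_{\theta(t)}(x)} = (Z_{\theta(t)}/Z_{\theta_0})\,\rho_{\theta(t)}(x)$. Once this is established, both claims in~\eqref{eq:grad:Z:t} follow at once: taking $g\equiv 1$ gives $\E[e^{A_t}] = \int_{\R^d} f_t = Z_{\theta(t)}/Z_{\theta_0}$, which is the partition-function identity, and taking $g = \partial_\theta U_{\theta(t)}$ and dividing by $\E[e^{A_t}]$ gives $\E[\partial_\theta U_{\theta(t)}(X_t)e^{A_t}]/\E[e^{A_t}] = \int_{\R^d} \partial_\theta U_{\theta(t)}\,\rho_{\theta(t)} = \E_{\theta(t)}[\partial_\theta U_{\theta(t)}]$.

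To derive an evolution equation for $\mu_t$, I would apply It\^o's formula to $g(X_t)$, combine it with the pathwise ODE $\dot A_t = -\partial_\theta U_{\theta(t)}(X_t)\cdot\dot\theta(t)$ via the product rule (there is no cross-variation term since $A_t$ has finite variation), obtaining
\[
 d\big(g(X_t)e^{A_t}\big) = e^{A_t}\big(\mathcal{L}_t g\big)(X_t)\,dt + \sqrt{2\alpha}\,e^{A_t}\,\nabla g(X_t)\cdot dW_t - e^{A_t}\, g(X_t)\,\partial_\theta U_{\theta(t)}(X_t)\cdot\dot\theta(t)\,dt,
\]
where $\mathcal{L}_t = \alpha\Delta - \alpha\,\nabla U_{\theta(t)}\cdot\nabla$ is the Langevin generator at frozen parameter $\theta(t)$. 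Taking expectations (the stochastic integral has zero mean under the growth hypotheses~\eqref{eq:assump:U}) yields $\tfrac{d}{dt}\int g\,d\mu_t = \int \mathcal{L}_t g\,d\mu_t - \int g\,(\partial_\theta U_{\theta(t)}\cdot\dot\theta(t))\,d\mu_t$, which is the weak form of the linear equation
\[
 \partial_t f_t = \mathcal{L}_t^* f_t - \big(\partial_\theta U_{\theta(t)}\cdot\dot\theta(t)\big)\,f_t, \qquad \mathcal{L}_t^* f = \alpha\,\nabla\cdot\big(\nabla f + f\,\nabla U_{\theta(t)}\big), \qquad f_0 = \rho_{\theta_0}.
\]

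Next I would verify that the ansatz $\hat f_t(x) = Z_{\theta_0}^{-1}e^{-U_{\theta(t)}(x)}$ solves this equation. Since $\nabla \hat f_t = -\hat f_t\,\nabla U_{\theta(t)}$, the flux $\nabla\hat f_t + \hat f_t\,\nabla U_{\theta(t)}$ vanishes identically, so $\mathcal{L}_t^*\hat f_t = 0$; meanwhile the chain rule gives $\partial_t\hat f_t = -\hat f_t\,\partial_t U_{\theta(t)} = -\hat f_t\,\partial_\theta U_{\theta(t)}\cdot\dot\theta(t)$, matching the right-hand side, and the initial condition holds because $A_0 = 0$ and $X_0\sim\rho_{\theta_0}$. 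Invoking uniqueness for this Fokker--Planck equation with a zeroth-order potential term, in the appropriate class of (integrable, sufficiently regular) densities guaranteed by the confinement and smoothness hypotheses~\eqref{eq:assump:U}, I conclude $f_t = \hat f_t$ and read off~\eqref{eq:grad:Z:t}.

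The main obstacle is the measure-theoretic and PDE bookkeeping rather than the algebra: one must check that $\mu_t$ admits a smooth density, that $\E[e^{A_t}]<\infty$, that differentiation under the expectation and the vanishing of the martingale term are legitimate, and above all that the linear parabolic equation above has a unique solution in the class where both $f_t$ and $\hat f_t$ live, so that matching the equation and the initial data forces equality. An essentially equivalent route is to write $\E[g(X_t)e^{A_t}]$ as a Feynman--Kac functional and appeal to the corresponding forward Kolmogorov equation. The continuous-time computation also serves as the blueprint for the discrete-time Proposition~\ref{prop:3}: there the cancellation $\nabla\hat f_t + \hat f_t\,\nabla U_{\theta(t)} = 0$ is replaced by an exact algebraic identity relating the one-step ULA transition kernel to the weight increment built from $\alpha_k$ in~\eqref{eq:X:A:t}'s discrete analogue, which is precisely what makes the reweighting exact despite the absence of a Metropolis correction.
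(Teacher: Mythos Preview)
Your proof is correct and follows essentially the same route as the paper: derive a Fokker--Planck-type equation with the zeroth-order sink term $-(\partial_\theta U_{\theta(t)}\cdot\dot\theta(t))\hat\rho$ for the weighted density $\hat\rho(t,x)=\E[\delta_x(X_t)e^{A_t}]$, verify by direct substitution that $Z_{\theta_0}^{-1}e^{-U_{\theta(t)}(x)}$ solves it, and conclude by uniqueness. The only cosmetic difference is that the paper obtains this PDE by writing the joint Fokker--Planck equation for the density of $(X_t,A_t)$ in $(x,a)$ and integrating against $e^a\,da$, whereas you reach the same equation via It\^o's formula applied to $g(X_t)e^{A_t}$; the two derivations are equivalent.
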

The second equation in~\eqref{eq:grad:Z:t} can also be written in term of the free energy $F_\theta = - \log Z_\theta$ as $F_{\theta(t)}  = F_{\theta_0}-\log \E [e^{A_t}]$: this is  Jarzynski's equality~\cite{jarzynski1997nonequilibrium}. We stress that it is key to include the weights in~\eqref{eq:grad:Z:t} and, in particular, $\E[  \partial_\theta U_{\theta(t)}(X_t)] \not=\E_{\theta(t)} [\partial_\theta U_{\theta(t)}]$. This is because the PDF of $X_t$ alone lags behind the model PDF $\rho_{\theta(t)}$: the larger $\alpha$, the smaller this lag, but it is always there if $\alpha<\infty$,  see the remark at the end of this section for more discussion on this point. The inclusion of the weights in~\eqref{eq:grad:Z:t} corrects exactly for the bias induced by this lag.

An immediate consequence of Proposition~\ref{prop:2} is that we can evolve $\theta(t)$ by the gradient descent flow over the cross-entropy by solving~\eqref{eq:X:A:t} concurrently with
\begin{equation}
\label{eq:GD:t}
    \dot \theta(t)=\dfrac{\E[\partial_\theta U_{\theta(t)}(X_t)e^{A_t}]}{\E[e^{A_t}]}-\E_*[\partial_\theta U_{\theta(t)}],\quad\quad\quad \quad\theta(0)=\theta_0
\end{equation}
since by~\eqref{eq:grad:Z:t} the right hand side of~\eqref{eq:GD:t} is precisely $-\partial_\theta H(\rho_{\theta(t)}, \rho_*) = \E_{\theta(t)} [\partial_\theta U_{\theta(t)}] - \E_*[\partial_\theta U_{\theta(t)}]$. Assuming that we know $Z_{\theta_0}$ we can also track the evolution of the cross-entropy \eqref{2} via
\begin{equation}
    \label{eq:c:e:time}
    H(\rho_{\theta(t)}, \rho_*) = \log \E [ e^{A_t}] + \log Z_{\theta_0} + \E_* [U_{\theta(t)}].
\end{equation}

\begin{proof}
The joint PDF $f(t,x,a)$ of the process $(X_t,A_t)$ satisfying~\eqref{eq:X:A:t} solves the Fokker-Planck equation (FPE)
\begin{equation}
\label{11b}
     \partial_t f=\alpha\nabla_x\cdot(\nabla_x U_{\theta(t)} f +\nabla_x f)+\partial_\theta{U}_{\theta(t)}\cdot\dot\theta(t) \partial_af, \quad f(0,x,a) = Z_{\theta_0}^{-1} e^{-U_{\theta_0}(x)} \delta(a).
\end{equation} 
Let us derive an equation for 
\begin{equation}
    \hat \rho(t,x) = \int_{-\infty}^\infty e^a f(t,x,a) da
\end{equation}
To this end, multiply~\eqref{11b} by~$e^a$, integrate the result over $a\in(-\infty,\infty)$, and use integration by parts for the last term at the right-hand side to obtain:
\begin{equation}
\label{eq:hatrho}
     \partial_t \hat \rho =\alpha \nabla_x\cdot(\nabla_x U_{\theta(t)} \hat \rho +\nabla_x \hat \rho)-\partial_\theta{U}_{\theta(t)}\cdot\dot\theta(t) \hat \rho, 
     \qquad \hat \rho(0,x) = Z_{\theta_0}^{-1} e^{-U_{\theta_0}(x)}
\end{equation}
By general results for the solutions of parabolic PDEs such as \eqref{eq:hatrho} (see \cite{evans2022partial}, Chapter 7), we know that the solution to this equation is unique, and 
we can check by direct substitution that it is given by 
\begin{equation}
\label{eq:hatrho:sol}
     \hat \rho(t,x) = Z_{\theta_0}^{-1} e^{-U_{\theta(t)}(x)}.
\end{equation}
This implies that 
\begin{equation}
\label{eq:hatrho:sol:int}
     \int_{\R^d} \hat \rho(t,x) dx   = Z_{\theta_0}^{-1} Z_{\theta(t)}.
\end{equation}
Since by definition $\E[e^{A_t}] = \int_{\mathbb{R}^d}\int_{-\infty}^\infty e^a f(t,x,a) da dx = \int_{\R^d} \hat \rho(t,x) dx$ this establishes the second equation in~\eqref{eq:grad:Z:t} . To establish the first notice that
\begin{equation}
    \label{eq:exp:f}
    \begin{aligned}
    \frac{\E[ \partial_\theta U_{\theta(t)}(X_t)e^{A_t}]}{\E [e^{A_t}]} &= \frac{\int_{\mathbb{R}^d}\int_{-\infty}^\infty   \partial_\theta U_{\theta(t)}(x) e^a  f(t,x,a)  dadx}{\int_{\mathbb{R}^d}\int_{-\infty}^\infty e^a f(t,x,a) dadx}= \frac{\int_{\R^d} \partial_\theta U_{\theta(t)}(x) \hat \rho(t,x)dx}{\int_{\R^d} \hat \rho(t,x) dx} \\
    &= \frac{Z_{\theta_0}^{-1}\int_{\mathbb{R}^d}\partial_\theta U_{\theta(t)}(x)e^{-U_{\theta(t)}(x)}dx}{Z_{\theta_0}^{-1}Z_{\theta(t)} } = \E_{\theta(t)} [\partial_\theta U_{\theta(t)}]
    \end{aligned}
\end{equation} 

\end{proof}

\paragraph{The need for Jarzynski's correction.}
Suppose that the walkers satisfy the Langevin equation (first equation in~\eqref{eq:X:A:t}):
\begin{equation}
    \label{eq:langevin}
     dX_t=-\alpha \nabla U_{\theta(t)}(X_t) dt+\sqrt{2\alpha}\,dW_t,\quad\quad X_0\sim \rho_{\theta_0}, 
\end{equation}
where $\theta(t)$ is evolving according to some protocol.
The probability density function $\rho(t,x)$ of $X_t$ then satisfies the Fokker-Planck equation (compare~\eqref{eq:hatrho})
\begin{equation}
    \label{eq:fpe:l}
    \partial_t \rho = \alpha\nabla\cdot \left(\nabla U_{\theta(t)}(x) \rho + \nabla \rho\right), \qquad \rho(t=0) = \rho_{\theta_0}
\end{equation}
The solution to this equation is not available in closed form, and in particular $\rho(t,x) \not = \rho_{\theta(t)}(x)$ --  $\rho(t,x)$ is only close to $\rho_{\theta(t)}(x)$ if we let $\alpha\to\infty$, so that the walkers $X_t$ move much faster than the parameters $\theta(t)$ but this limit is not easily achievable in practice (as convergence of the FPE solution to its equilibrium is very slow in general if the potential $U_{\theta(t)}$ is complicated). As a result $\E[ \partial_\theta U_{\theta(t)}(X_t)] \not =\E_{\theta(t)} [\partial_\theta U_{\theta(t)}]$, implying the necessity to include the weights in the expectation~\eqref{eq:grad:Z:t}.

\subsection{Proof of Proposition \ref{prop:3}}
\label{app:proofs31}

The iteration rule for $A_k$ in \eqref{eq:X:A:k} implies that
\begin{equation}
\label{eq:X:A:k:s}
A_k = \sum_{q=1}^k \left(\alpha_{q-1}(X_{q-1},X_q) - \alpha_{q}(X_{q},X_{q-1})\right), \qquad k\in \N.
\end{equation}
For $k\in \N_0$, let 
\begin{equation}
    \beta_k(x,y)=(4\pi h)^{-d/2}\exp\Big(-\frac{1}{4h}\left|y-x+h\nabla U_{\theta_k}(x)\right|^2\Big)
\end{equation}
be the transition probability density of the ULA update in~\eqref{eq:X:A:k}, i.e. $\beta_k(X_{k},X_{k+1})$ is the probability density of $X_{k+1}$ conditionally on $X_k$. By the definition of $A_k$, we have
\begin{equation}
\label{eq:X:A:k:s2}
\begin{aligned}
    \exp(A_k) &= \prod_{q=1}^k \exp\left(\alpha_{q-1}(X_{q-1},X_q) - \alpha_{q}(X_{q},X_{q-1})\right)\\
    & =  e^{-U_{\theta_k}(X_k)+U_{\theta_0}(X_0)} \prod_{q=1}^k \frac{ \beta_q(X_q,X_{q-1})}{\beta_{q-1}(X_{q-1},X_{q})}
\end{aligned}
\end{equation}
where in the second line we added and subtracted $|X_q - X_{q-1}|^2/4h$ and used the definition of $\alpha_k(x,y)$ given in~\eqref{eq:alpha}. 
Since the joint probability density function of the path $(X_0,X_1,\ldots,X_k)$ at any $k\in \N$ is
\begin{equation}
\label{eq:varrho}
    \varrho(x_0,x_1,\ldots, x_k) = Z_{\theta_0}^{-1} e^{-U_{\theta_0}(x_0)} \prod_{q=1}^k \beta_{q-1}(x_{q-1},x_{q})
\end{equation}
we deduce from~\eqref{eq:X:A:k:s2} and \eqref{eq:varrho} that, given an $f:\R^d \to \R$,  we can express the expectation $\E[f(X_k) e^{A_k}]$ as the integral
\begin{equation}
    \label{eq:expect:int}
    \begin{aligned}
    &\E[f(X_k) e^{A_k}]\\
    &= \int_{\R^{dk}} f(x_k) e^{-U_{\theta_k}(x_k)+U_{\theta_0}(x_0)} \prod_{q=1}^k \frac{ \beta_q(x_q,x_{q-1})}{\beta_{q-1}(x_{q-1},x_{q})} \varrho(x_0,x_1,\ldots, x_k) dx_0\cdots dx_k\\
    &  = Z_{\theta_0}^{-1} \int_{\R^{dk}} f(x_k) e^{-U_{\theta_k}(x_k)} \prod_{q=1}^k  \beta_q(x_q,x_{q-1})dx_0\cdots dx_k
    \end{aligned}
\end{equation}
Since $\int_{\R^d} \beta_k(x,y) dy =1$ for all $k\in\N_0$ and all $x\in\R^d$, we can perform the integrals over $x_0$, then $x_1$, etc. in this expression to be left with
\begin{equation}
    \label{eq:expect:int:2}
    \E[f(X_k) e^{A_k}]= Z_{\theta_0}^{-1} \int_{\R^{d}} f(x_k) e^{-U_{\theta_k}(x_k)} dx_k
\end{equation}
Setting $f(x)=1$ in this expression gives
\begin{equation}
    \label{eq:f1}
    \E[e^{A_k}]= Z_{\theta_0}^{-1} \int_{\R^{d}} e^{-U_{\theta_k}(x_k)} dx_k = Z_{\theta_0}^{-1} Z_{\theta_k}
\end{equation}
which implies the second equation in~\eqref{eq:grad:Z:k}; setting $f(x_k) = \partial_{\theta}U_{\theta_k}(x_k)$ in~\eqref{eq:expect:int:2} gives
\begin{equation}
    \label{eq:f2}
    \E[\partial_{\theta}U_{\theta_k}(X_k)e^{A_k}]= Z_{\theta_0}^{-1} \int_{\R^{d}} \partial_{\theta}U_{\theta_k}(x_k) e^{-U_{\theta_k}(x_k)} dx_k = Z_{\theta_0}^{-1} Z_{\theta_k} \E_{\theta_k} [\partial_{\theta}U_{\theta_k}]
\end{equation}
which can be combined with~\eqref{eq:f1} to arrive at the first equation in~~\eqref{eq:grad:Z:k}.\hfill $\square$

\subsection{Mini-batched version of Algorithm~\ref{algori}.}
\label{app:minib}

In Algorithm~\ref{algori_mini} we present a mini-batched version of Algorithm~\ref{algori}, where we do not update the positions of every walker at every iteration. Instead, we evolve  only a small portion of the walkers, while keeping the other walkers frozen and only updating their weights using the information from the model update (which uses only the model energy and does not require back-propagation). This mini-batched version is more computationally efficient and leads to convergence in training with much fewer steps of ULA. More importantly, the mini-batched version of the algorithm, as compared to the full-batched version, enlarges the total sample size and therefore improves the sample variety with even less computational cost.

\begin{algorithm}[h]
\caption{Mini-batched Sequential Monte-Carlo training with Jarzynski correction}
\label{algori_mini}
\begin{algorithmic}[1]
% \State TRAINING($\{x_0^i\}_{i=1}^N$,$\{x_*^i\}_{i=1}^M$,$U_\theta$, $\theta_0$, $\{A_k^i\}_{i=1}^N$, $k_{max}$, $\{\gamma_k\}_{k=1}^{k_{max}}$, $h$, $Opt(\theta_k,\mathcal{L}_k,\gamma_k)$)
\State{\textbf{Inputs:}} data points $\{x^*_i\}_{i=1}^n$; energy model $U_\theta$; optimizer step $\text{opt}(\theta,\mathcal{D})$ using $\theta$ and the empirical CE gradient $\mathcal{D}$; initial parameters $\theta_0$; number of walkers $N\in \N_0$; batch size $N' \in \N_0$ with $N'<N$, set of walkers $\{X_0^i\}_{i=1}^N$ sampled from $\rho_{\theta_0}$;  total duration $K\in \N$; ULA time step $h$; set of positive constants $\{c_k\}_{k\in \N}$.
\State $A_0^i =0$ for $i=1,\ldots,N$.
\For{$k =1:K-1$}
\State $p^i_{k} = \exp(A^i_{k}) / \sum_{j=1}^N \exp(A^j_{k})$ \Comment{normalized weights}
\State $\tilde{\mathcal{D}}_k=\sum_{i=1}^N p^i_{k} \partial_\theta U_{\theta_k}(X_k^i)-n^{-1}\sum_{j=1}^n\partial_\theta U_{\theta_k}(x^j_*)$\Comment{empirical CE gradient}
\State $\theta_{k+1}=\text{opt}(\theta_{k},\tilde{\mathcal{D}}_k)$\Comment optimization step
\State Randomly select a mini-batch $\{X_k^j\}_{j \in B}$ with $\#B=N'$ from the set of walkers $\{X_k^i\}_{i=1}^N$
%\For{$m=1,...,M$}
\For{$j\in B$}
\State $X^j_{k+1}=X_k^j-h\nabla U_{\theta_k}(X^j_k)+\sqrt{2h}\,\xi^j_k,\quad\quad \xi^j_k\sim \mathcal{N}(0_d,I_d)$ \Comment{ULA}
\State $A^j_{k+1}=A^j_k-\alpha_{k+1}(X^j_{k+1},X^j_{k})+\alpha_{k}(X^j_{k},X^j_{k+1})$
\Comment{weight update}
\EndFor
%\EndFor
\For{$j \not\in B$ }
\State $X^j_{k+1}=X_k^j$ \Comment{no update of the walkers}
\State $A^j_{k+1}=A^j_k - U_{\theta_{k+1}}(X^j_{k}) + U_{\theta_{k}}(X^j_{k})$ 
\Comment{weight update}
\EndFor
\State Resample the walkers and reset the weights if $\text{ESS}_{k+1}<c_{k+1}$, see~\eqref{eq:resampling}.  \Comment{resampling step}
\EndFor
\State{\textbf{Outputs:}} Optimized energy $U_{\theta_{K}}$; set of weighted walkers $\{X_{K}^i,A_{K}^i\}_{i=1}^N$ sampling $\rho_{\theta_K}$;  partition function estimate $\tilde Z_{\theta_{K}}= Z_{\theta_0} N^{-1} \sum_{i=1}^N \exp(A_{K}^i)$; CE estimate $\log \tilde Z_{\theta_{K}}+ n^{-1} \sum_{j=1}^n U_{\theta_K}(x^j_*)$
\end{algorithmic}
\end{algorithm}

\subsection{Resampling Routines}
\label{app:resamp}

Resampling schemes are  necessary to tackle the decay of effective sample size \eqref{eq:resampling}.  For the sake of completeness, here we recall three of the most widely used routines: multinomial \cite{gordon1993novel}, stratified \cite{kitagawa1996monte},  and systematic resampling \cite{carpenter1999improved}, and refer the reader to the review \cite{li2015resampling} for more details.

Given a set of normalized scalar weights $\{p_i\}_{i=1}^N\in[0,1]$ with $\sum_{i=1}^N p_i = 1$ associated to the $N$ walkers, we define the cumulative sum 
\begin{equation}
    \label{cumul_sum}   P_n=\sum_{i=1}^n p_i, \qquad n=1,\ldots, N
\end{equation}
All three methods prescribe a way to choose a set $\{u_n\}_{n=1}^N\in(0,1]$ used to perform the resampling in the following way: for every $n,m\in \{1,\ldots,N\}$, the $m$-th particle is chosen during the $n$-th extraction if 
\begin{equation}
    P_{m-1}<u_n<P_m
\end{equation}
Let us now specify how the set of $u_n$ is selected in the cases in study. We  denote by $\mathcal{U}(a,b]$ the uniform probability distribution on the interval $(a,b]$. 

\paragraph{Multinomial resampling.}  Sample $u^{\text{mult}}_n\sim \mathcal{U}(0,1]$, independently for every $n\in \{1,\ldots,N\}$. This approach leads to a large number of possible resampled configurations, which is not desirable in practice as it increases the variance of the estimator.

\paragraph{Stratified resampling.} Partition the interval $(0,1]$ into $N$ sub-intervals, or strata, of size $1/N$; then, sample $u^{\text{str}}_n\sim \mathcal{U}((n-1)/N,n/N]$ independently for each $n=\{1,\ldots, N\}$. This approach picks a single $u_n$ in each stratus, thereby reducing the number of possible resampled configurations.

\paragraph{Systematic resampling.} Partition the interval $(0,1]$ into $N$ sub-intervals, or strata, of size $1/N$; then,  sample $u^{\text{sys}}_1\sim \mathcal{U}(0,1/N]$, and $u^{\text{sys}}_n=u^{\text{sys}}_1+(n-1)/N$ for $n>1$. This method also reduces the number of possible resampled configurations.

Note that there are various modifications of these three methods (see the review \cite{li2015resampling}), all of them meeting the so-called unbiasedness condition, that is, the $i$-particle is expected to be sampled in average $N p_i$ times. However, these extensions do not lead to a critical lowering of the number of possible resampled configurations compared to systematic resampling. For this reason in our numerical experimentals we only tested the three methods above. Note also that, regardless of the  resampling routine one uses, a fundamental role is played by the variance of the weights: if the cumulative sum is dominated by one or very few weights, i.e. resampling is triggered too late, it does not remedy the suppression of population variability. 

\subsection{Contrastive divergence and persistent contrastive divergence algorithms}
\label{app:cd:pcd}

For completeness, we give the CD and PCD algorithms we used to compare with our Algorithms~\ref{algori} and \ref{algori_mini}. As written these algorithms use the full batch of data points at every optimization step, but they can easily be  modified to do mini-batching.

\begin{algorithm}
\caption{Contrastive divergence (CD) algorithm }
\label{CD:algori}
\begin{algorithmic}[1]

\State{\textbf{Inputs:}} data points $\Omega=\{x_*^i\}_{i=1}^n$ in $\R^d$; energy model $U_\theta$; optimizer step $\text{opt}(\theta,\mathcal{D})$ using $\theta$ and the empirical gradient $\mathcal{D}$; initial parameters $\theta_0$; number of walkers $N\in \N_0$ with $N<n$;  total duration $K\in \N$; ULA time step $h$;  $P\in \N$.
\For{$k =1,\ldots,K-1$}
\For{$i=1,...,N$}
\State $X^i_0 = SampleMultinomial(\Omega)$
\For{$p=0,...,P-1$}
\State $X^i_{p+1}=X_p^i-h\nabla U_{\theta_k}(X^i_p)+\sqrt{2h}\,\xi^i_p,\quad\quad \xi^i_p\sim \mathcal{N}(0_d,I_d)$ \Comment{ULA}
\EndFor
\EndFor
\State $\tilde{\mathcal{D}}_k=N^{-1}\sum_{i=1}^N  \partial_\theta U_{\theta_k}(X_P^i)-n^{-1}\sum_{i=1}^n\partial_\theta U_{\theta_k}(x^i_*)$\Comment{empirical gradient}
\State $\theta_{k+1}=\text{opt}(\theta_{k},\tilde{\mathcal{D}}_k)$\Comment optimization step
\EndFor
\State{\textbf{Outputs:}} Optimized energy $U_{\theta_{K}}$; set of walkers $\{X_{P}^i\}_{i=1}^N$
\end{algorithmic}
\end{algorithm}

\begin{algorithm}
\caption{Persistent contrastive divergence (PCD) algorithm}
\label{PCD:algori}
\begin{algorithmic}[1]
\State{\textbf{Inputs:}} data points $\Omega=\{x^*_i\}_{i=1}^n$ in $\R^d$; energy model $U_\theta$; optimizer step $\text{opt}(\theta,\mathcal{D})$ using $\theta$ and the empirical CE gradient $\mathcal{D}$; initial parameters $\theta_0$; number of walkers $N\in \N_0$ with $N<n$;  total duration $K\in \N$; ULA time step $h$.
\State $X_0^i =SampleMultinomial(\Omega)$ for $i=1,\ldots,N$.
\For{$k =1,\ldots,K-1$}
\State $\tilde{\mathcal{D}}_k=N^{-1}\sum_{i=1}^N \partial_\theta U_{\theta_k}(X_k^i)-n^{-1}\sum_{i=1}^n\partial_\theta U_{\theta_k}(x^i_*)$\Comment{empirical gradient}
\State $\theta_{k+1}=\text{opt}(\theta_{k},\tilde{\mathcal{D}}_k)$\Comment optimization step
\For{$i=1,...,N$}
\State $X^i_{k+1}=X_k^i-h\nabla U_{\theta_k}(X^i_k)+\sqrt{2h}\,\xi^i_k,\quad\quad \xi^i_k\sim \mathcal{N}(0_d,I_d)$ \Comment{ULA}
\EndFor
\EndFor
\State{\textbf{Outputs:}} Optimized energy $U_{\theta_{K}}$; set of walkers $\{X_{K}^i\}_{i=1}^N$.
\end{algorithmic}
\end{algorithm}

Interestingly, we can write down an equation that mimics the evolution of the PDF of the walkers in the CD algorithm, at least in the continuous-time limit: this equation reads
\begin{equation}
    \label{eq:fpe:pcd}
    \partial_t \check\rho = \alpha\nabla\cdot \left(\nabla U_{\theta(t)}(x) \check\rho + \nabla \check\rho\right) - \nu (\check\rho-\rho_*), \qquad \check\rho(t=0) = \rho_{*}
\end{equation}
where the parameter $\nu>0$ controls the rate at which the walkers are reinitialized at the data points: the last term in~\eqref{eq:fpe:pcd} is a birth-death term that captures the effect of these reinitializations. The solution to this equation is not available in closed from (and $\check\rho(t,x) \not = \rho_{\theta(t)}(x)$ in general), but in the limit of large~$\nu$ (i.e. with very frequent reinitializations), we can show~\cite{domingoenrich2022dual} that
\begin{equation}
    \label{eq:large:nu}
    \check\rho(t,x) = \rho_*(x) + \nu^{-1} \alpha\nabla\cdot \left(\nabla U_{\theta(t)}(x) \rho_*(x) + \nabla \rho_*(x)\right) + O(\nu^{-2}).
\end{equation}
As a result
\begin{equation}
    \label{eq:cd:expect} 
    \begin{aligned}
        &\int_{\R^d} \partial_\theta U_{\theta(t)}(x) (\rho_*(x)-\check \rho(t,x)) dx\\
        &= -\nu^{-1} \int_{\R^d} \partial_\theta U_{\theta(t)}(x) \nabla\cdot \left(U_{\theta(t)}(x) \rho_*(x) + \nabla \rho_*(x)\right)dx + O(\nu^{-2})\\
        & = \nu^{-1} \int_{\R^d} \left( \partial_\theta \nabla U_{\theta(t)}(x) \cdot \nabla U_{\theta(t)}(x) - \partial_\theta \Delta U_{\theta(t)}(x)\right) \rho_*(x) dx + O(\nu^{-2})
    \end{aligned}
\end{equation}
The leading order term at the right hand side is precisely $\nu^{-1}$ times the gradient with respect to $\theta$ of the Fisher divergence
\begin{equation}
    \label{eq:fisher}
    \begin{aligned}
        &\frac12\int_{\R^d} |\nabla U_{\theta}(x) + \nabla \log\rho_*(x)|^2\rho_*(x) dx \\
        & = \frac12\int_{\R^d} \left[ |\nabla U_{\theta}(x)|^2 -2 \Delta U_{\theta}(x) +| \nabla \log\rho_*(x)|^2\right]\rho_*(x) dx
        \end{aligned}
\end{equation}
where $\Delta$ denotes the Laplacian and we used $\int_{\R^d} \nabla U_{\theta}(x) \cdot \nabla \log\rho_*(x) \rho_*(x) dx= \int_{\R^d} \nabla U_{\theta}(x) \cdot \nabla \rho_*(x) dx = - \int_{\R^d} \Delta U_{\theta}(x)  \rho_*(x) dx$. This confirms the known fact that the CD algorithm effectively performs GD on the Fisher divergence rather than the cross-entropy \cite{hyvarinen2007connections}.

\section{Additional numerical results}
\label{sec:sm:num}

Here we give additional details about our numerical results. The codes are available at: \url{https://github.com/submissionx12/EBMs_Jarzynski}.

\subsection{Gaussian mixture distributions}
\label{sec:sm:gmm}

Here we give some additional numerical results for the teacher-student model discussed in Section \ref{sec:gmm} in which the two wells of the teacher are aligned along the first dimension, fixing the first component of the means to be $a_*^1=-10$ and $b_*^1=6$, and $a_*^\alpha=b_*^\alpha=0$ for any $\alpha=2,\dots,d$; we also set $z_*=-\log(3)$, corresponding to a mass $p_*=1/(1+e^{-z_*})=0.25$ of the mode centered at $a$.

\begin{figure}[t]
  \centering
   \includegraphics[scale=0.27]{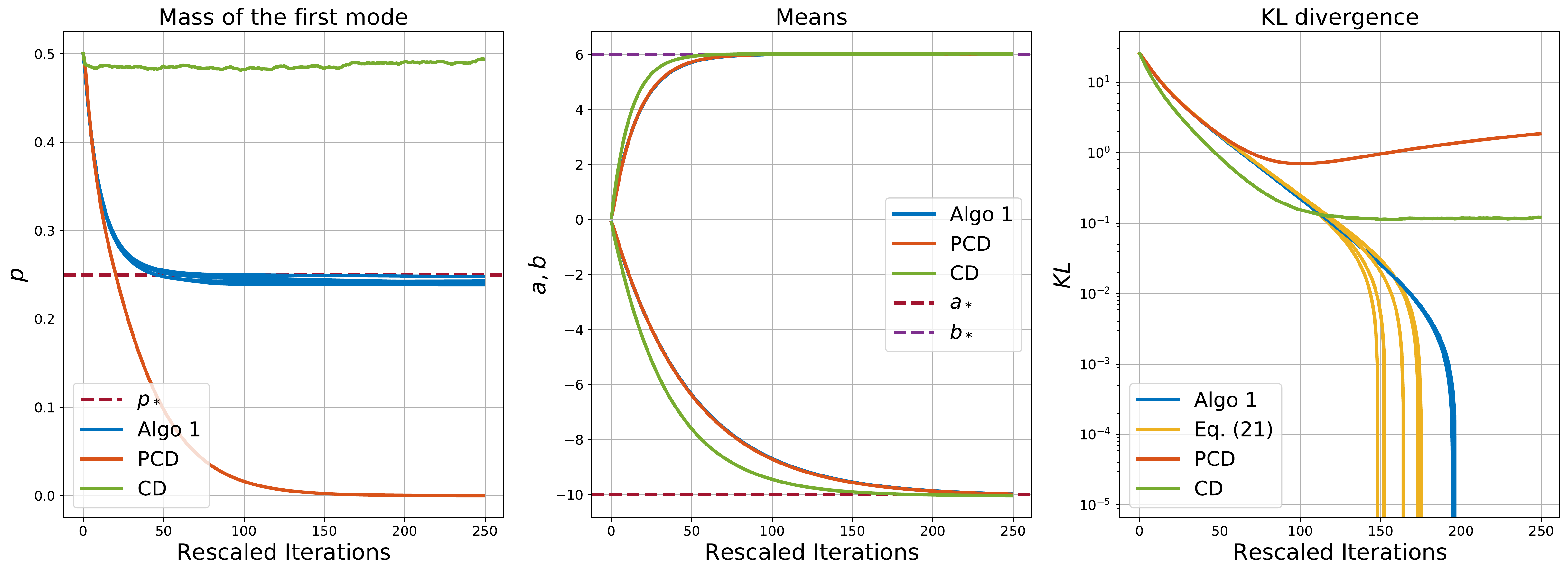}
  \caption{\textit{GMM experiments:} Evolution of the parameters and the cross entropy during training by Algorithm~\ref{algori}, PCD, and CD. W.r.t. Algorithm~\ref{algori}, we display the results for five different thresholds in the resampling step.   \textit{Left panels:} evolution of $p_k = 1/(1+e^{-z_k})$; \textit{middle panel:} evolution of $a_k$ and $b_k$; \textit{right panel:}   evolution of the Kullback-Leibler divergence.}
  \label{fig:GMM_2studs}
\end{figure}

\begin{figure}[t]
  \centering
   \includegraphics[scale=0.3]{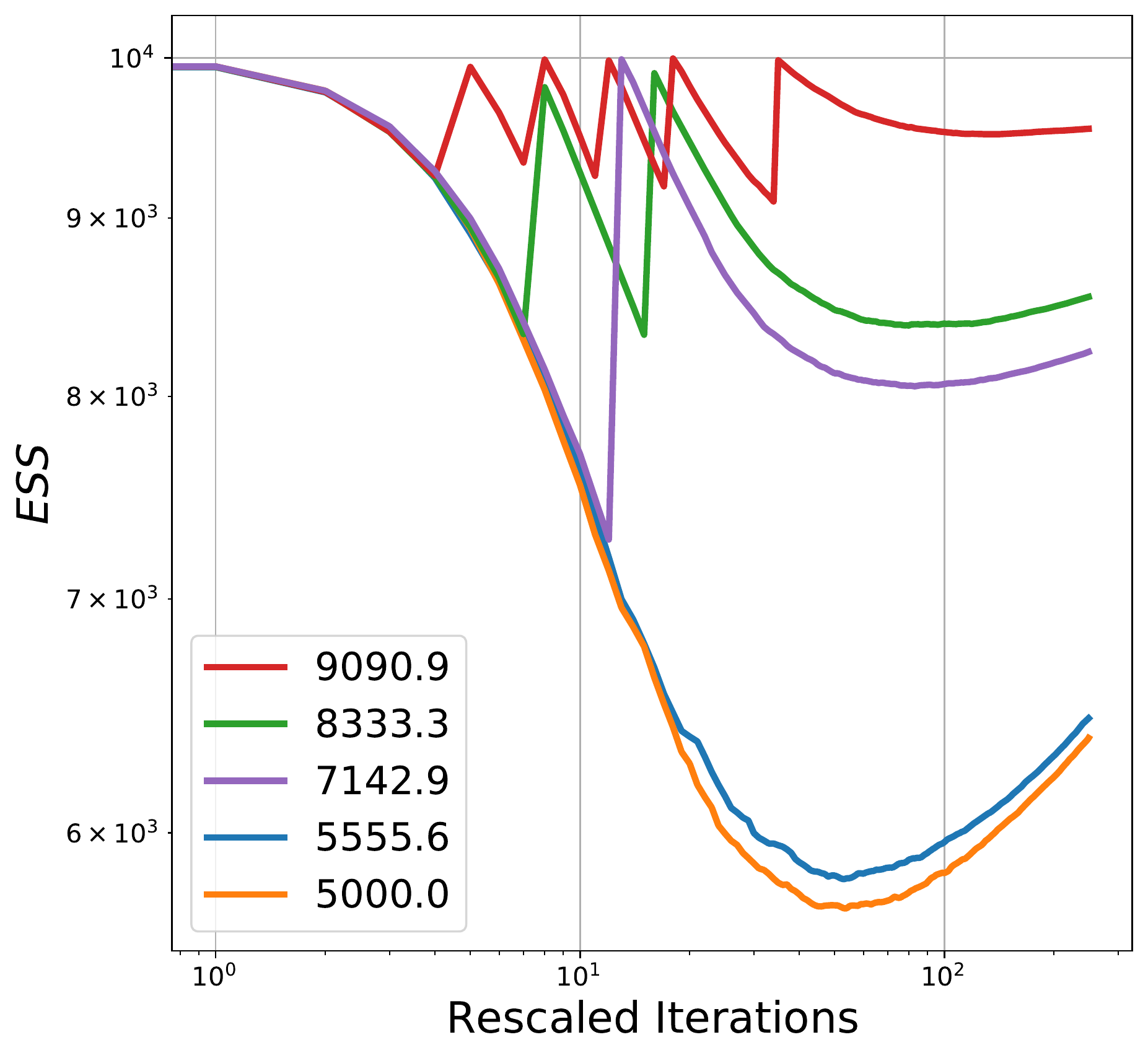}
  \caption{\textit{GMM experiments:} Evolution of the effective sample size (ESS) for five different choices of threshold associated to $c$, constant in time. Bimodal student.}
  \label{fig:GMM_2studsvariance}
\end{figure}

All the simulations are performed in $d=50$, with a time step of $h=0.1$ for the ULA update. The number of data points is $n=10^4$. The setup of the teacher is the same for every simulation we display here and the optimization step is performed with full batch gradient descent with learning rate constant in time. 

\paragraph{GMM for different choices of $c_k$.} Results are shown in Figures \ref{fig:GMM_2studs} and \ref{fig:GMM_2studsvariance}. As initial conditions we select $a_0^1=-10^{-1}$,  $b_0^1=10^{-1}$, $a_0^\alpha\sim 10^{-2}\mathcal{N}(0,1)$ and $b_0^\alpha\sim 10^{-2}\mathcal{N}(0,1)$ for any $\alpha=2,\ldots,d$; this perturbation around $a=b=0$ is prescribed to avoid numerical degeneracy. For $z$, we fix $z_0=0$. We run Algorithm \ref{algori}, as well as the CD and PCD  Algorithms~\ref{CD:algori} and~\ref{PCD:algori} using $N=10^4$ walkers for $K=8\cdot 10^3$ iterations.  We use a different learning rate for $z$ and $a,b$, namely $\gamma_z=0.125$ and $\gamma_{a,b}=0.2\gamma_z$. Moreover, these values are multiplied by a factor $10$ in CD. With regard to the resampling step in Algorithm \ref{algori}, we choose a threshold $c_k=c$ which is fixed in time. We display the result for five possible values of this hyperparameter, namely $c=[0.1,0.2,0.4,0.8,1.0]$; these are related to thresholds in the effective sampling size (ESS) via $\text{ESS}_{\text{thresh}}=N/(c+1)$. With regard to Algorithm \ref{algori}, we choose $M=1$ and $N'=N$, that is the full batch version; in the CD Algorithm~\ref{PCD:algori} we choose $P=4$ for the number of ULA steps between restarts. Every run is performed five times and the average between them is shown in figures.

\paragraph{Mode collapse in GMM for PCD.} Results are shown in Figure \ref{fig:GMM_2studscollapse}. In the same setup as above we select as initial conditions $a_0^1=-10^{-1}$,  $b_0^1=10^{-1}$, $a_0^\alpha\sim 10^{-2}\mathcal{N}(0,1)$ and $b_0^\alpha\sim 10^{-2}\mathcal{N}(0,1)$ for any $\alpha\in[2,d]$; for $z$, we fix $z_0=0$. The learning rate is chosen to be $\gamma_z=5$ for $z$ and $\gamma_{a,b}=0.2\gamma_z$ for the means. The time step of ULA is $h=0.2$. Since the objective is to show mode collapse in the PCD algorithm, we run just Algorithm \ref{PCD:algori} for $K=10^4$ iterations and $N=10^4$ walkers. 

\begin{figure}[t]
  \centering
   \includegraphics[scale=0.3]{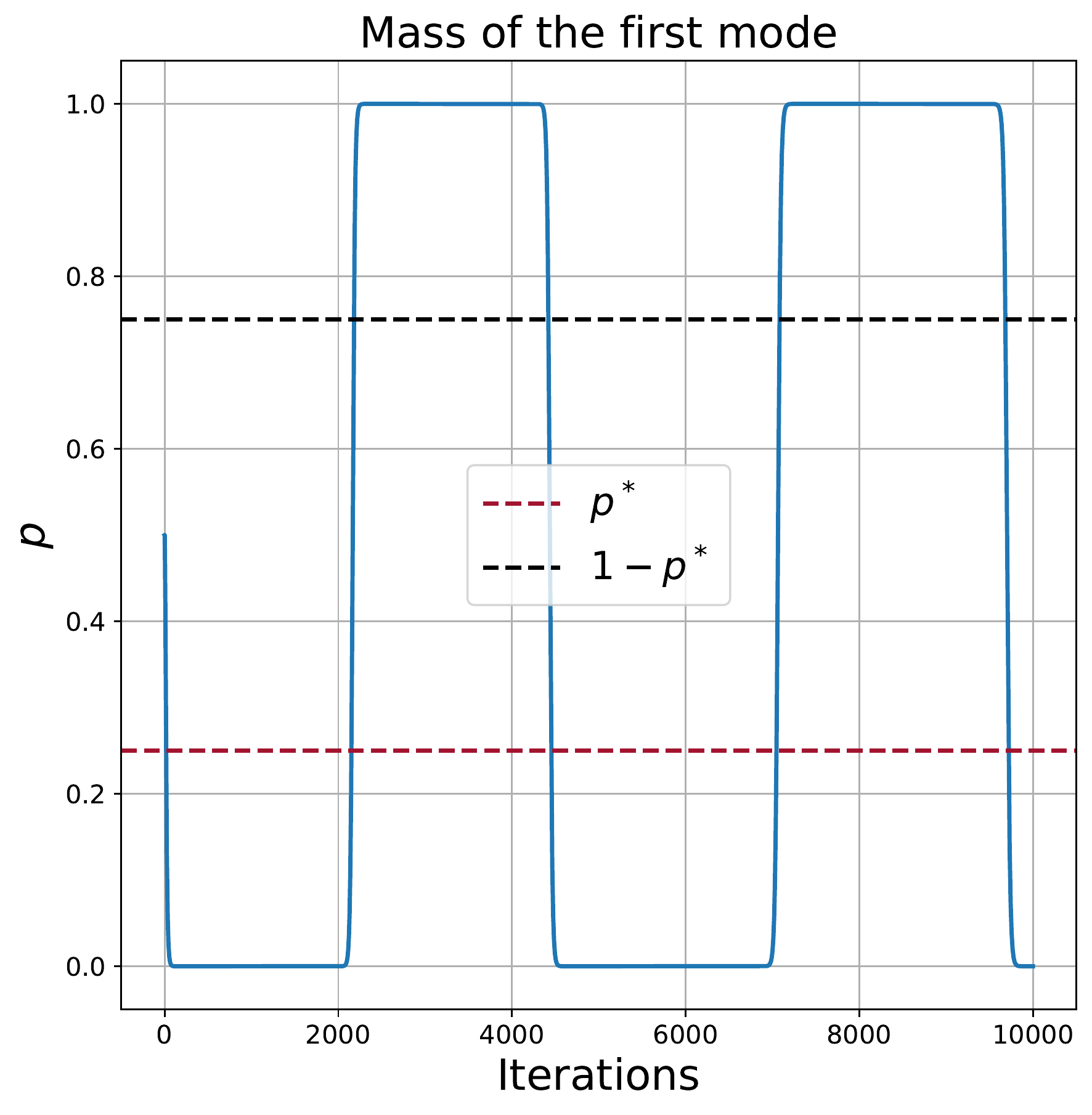}
  \caption{Mode collapse and oscillations in PCD. Evolution of the probability $p_k=1/(1+e^{-z_k})$. }
  \label{fig:GMM_2studscollapse}
\end{figure}

\paragraph{The need for resampling.} In absence of resampling, we observe a dramatic deterioration of ESS (Figure \ref{fig:noresamp}).
\begin{figure}[t]
  \centering
   \includegraphics[scale=0.35]{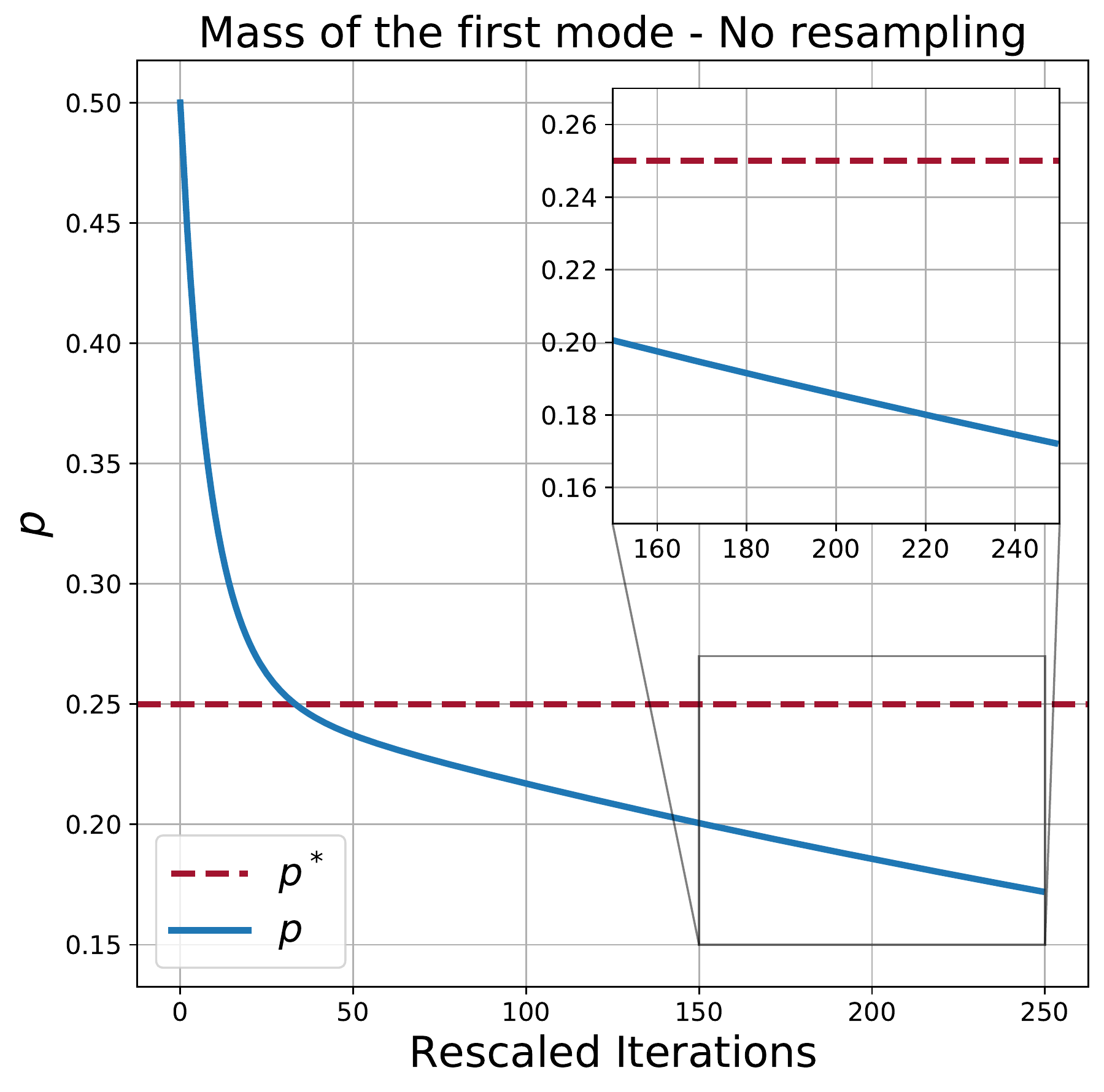}
   \includegraphics[scale=0.35]{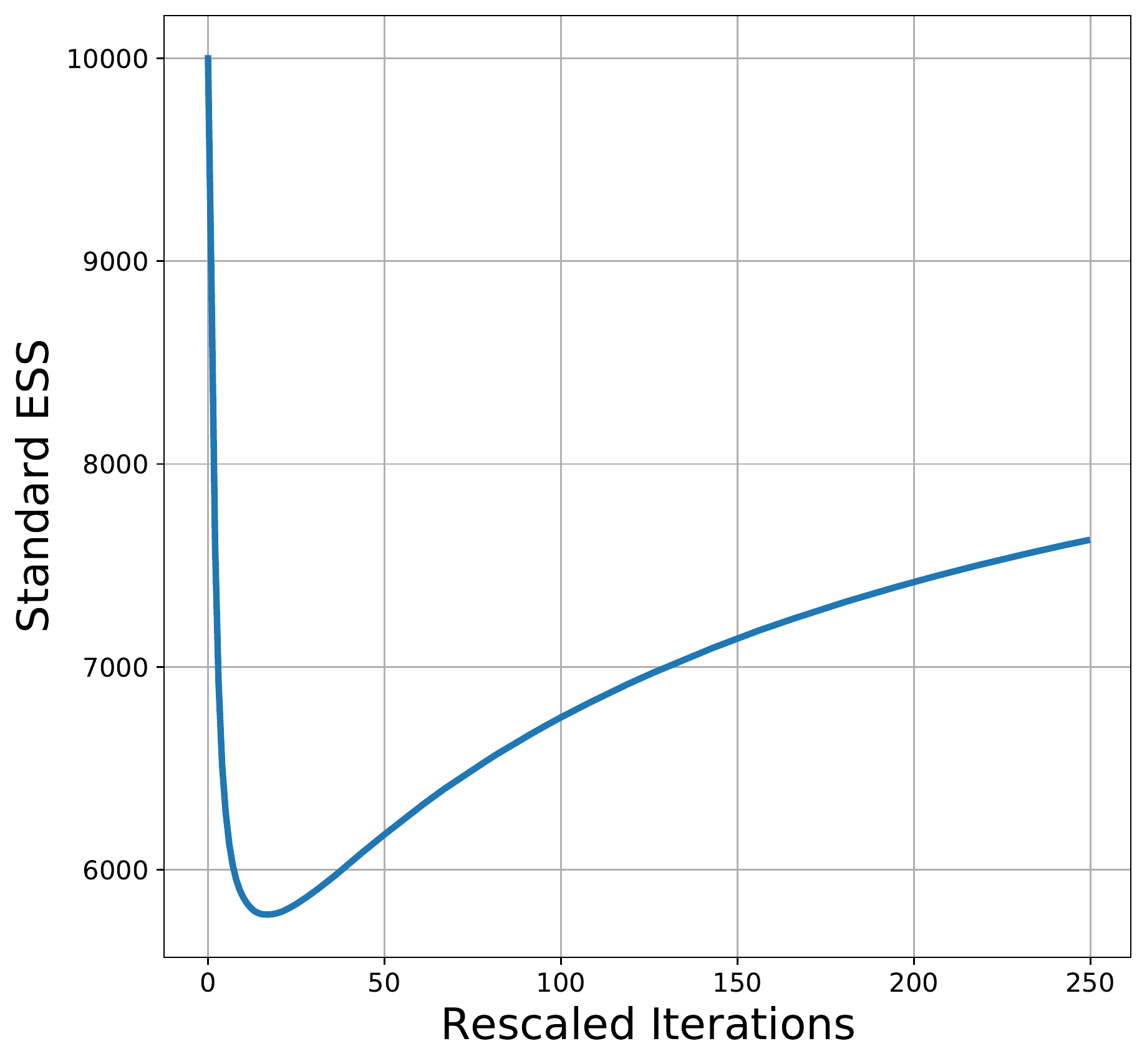}
  \caption{GMM in 50d without resampling step, full batch experiment with $N=10^5$ walkers, average of $30$ runs, $4.8\cdot 10^5$ iterations, same target density of Subsection \ref{sec:gmm}. \textit{Left Panel:  }relative mass of the first mode. \textit{Right panel:} evolution of ESS}
  \label{fig:noresamp}
\end{figure}
To investigate how this issue is solved by resampling,  we compare the three  routines presented in Appendix~\ref{app:resamp}, using three pre-specified  lags between the resampling steps. We use the same hyperparameters (learning rate, target distribution, etc.) as in Figure \ref{fig:noresamp}. The results are shown in Figure \ref{resamp_comp}: looking at the upper panels, we see that  stratified and systematic resampling are more stable than multinomial; moreover, if the resampling step is performed too infrequently (green lines), the method converges to a result  where the target relative mass is off. Looking at the lower panels, we see that the minimum statistical error is obtained with systematic resampling; moreover, the behaviour of uncertainty appears to be more stable than stratified. 

As a side note, systematic resampling requires just a single random number, contrarily to multinomial and stratified which need $N$, making the former also more efficient from a computational point of view. This consideration, plus the experimental results we just discussed, strongly motivate the adoption of such method for the resampling step in our proposed algorithm. 

\paragraph{Discussion.} GMM in high dimension are challenging for the standard CD and PCD algorithms given in \ref{CD:algori} and \ref{PCD:algori}. The experimental results of this section also confirm the theoretical analysis prsented in Appendix~\ref{app:theo:pcd} below: CD is performing GD but on Fisher divergence rather than cross entropy, and as a result it incorrectly estimates the mass of the modes since Fisher divergence is insensitive to this quantity. On the other hand,  PCD causes cycles of mode collapse (see Figure~\ref{fig:GMM_2studscollapse} and the left panel of Figure~\ref{fig:GMM_2studs}), and the KL divergence does not decreases monotonically, since the protocol is not ensured to be gradient descent (right panel in Figure~\ref{fig:GMM_2studs}).

In this example, our Algorithm \ref{algori}  outperforms these standard methods as it is an implementation GD on cross-entropy. In particular, the estimation of the relative mass with Algorithm \ref{algori} is more accurate than with the CD and PCD algorithms (see Figure~\ref{fig:GMM_2studs}). Moreover, the computation of KL divergence via Jarzynski weights is fairly precise; in Figure~\ref{fig:GMM_2studs} the estimated KL and the exact one overlaps beyond the minimum values reached in with the CD and PCD algorithms. With regard to Figure~\ref{fig:GMM_2studsvariance}, the choice of the threshold for resampling does not appear to be decisive in this regime of hyperparameters; in fact, looking at the evolution of $\theta$ and of KL divergence, the overall behavior of Algorithm~\ref{algori} is not dramatically influenced by the choice of $c$. 

\begin{figure}[t]

  \centering
   \includegraphics[scale=0.23]{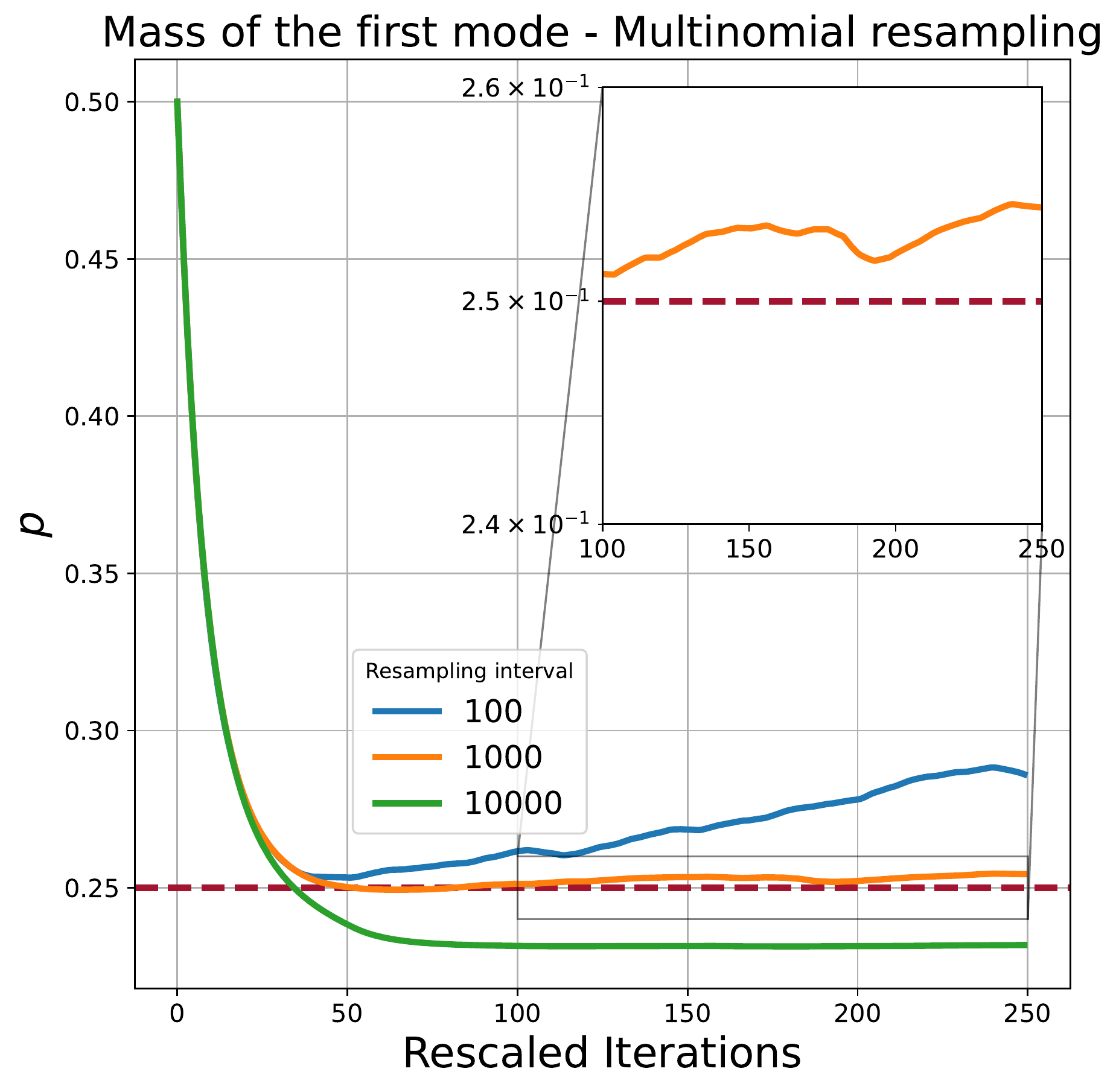}
   \includegraphics[scale=0.23]{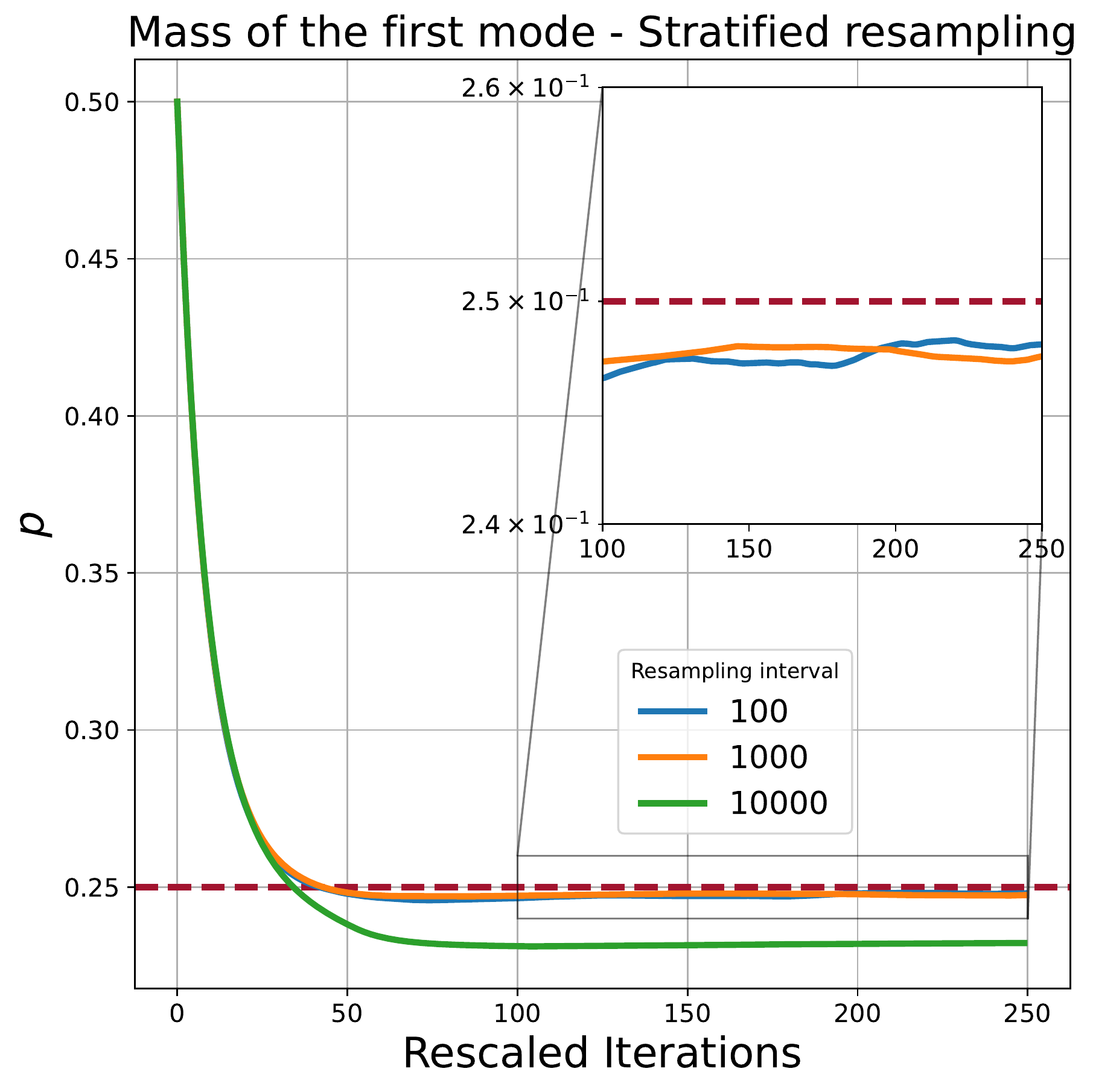}
   \includegraphics[scale=0.23]{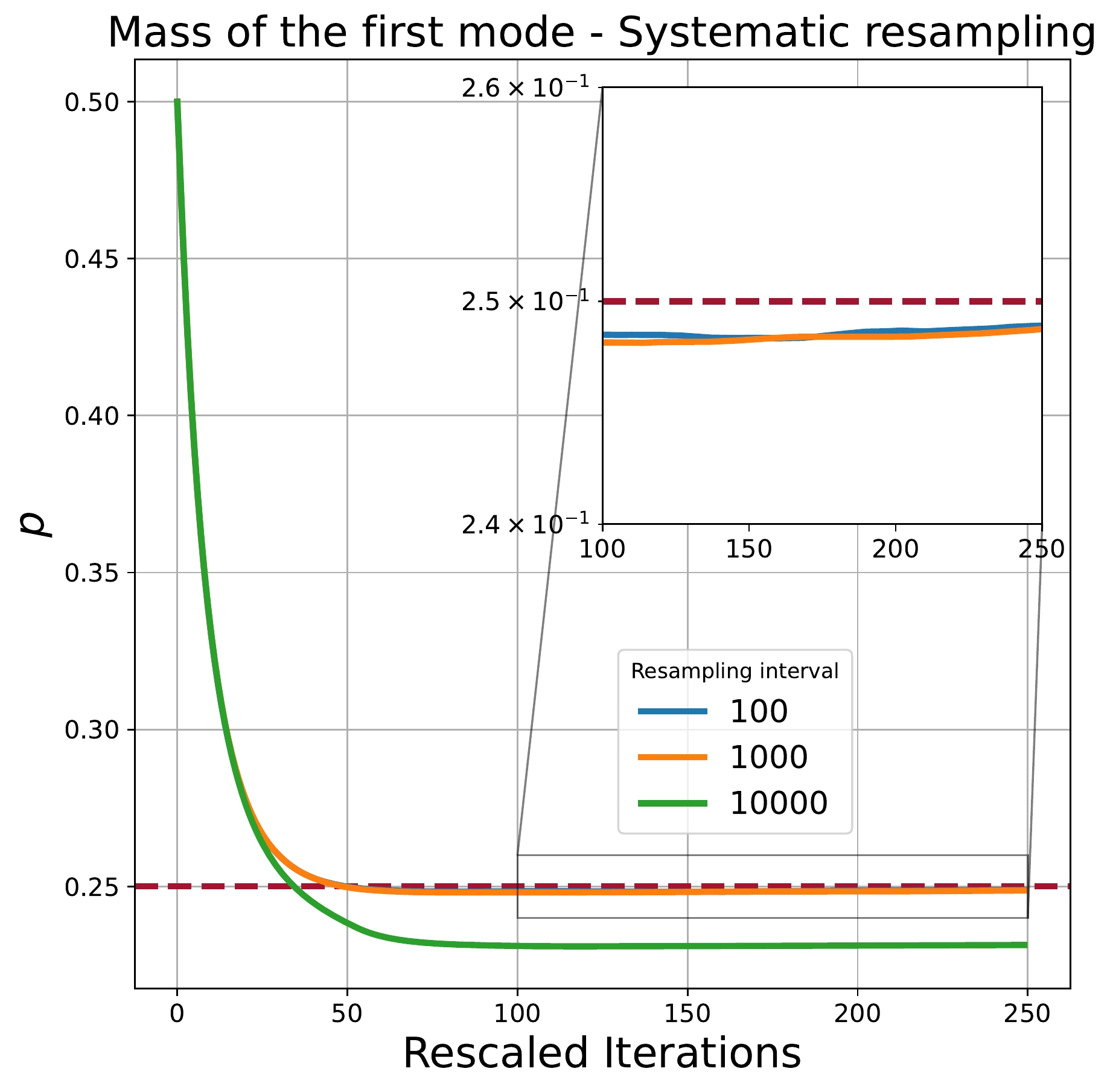}
   \\
   \includegraphics[scale=0.23]{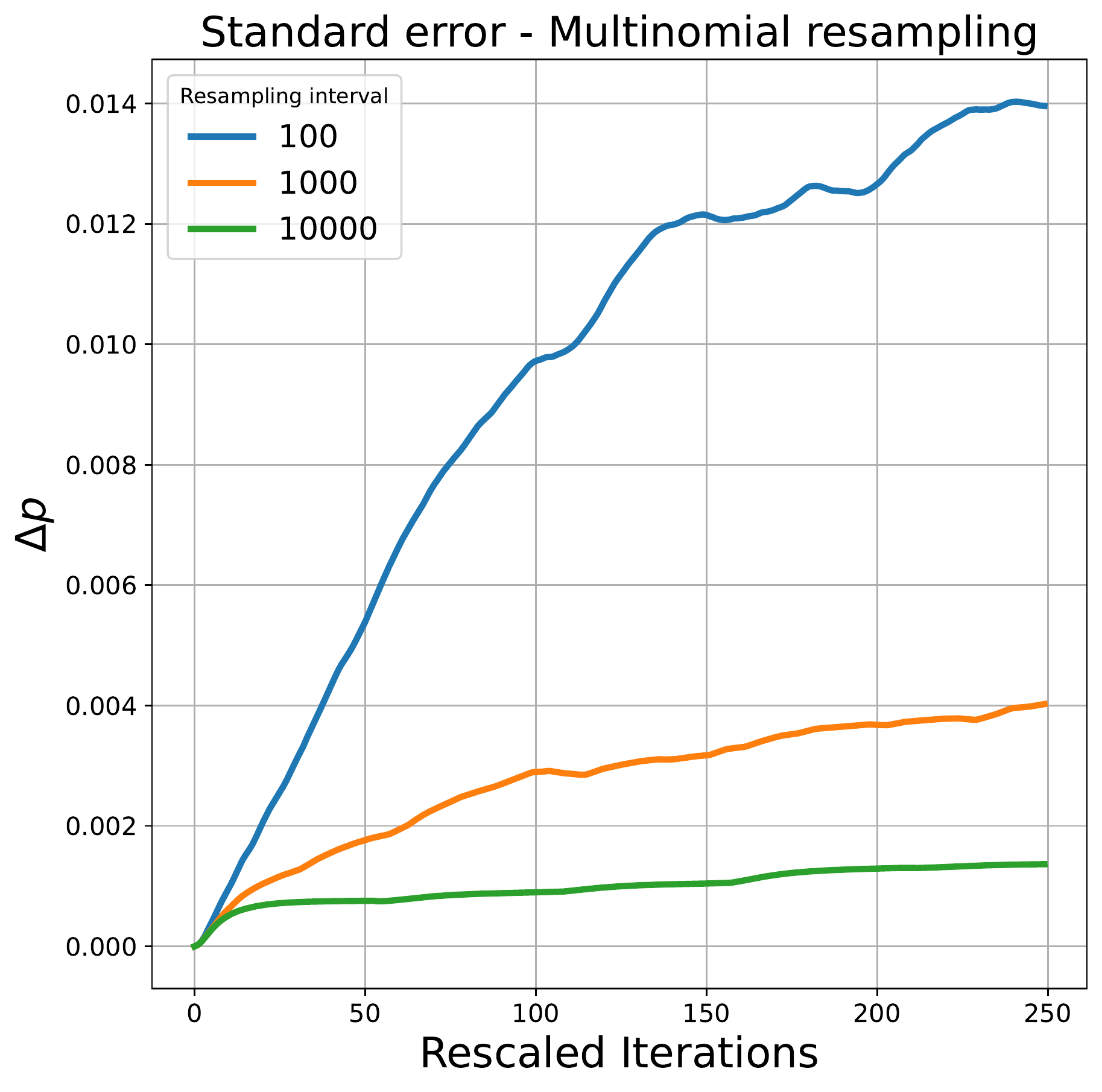}
   \includegraphics[scale=0.23]{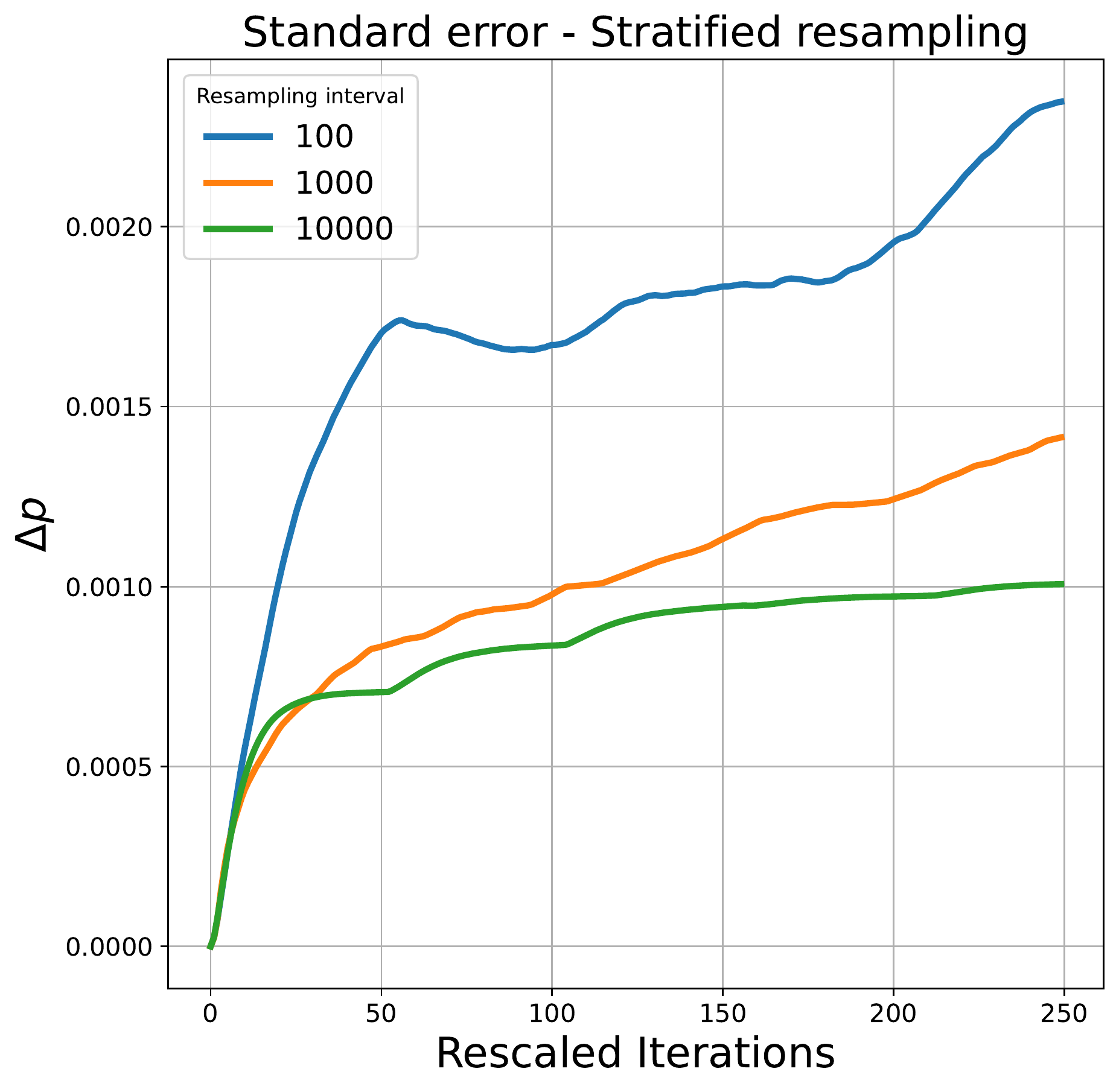}
   \includegraphics[scale=0.23]{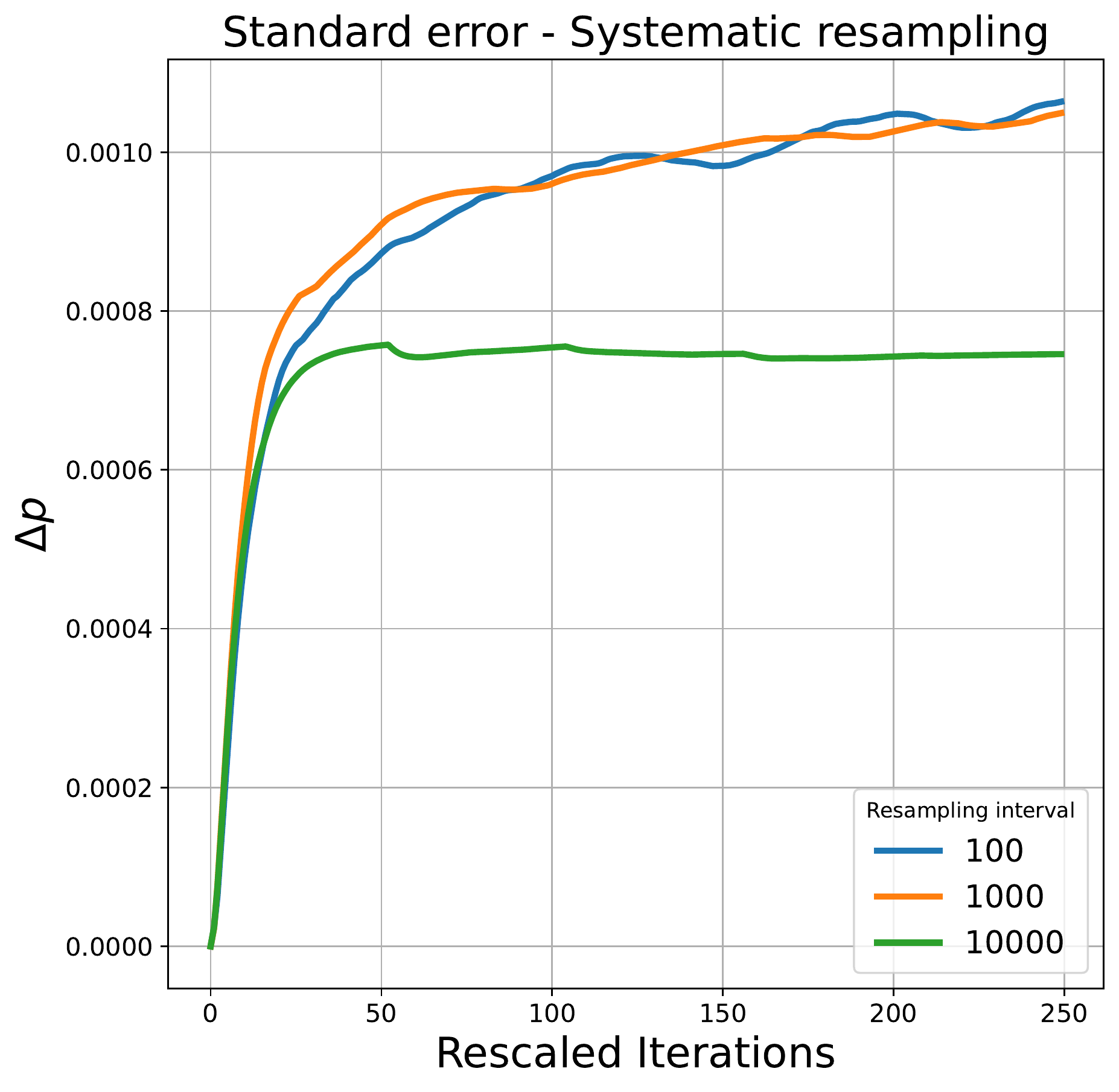}
  \caption{GMM in 50d with three resampling routines, full batch experiment with $N=10^5$ walkers, $50$ runs per each resampling interval, $4.8\cdot 10^5$ iterations. \textit{Upper panels:  }relative mass of the first mode for each resampling method. \textit{Lower panels:} standard error computed using the empirical standard deviation via $\sigma_{emp}/\sqrt {50}$.}
  \label{resamp_comp}
\end{figure}

\subsection{MNIST data set}
\label{sec:sm:mnist}

\begin{figure}[h]
  \centering
   \includegraphics[scale=0.35]{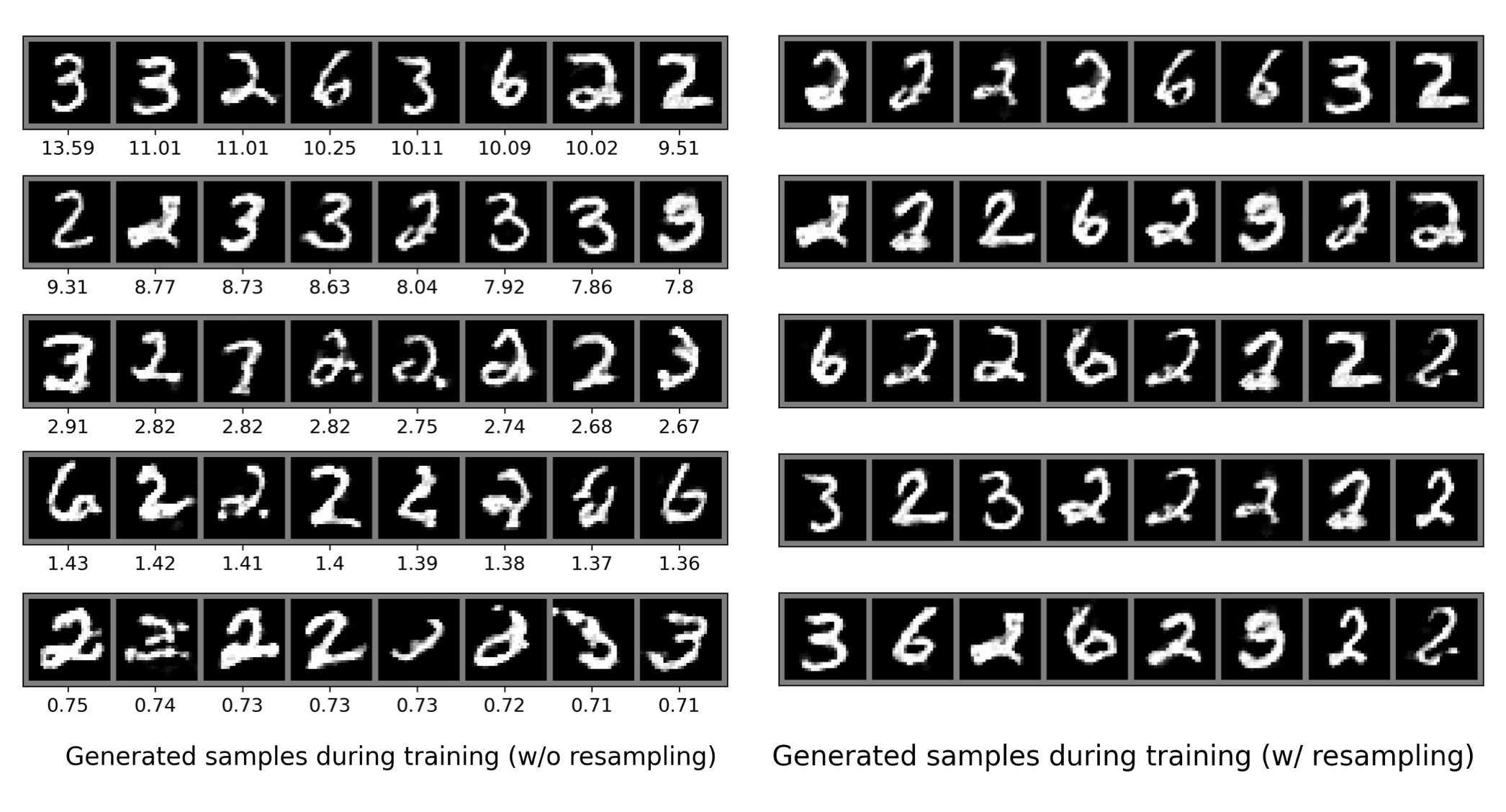}
  \caption{ \textit{MNIST dataset:} \textit{Left panel:} Images randomly chosen during the training from the entire set of generated samples with their associate Jarzynski weights. From left to right, top to bottom, the higher the Jarzynski weight is, the better the image quality is. \textit{Right panel:} Images obtained under the same training conditions, after resampling and continued training for 120 epochs.}
  \label{fig:MNIST_images_resampling}
\end{figure}

In this section, we provide additional details about our numerical experiments on the MNIST dataset using Algorithm~\ref{algori_mini}. First, as discussed in Section \ref{sec:mnist}, we confirm that the Jarzynski weights of the generated samples are directly related to the image quality: this is shown in the left panel of Figure~\ref{fig:MNIST_images_resampling}, where we display images along with their Jarzynski weights. Moreover, the right panel of Figure~\ref{fig:MNIST_images_resampling} indicates that resampling at the end of training using these weights helps improve sample quality.  Examples of generated images are shown in Figure~\ref{fig:mini_batch_mnist}.

\begin{figure}[t]

  \centering
   \includegraphics[scale=0.5]{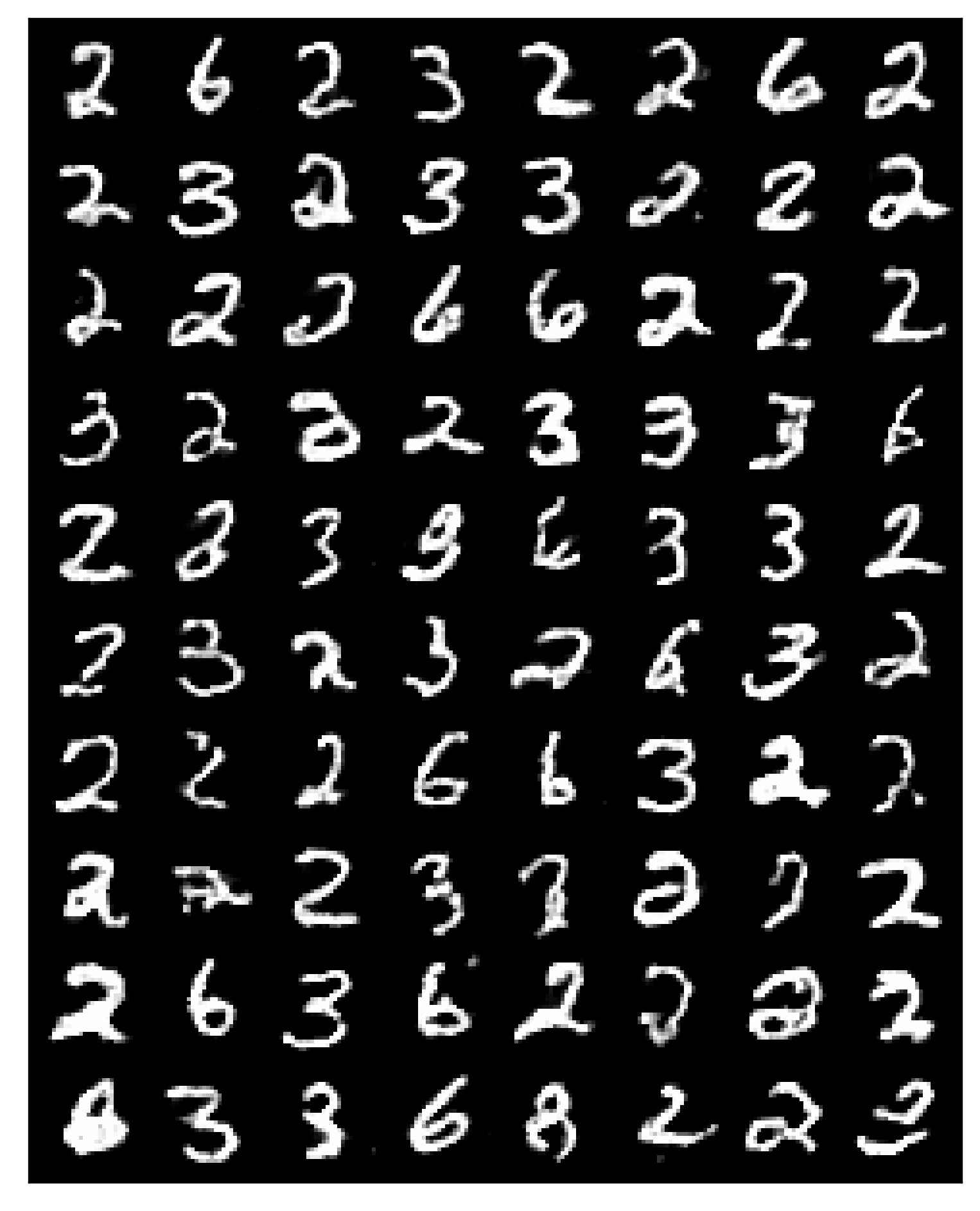}
   \includegraphics[scale=0.5]{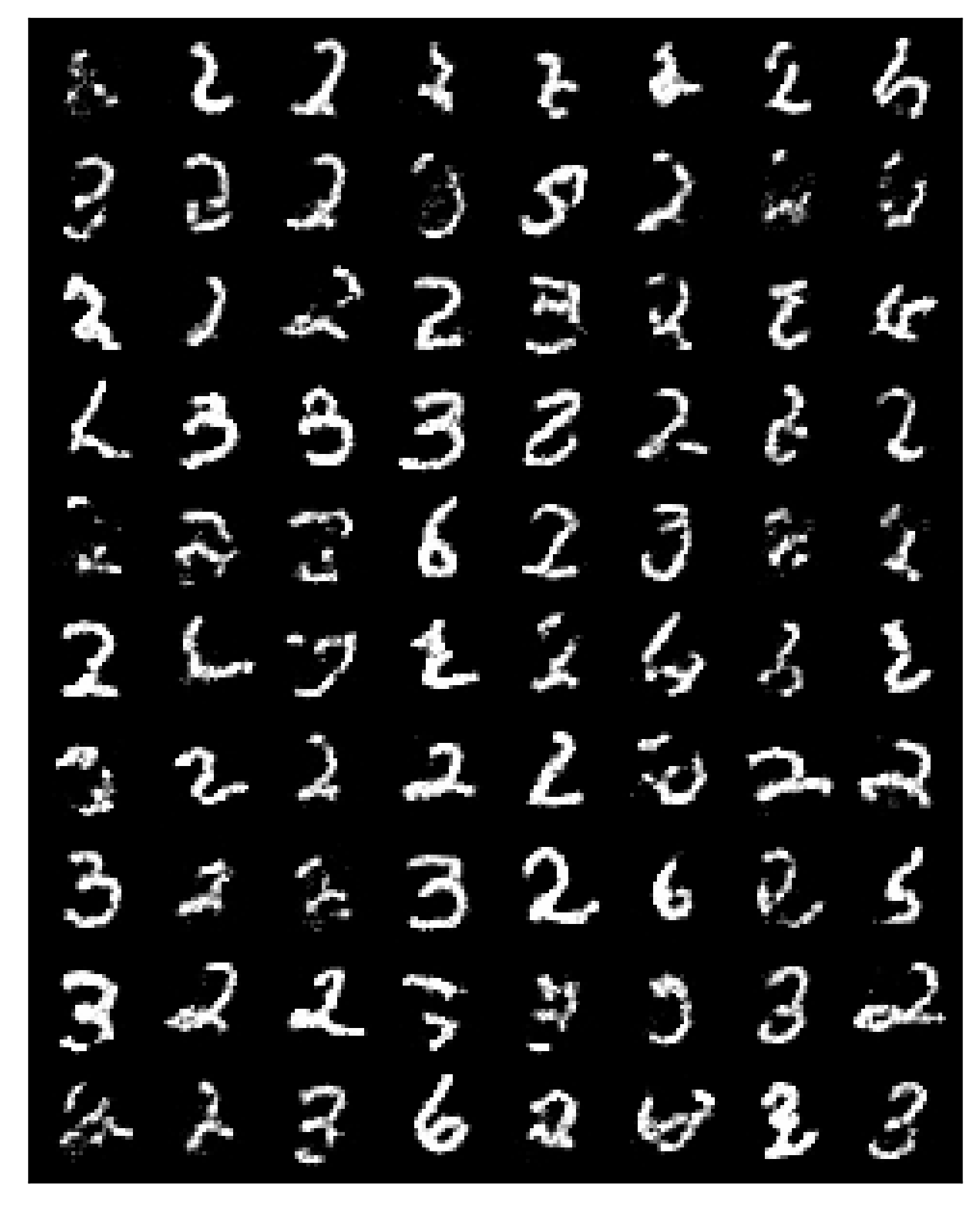}
  \caption{\textit{MNIST dataset:} \textit{Left panel}: images generated by the mini-batched version of the Algorithm~\ref{algori_mini}. \textit{Right panel}: Images generated from the PCD with mini-batches. }
  \label{fig:mini_batch_mnist}
\end{figure}

\subsection{CIFAR-10}
\label{app:cifar-10}

A key component of the success of our method  is the resampling step. In Figure~\ref{fig:ESS_cifar} we plot the Effective Sample Size (ESS): each peak in Figure~\ref{fig:ESS_cifar} indicates a step of resampling of all the samples. We note that training EBMs with mini-batches has the side effect of a rapid loss of the ESS, which measures the sample quality. So, doing resampling with a proper criterion and a reliable resampler is necessary.  

\begin{figure}[t]
  \centering
   \includegraphics[scale=0.5]{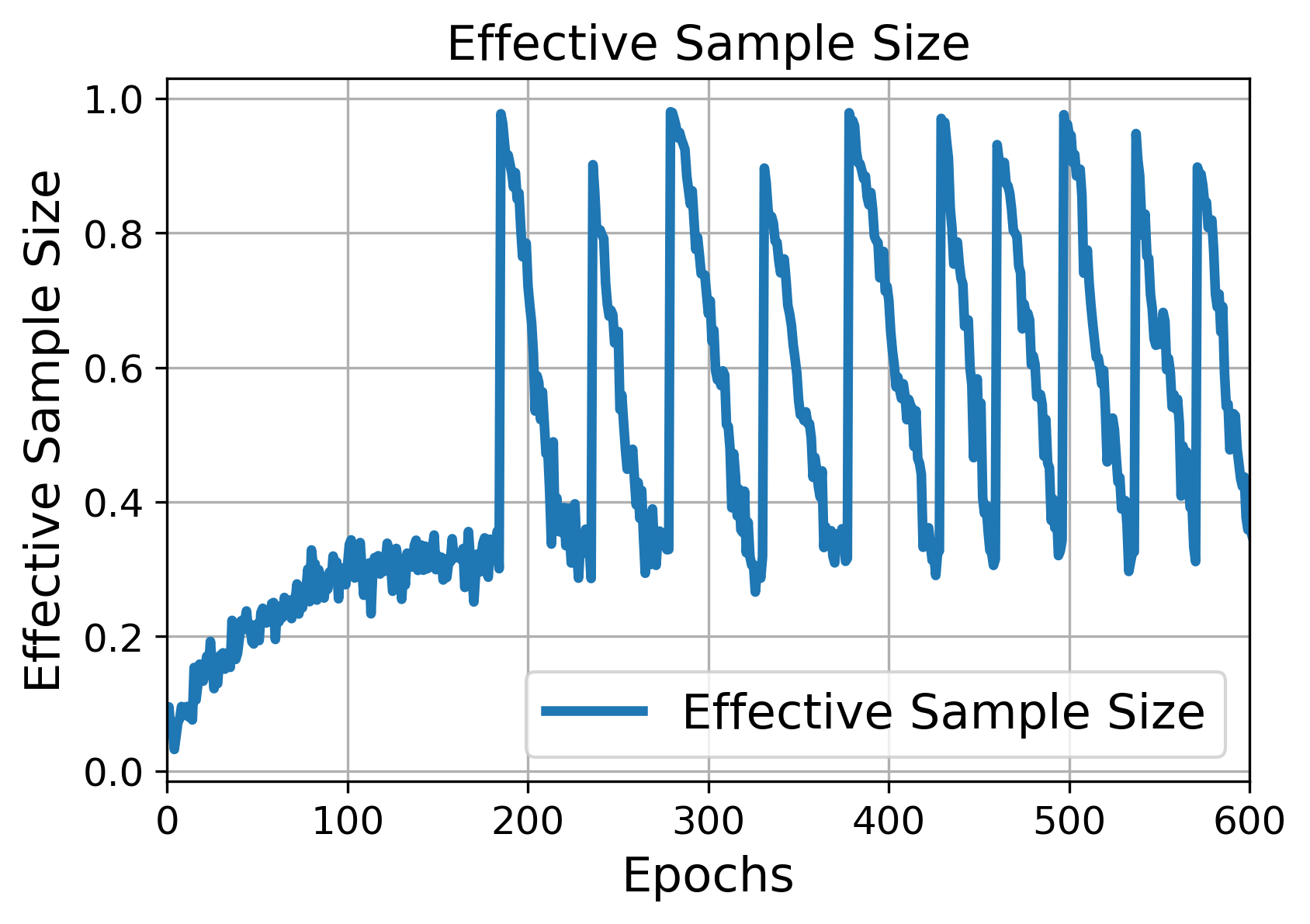}
  \caption{\textit{CIFAR-10 dataset:} The Effective Sample Size (ESS) during the training with Algorithm~\ref{algori_mini}. Each peak in the plot implies one step of resampling. Notice that the number of resampling in CIFAR-10 experiments is significantly larger than the one in MNIST experiments (~\ref{fig:MNIST_ESS}), which suggests the necessity of resampling in the scalability of our method.}
  \label{fig:ESS_cifar}
\end{figure}

\section{Theoretical analysis of mode collapse in simple 1d-GMM}
\label{app:theo:pcd}

In this appendix, we explain why and how mode collapse arises when learning multimodal distributions if we do not include the weights prescribed by Jarzynski equality, and why collapse does not arise with these weights included. To show this in the simplest setting, we work with a Gaussian mixture model: a mixture of two one-dimensional Gaussian densities, with the same variance equal to 1, with means $a,b\in\R$ of the modes: our aim is to learn the probability masses of these modes. The general picture is given in Appendix~\ref{app:collapse}, with details presented afterwards in Appendix~\ref{app:J:pcd}, where we assume that the learning dynamics is that in~\eqref{app:dyn}, with no walkers and direct access to the distributions $\rho_{z(t)}$ and $\rho_*$,  and Appendix~\ref{app:gs:walk}, where we quantify  the stochastic fluctuations induced by empirical estimation.

\subsection{Mode collapse in Gaussian Mixtures}
\label{app:collapse}

\subsubsection{Target distribution and model parameterization}
As target density we take
\begin{equation}\label{app:1:rhostar}
    \rho_*(x) = \frac{e^{-\frac12|x-a|^2} + e^{-z_*-\frac12|x-b|^2}}{\sqrt{2\pi}(1+e^{-z_*})}.
\end{equation}
Here $z_*\in \R$ parameterizes the mass of the second mode relative to the first and is the sole parameter of interest. The proportion of samples in both modes is indeed 
\begin{align}
p_* := \frac{1}{1 + e^{-z_*}},\qquad q_* := 1-p_* = \frac{e^{-z_*}}{1+e^{-z_*}}.
\end{align}
Our point is that when both modes are separated by very low-density regions, learning $z_*$ without weight correction leads to an incorrect estimation of the mode probabilities (`no-learning') or to mode collapse depending on the initialization of the learning procedure, whereas using the Jarzynski correction does not. From now on, we will suppose that the modes are separated in the following sense: 
\begin{equation}
\label{app:sepmodes}\tag{A1}
    |a-b|=10, 
\end{equation}
which will be enough for our needs. The more separated they are, the stronger our quantitative bounds will be. 

%\subsection{Continuous-time gradient descent with infinitely many walkers}\label{app:ct:gd}

The parameterization for our model potential $U_z$ is consistent with~\eqref{app:1:rhostar}:
\begin{equation}
    U_z(x) = -\log\left(e^{-\frac12|x-a|^2} + e^{-z - \frac12|x-b|^2}\right) 
\end{equation}
and the associated partition function and free energy are
\begin{equation}
    Z_z = \sqrt{2\pi}\left(1+e^{-z}\right), \qquad F_z = -\log Z_z = -\log(1 + e^{-z}) - \frac{1}{2}\log(2\pi).
\end{equation}
The normalized probability density associated with $U_z(x)$ is thus $\rho_z(x) = e^{-U_z(x) + F_z}$. 
% Here and in the sequel, we will note $\PP_z, \E_z$ to denote the probability and expectation with respect to the probability density $\rho_z$. 

\subsubsection{Learning procedures}
\label{app:learn}

Gradient descent on the cross entropy leads to the following continuous-time dynamics:
\begin{equation}\label{app:dyn}
    \dot{z}(t) = \E_{z(t)}[\partial_z U_{z(t)}] - \E_*[\partial_z U_{z(t)}].
\end{equation}
For simplicity, we will start this ODE at~$z(0) = 0$. It corresponds to a proportion of $\frac12$ for both modes. 

The gradient descent \eqref{app:dyn} is an ideal situation where the expectations $\E_{z(t)}, \E_*$ can be exactly analyzed. In practice however,  the two terms of \eqref{app:dyn} are estimated; the second term using a finite number of training data $\{x_*^i\}_{i=1}^n$, and the first one using a finite number of walkers $\{X_t^i\}_{i=1}^N$, with associated weights $\{e^{A_t^i}\}_{i=1}^N$. For simplicity we set $N=n$ and the empirical GD dynamics is thus 
\begin{equation}\label{app:dyn:emp:1}
    \dot{z}(t) = \frac{\sum_{i=1}^n e^{A_t^i} \partial_z U_{z(t)}(X_t^i)}{\sum_{i=1}^n e^{A_t^i} } - \frac{\sum_{i=1}^n  \partial_z U_{z(t)}(x_*^i)}{n}
\end{equation}
and the walkers evolve under the Langevin dynamic
\begin{align}
    dX^i_t = -\alpha\nabla U_{z(t)}(X^i_t)dt + \sqrt{2\alpha}dW^i_t
\end{align}
for some fixed $\alpha>0$. Now the nature of the algorithm varies depending on how the walkers are initialized and the Jarzynski weights are evolved. 
\begin{enumerate}
    \item The standard PCD algorithm sets $X_0^i = x_*^i$, that is, the walkers are initialized at the data points, and the weights are not evolved, that is $A_t^i = 0$ at any time. 
    \item Alternatively, the walkers could be initialized at samples of the initial model: $X_0^i \sim \rho_{z(0)}$, with the weights not evolved. We refer to this algorithm as the \emph{umweighted} procedure. 
    \item Our algorithm~\ref{algori_mini} corresponds to  initializing the walkers at samples of the initial model, $X_0^i \sim \rho_{z(0)}$, and uses the Jarzynski rule~\eqref{eq:X:A:t} for the weights updates. 
\end{enumerate}
For simplicity we analyze the outcome of these algorithms in the continuous-time set-up, i.e. using~\eqref{app:dyn:emp:1}. This is an idealization of the actual algorithms, but it makes the analysis more transparent.

\subsubsection{Perturbative analysis}

Using simple approximations (see Appendix~\ref{app:theo:pcd} for details) , we show that in the three cases above, the dynamics \eqref{app:dyn:emp:1} can be seen as a perturbation of a simpler differential system whose qualitative behaviour fits with our numerical simulations. These systems depend on the initialization of the walkers and are thus prone to small stochastic fluctuations. We introduce: 
\begin{itemize}[leftmargin=0.2in]
    \item $\hat{q}_*$ the proportion of training data $\{x^i_*\}$ that are close to mode $b$ (a more precise definition is given in Appendix~\ref{app:theo:pcd}), and we define $\hat{z}_*$ as satisfying $\hat{q}_* = e^{-\hat{z}_*}/(1 + e^{-\hat{z}_*})$;
    \item $\hat{q}(0)$ the proportion of walkers at initialization that are close to $b$, and $\hat{p}(0) = 1 - \hat{q}(0)$. 
\end{itemize}
Practically, $\hat{q}_*$ is a random variable centered at $q_*$ and with fluctuations of order $n^{-1/2}$, and $\hat{q}(0)$ is centered at $e^{-z(0)}/(1 + e^{-z(0)})=\frac12$ with fluctuations of the same order. In the limit where $n$ is large, they can be neglected, but in more realistic training settings, the use of mini-batches  leads to small but non-negligible fluctuations, as will be clear in Equation~\eqref{app:pert:jar}. 

The arguments in Appendix~\ref{app:theo:pcd} lead to the following approximations: 
\begin{itemize}[leftmargin=0.2in]
\item In the model-initialized algorithm without Jarzynski correction (`unweighted`), \eqref{app:dyn:emp:1} is a perturbation of 
\begin{equation}
\label{app:noweight}
    \dot{z}_{\mathrm{unw}}(t) = \hat{q}(0) - \hat{q}_*.
\end{equation}
This system has no stable fixed point since the RHS no longer depends on $z_{\mathrm{unw}}(t)$, leading to a linear drift $z_{\mathrm{unw}}(t) = (\hat{q}(0) - \hat{q}_*)t$ and thus to a divergence of $z_{\mathrm{unw}}(t)$. Consequently, the mass of the second mode, $q(t) = e^{-z_{\mathrm{unw}}(t)}/(1 + e^{-z_{\mathrm{unw}}(t)})$, converges to 0 or 1. However, on longer time scales, this drift leads to a sudden transfer of all walkers in one the modes, then to a complete reversal of the drift of $z_{\mathrm{unw}}$ which then diverges in the opposite direction, leading to a succession of alternating mode-collapses; see also Figure~\ref{fig:GMM_2studscollapse} and Remark \ref{remark:oscillations} below.  
    \item In the continuous-time version of the standard PCD algorithm, \eqref{app:dyn:emp:1} is a perturbation of the same ODE as \eqref{app:noweight}. However, in this context, since the initial data $\{X_0^i\}$ and the training data $\{x_*^i\}$ are identical, we have $\hat{q}(0) = \hat{q}_*$, and  
\begin{equation}
\label{app:pert:pcd}
    \dot{z}_{\mathrm{pcd}}(t) = 0.
\end{equation}
The parameters do not evolve and the system is stuck at $z_{\mathrm{pcd}}(0)$ (`no-learning'). Note however that in this version of PCD, the walkers are initialized at the \emph{full} training data. In practice, the number of walkers is smaller than the number of training data so that one often uses a small batch of training data to initialize them; in this case we can still have $\hat{q}(0)\neq \hat{q}_*$, falling back to the first case above. 
\item The continuous-version of Contrastive-Divergence is equivalent to the well-known Score-Matching technique (\cite{hyvarinen2005estimation}). In this context, \eqref{app:dyn:emp:1} is a perturbation of 
\begin{equation}
    \dot{z}_{\mathrm{cd}}(t) = 0, 
\end{equation}
leading to the same `no-learning' phenomenon. 
\item In the model-initialized algorithm using the Jarzynski correction, \eqref{app:dyn:emp:1} is a perturbation of 
\begin{equation}\label{app:pert:jar}
    \dot{z}_{\mathrm{jar}}(t) = \frac{\hat{q}(0)e^{-z_{\mathrm{jar}}(t)}}{\hat{p}(0) + \hat{q}(0)e^{-z_{\mathrm{jar}}(t)}} - \frac{e^{-\hat{z}_*}}{1 + e^{-\hat{z}_*}}
\end{equation}
This system has a unique stable point $\tilde{z}_*$ satisfying $e^{-\tilde{z}_*} = \hat{p}(0)e^{-\hat{z}_*}/\hat{q}(0)$, hence
\begin{equation}\label{app:tildez_1}\tilde{z}_* = \hat{z}_* + \log\left(\frac{\hat{q}(0)}{\hat{p}(0)} \right). \end{equation}
Note that the second term at the right hand side is a small correction of order $O(n^{-1/2})$ since $\hat q(0)/\hat p(0) = 1+ O(n^{-1/2})$. 
\end{itemize}
These predictions are in good agreement with the results of the simulations shown in Figure~\ref{fig:jpcd}. 

\begin{figure}
     \centering
     \hfill
     \begin{subfigure}[b]{0.3\textwidth}
         \centering
         \includegraphics[width=\textwidth]{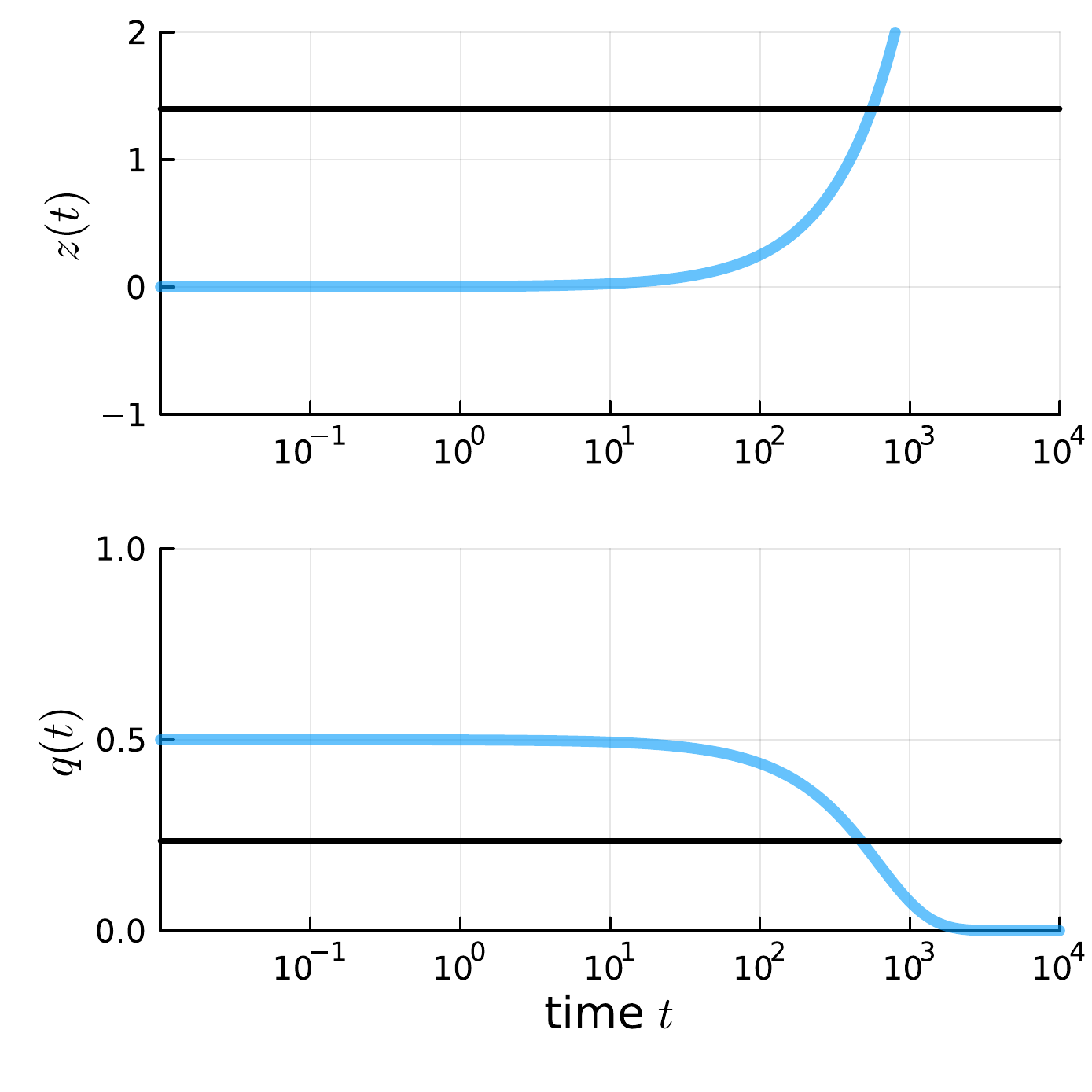}
         \caption{Unweighted dynamics: \eqref{app:noweight}}
         \label{fig:pcd_model}
     \end{subfigure}
    \hfill
     \begin{subfigure}[b]{0.3\textwidth}
         \centering
         \includegraphics[width=\textwidth]{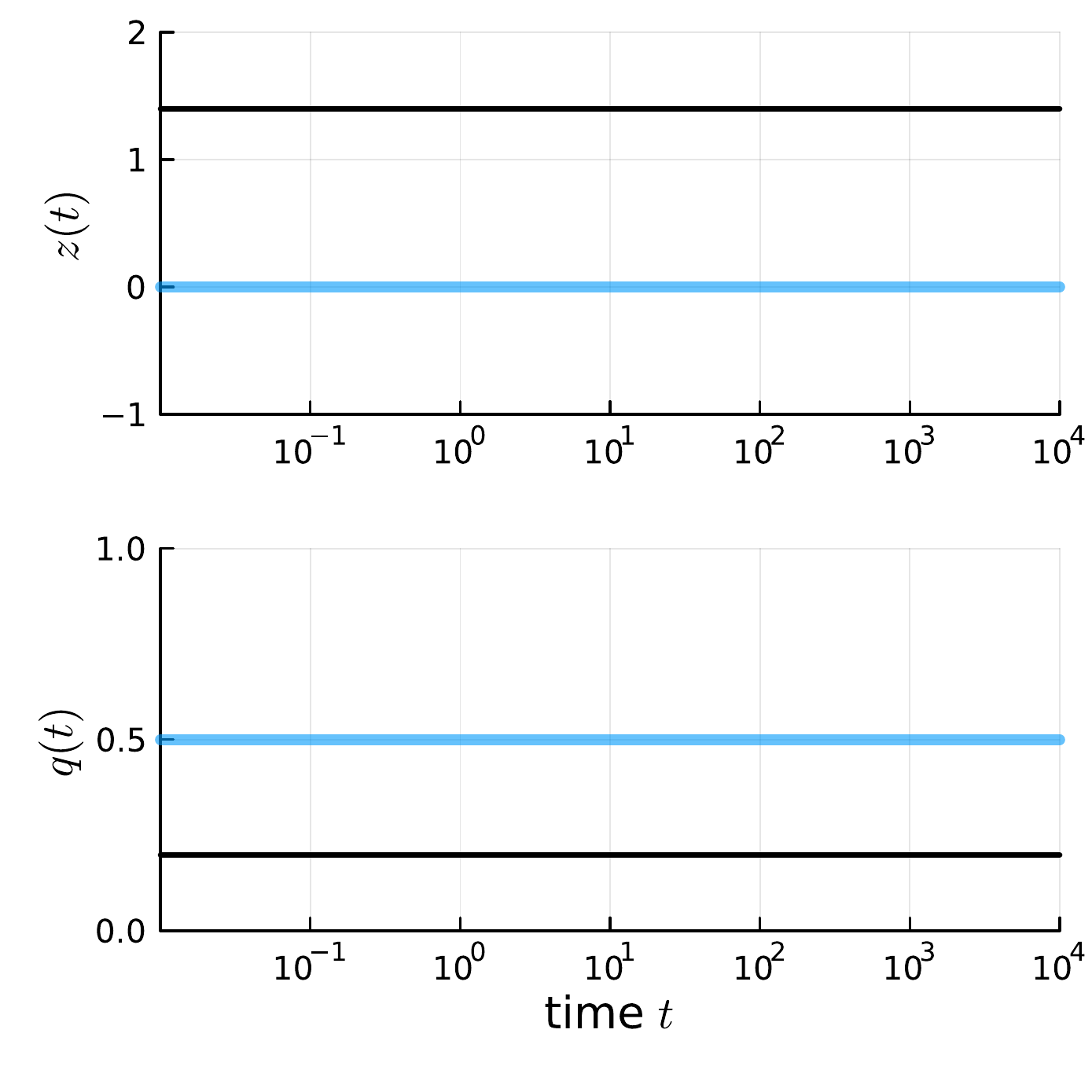}
         \caption{PCD: \eqref{app:pert:pcd}}
         \label{fig:pcd_data}
     \end{subfigure}
     \hfill
     \begin{subfigure}[b]{0.3\textwidth}
         \centering
         \includegraphics[width=\textwidth]{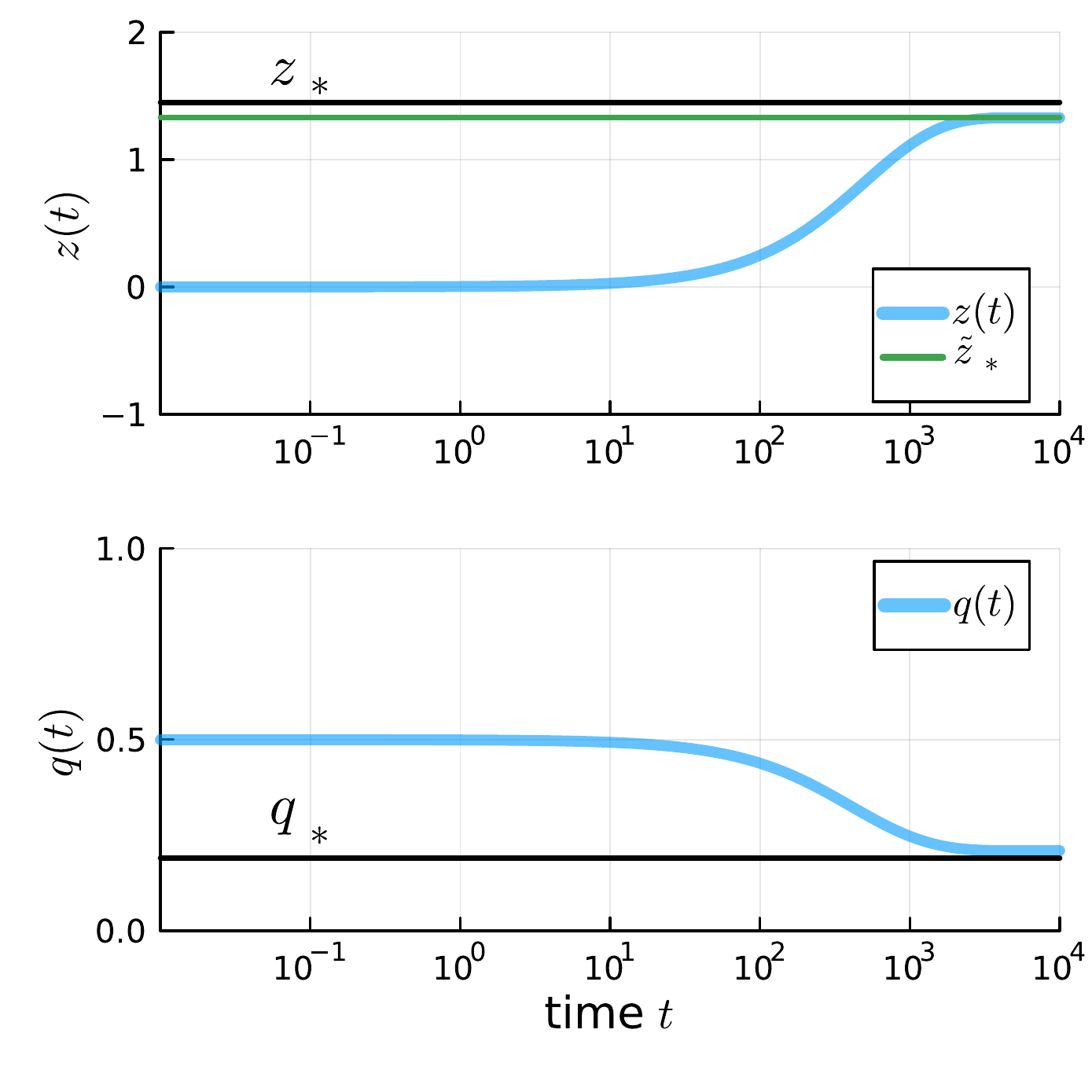}
         \caption{Jarzynski-corrected: \eqref{app:pert:jar}}
         \label{fig:jpcd}
     \end{subfigure}
        \caption{Behaviors of the three learning algorithms leading to the ODEs \eqref{app:pert:pcd}, \eqref{app:noweight}, and~\eqref{app:pert:jar}  The solid black lines represent the target parameters, $z_*$ on top and $q_* = e^{-z_*}/(1+e^{-z_*})$ on the bottom. The blue lines show $z(t)$ (top) and the corresponding $q(t) = e^{-z(t)}/(1 + e^{-z(t)})$ (bottom). Figure \ref{fig:pcd_data}: PCD algorithm, which leads to no-learning, with the weights not evolving at all in agreement with the behavior predicted by~\eqref{app:pert:pcd}. Figure \ref{fig:pcd_model}: Dynamics without weights, that leads to mode collapse, with $z(t)$ diverging towards $+\infty$ in agreement with~\eqref{app:noweight} when the parameter $\hat{\gamma} = \hat{q}(0) - \hat{q}_*$ is positive.  Figure \ref{fig:jpcd}: Jarzynski-corrected dynamics with weights included, \eqref{app:pert:jar}. The green line represents \eqref{app:tildez_1} and features the small $\log [\hat q(0)/\hat p(0)]$ correction due to the stochastic fluctuations at initialization between the walkers and the model.  In all cases, we used $200$ walkers and data points. The continuous-time dynamics were discretized using the  Euler-Maruyama scheme with a step-size $h=0.01$ up to time $T=10^4$. }
        \label{fig:three graphs}
\end{figure}

\subsection{Analysis of \eqref{app:dyn}}
\label{app:J:pcd}

\subsection*{Using the Jarzynski correction}

In continuous time, the Jarzynski-correction described in Proposition~\ref{prop:2} exactly realizes~\eqref{app:dyn}. However, even for simple mixtures of Gaussian densities, the expectations in \eqref{app:1:rhostar} do not have a simple closed-form that would allow for an exact solution. That being said, when $a$ and $b$ are sufficiently well-separated, the system can be seen as a perturbation of a simpler system whose solution can be analyzed. 

First, we note that 
\begin{align}\label{app:partial_zUz}
    \partial_z U_z(x) = \frac{e^{-z}e^{ - \frac{|x-b|^2}{2}}}{e^{-U_z(x)}}. 
\end{align}
The main idea of the approximations to come is that $\partial_z U_z(x)$ is almost zero when $x$ is far from $b$ (and in particular, close to $a$), and is almost 1 if $x$ is close to $b$. The next lemma quantifies this; from now on we will adopt the notation 
\begin{align}
    I_a = [a-4, a+4] &&\text{and}&&I_b = [b-4, b+4]. 
\end{align}

\begin{Lemma}\label{app:lemm:ineq}
Under Assumption \eqref{app:sepmodes}, for any $v\in\R$, 
\begin{itemize}
\item if $x\in I_a$, then $\partial_z U_v(x)\leq e^{-v-10}$;
\item if $x\in I_b$, then $|\partial_z U_v(x) - 1| \leq e^{v-10}$.  
\end{itemize}
\end{Lemma}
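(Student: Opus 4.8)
## Proof proposal for Lemma (bounds on $\partial_z U_v(x)$)

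The plan is to start from the exact formula \eqref{app:partial_zUz}, namely
\begin{equation*}
    \partial_z U_v(x) = \frac{e^{-v} e^{-\frac12 |x-b|^2}}{e^{-\frac12|x-a|^2} + e^{-v-\frac12|x-b|^2}},
\end{equation*}
and simply bound the ratio from above and below in each of the two regimes $x \in I_a$ and $x\in I_b$. Write $r(x) = e^{-v}e^{-\frac12|x-b|^2}$ for the numerator and note that the denominator equals $e^{-\frac12|x-a|^2} + r(x)$, so that $\partial_z U_v(x) = r(x)/(e^{-\frac12|x-a|^2} + r(x)) \in (0,1)$ always.

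\textbf{Case $x \in I_a$.} Here $|x-a|\le 4$, so $e^{-\frac12|x-a|^2} \ge e^{-8}$, while by the triangle inequality and Assumption~\eqref{app:sepmodes}, $|x-b| \ge |a-b| - |x-a| \ge 10 - 4 = 6$, so $e^{-\frac12|x-b|^2} \le e^{-18}$ and hence $r(x) \le e^{-v-18}$. Since $\partial_z U_v(x) \le r(x)/e^{-\frac12|x-a|^2} \le e^{-v-18}/e^{-8} = e^{-v-10}$, the first bound follows. (One should double-check the numerology: with the stated $I_a=[a-4,a+4]$ and $|a-b|=10$, the worst case gives exponent $-18+8 = -10$, matching the claim exactly, so the constants in the definition of $I_a,I_b$ are chosen precisely to make this clean.)

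\textbf{Case $x \in I_b$.} Now I bound $1 - \partial_z U_v(x) = e^{-\frac12|x-a|^2}/(e^{-\frac12|x-a|^2} + r(x))$. Here $|x-b|\le 4$ gives $r(x) \ge e^{-v}e^{-8}$, while $|x-a| \ge |a-b|-|x-b| \ge 6$ gives $e^{-\frac12|x-a|^2} \le e^{-18}$. Therefore $1 - \partial_z U_v(x) \le e^{-\frac12|x-a|^2}/r(x) \le e^{-18}/(e^{-v-8}) = e^{v-10}$, which is the second bound. Since $\partial_z U_v(x)\le 1$ this controls $|\partial_z U_v(x) - 1|$.

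There is essentially no obstacle here — it is a routine two-sided estimate on a logistic-type ratio, and the only thing to be careful about is keeping the triangle-inequality bookkeeping and the exponent arithmetic consistent with the chosen radii ($4$) and separation ($10$). The one genuine modeling remark worth making explicit is that the result holds for \emph{every} $v \in \R$ with no restriction, because the $e^{-v}$ factor is exactly what appears (and is compensated) in the two estimates; this uniformity in $v$ is what makes the lemma usable along the whole learning trajectory $z(t)$ in the subsequent perturbative analysis.
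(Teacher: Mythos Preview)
Your proof is correct and follows exactly the same approach as the paper: bound the numerator and denominator of the expression \eqref{app:partial_zUz} separately using $|x-a|\le 4$, $|x-b|\ge 6$ (respectively $|x-b|\le 4$, $|x-a|\ge 6$) from the triangle inequality and Assumption~\eqref{app:sepmodes}. The paper's own proof is terser (it only spells out the first case and says the second is ``proved similarly''), but the arithmetic and the idea are identical.
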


\begin{proof}
    From \eqref{app:partial_zUz}, we see that if $x\in I_a$ then $|x-b|>6$ and consequently $e^{-(x-b)^2/2}\leq e^{-18}$. But the denominator of \eqref{app:partial_zUz} is itself greater than $e^{-(x-a)^2/2}$ which is itself greater than $e^{-4^2/2} = e^{-8}$ since $|x-a|<4$. This gives the first bound and the second is proved similarly. 
\end{proof}

We recall that if $\xi \sim \mathcal{N}(0,1)$, then $\mathbb{P}(|\xi|>t)\leq e^{-t^2/2}/t$, hence
\begin{equation}
    \label{app:deviation_gaussian}
    \PP(|\xi|>4)\leq e^{-4^2/2}/4 \leq 0.0001. 
\end{equation}

\begin{Lemma}Let $u,v\in \mathbb{R}$. Under Assumption \eqref{app:sepmodes}, we have  
    \begin{equation}\label{app:approx}
        \left|\E_u [\left. \partial_z U_z \right|_{z=v}] - \frac{e^{-u}}{1 + e^{-u}}\right| \leq \varepsilon
    \end{equation}
    where $|\varepsilon|\leq 0.0002 + 2e^{-10}e^{|v|}$. 
\end{Lemma}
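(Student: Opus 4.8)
The plan is to estimate the quantity $\E_u[\partial_z U_z|_{z=v}] = \int_{\R} \partial_z U_v(x)\,\rho_u(x)\,dx$ by splitting the domain of integration into three pieces: the interval $I_a = [a-4,a+4]$ around the first mode, the interval $I_b=[b-4,b+4]$ around the second mode, and the complement $\R\setminus(I_a\cup I_b)$. On $I_a$ and $I_b$ we control the integrand $\partial_z U_v(x)$ using Lemma~\ref{app:lemm:ineq}, which says $\partial_z U_v(x)\le e^{-v-10}$ on $I_a$ and $|\partial_z U_v(x)-1|\le e^{v-10}$ on $I_b$. On the complement, we control not the integrand (which lies in $[0,1]$ anyway) but the \emph{mass} $\rho_u$ places there, using the Gaussian tail bound~\eqref{app:deviation_gaussian}.

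First I would write $\rho_u$ explicitly as the two-component mixture with weights $p_u = 1/(1+e^{-u})$ on the Gaussian $\mathcal N(a,1)$ and $q_u = e^{-u}/(1+e^{-u})$ on $\mathcal N(b,1)$. The target comparison value $\frac{e^{-u}}{1+e^{-u}} = q_u$ should be thought of as ``the probability mass near $b$,'' so the heuristic is: $\partial_z U_v \approx \mathbf 1_{x \text{ near } b}$, and integrating against $\rho_u$ gives $\approx q_u$. To make this precise, I would compute $\int_{I_b}\partial_z U_v(x)\rho_u(x)dx$, replace $\partial_z U_v$ by $1$ at a cost of at most $e^{v-10}\int_{I_b}\rho_u(x)dx \le e^{v-10}$, obtaining $\int_{I_b}\rho_u(x)dx$ up to that error; then observe $\int_{I_b}\rho_u(x)dx = q_u\,\PP(|\xi|\le 4) + p_u\,\PP(|a-b+\xi|\le 4)$, where $\xi\sim\mathcal N(0,1)$. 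Since $|a-b|=10$ by~\eqref{app:sepmodes}, the event $\{|a-b+\xi|\le 4\}$ forces $|\xi|\ge 6$, which has probability $\le e^{-18}/6$, negligible; and $\PP(|\xi|\le 4)\ge 1-0.0001$ by~\eqref{app:deviation_gaussian}. So $\int_{I_b}\rho_u(x)dx = q_u + O(0.0001)$. The $I_a$ contribution is bounded by $e^{-v-10}\int_{I_a}\rho_u(x)dx\le e^{-v-10}$, and the complement contribution is bounded (using $0\le\partial_z U_v\le 1$) by $\int_{\R\setminus(I_a\cup I_b)}\rho_u(x)dx$, which by the same Gaussian-tail reasoning applied to both mixture components is at most $2\cdot 0.0001$.

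Collecting the three error terms: from $I_b$ we pick up $e^{v-10}$ (replacing the integrand by $1$) plus a term of size $\lesssim 0.0001$ (the mass defect of $I_b$), from $I_a$ we pick up $e^{-v-10}$, and from the complement at most $0.0002$. Bounding $e^{v-10}+e^{-v-10}\le 2e^{-10}e^{|v|}$ and collecting the purely numerical Gaussian-tail pieces into $0.0002$ gives $|\varepsilon|\le 0.0002 + 2e^{-10}e^{|v|}$, as claimed. I would want to be a little careful bookkeeping the numerical constants — e.g.\ checking that $q_u\PP(|\xi|\le 4) \ge q_u - q_u\cdot 0.0001 \ge q_u - 0.0001$ and that the stray $p_u\PP(|\xi|\ge 6)$ term is genuinely absorbed — but these are all crude and there is ample slack.

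The main obstacle, such as it is, is purely organizational rather than conceptual: making sure the decomposition is set up so that on each piece exactly one of the two available tools (the pointwise bound on $\partial_z U_v$ from Lemma~\ref{app:lemm:ineq}, versus the Gaussian tail bound on $\rho_u$) is applied, and that the ``wrong mode'' cross term (mass of the $\mathcal N(a,1)$ component that lands in $I_b$, and vice versa) is handled by the separation hypothesis $|a-b|=10$ rather than being double-counted or dropped. There is no real analytic difficulty; the separation assumption~\eqref{app:sepmodes} is exactly what decouples the two modes, and the constant $e^{-10}$ in the statement is the ``leakage'' scale it produces.
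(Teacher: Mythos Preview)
Your proposal is correct and follows essentially the same strategy as the paper's proof. The only difference is organizational: the paper first decomposes $\rho_u$ as the mixture $p_u\,\mathcal N(a,1)+q_u\,\mathcal N(b,1)$ and then, for each Gaussian component, splits according to whether the sample lies in its own core interval or not; you instead split the domain into $I_a$, $I_b$, and the complement first, and then invoke the mixture structure to evaluate the $\rho_u$-mass of each piece. Both routes use the identical ingredients (Lemma~\ref{app:lemm:ineq} for the pointwise control of $\partial_z U_v$, and the tail bound~\eqref{app:deviation_gaussian} for the Gaussian mass), and your bookkeeping---once you note that $\rho_u\big(\R\setminus(I_a\cup I_b)\big)\le p_u\cdot 0.0001+q_u\cdot 0.0001=0.0001$ rather than $2\cdot 0.0001$---lands on the stated bound with the same slack.
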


\begin{proof}
    The integral is exactly given by 
    \begin{align}\label{app:pr:1}
        \frac{1}{1+e^{-u}}\mathbb{E}\left[\frac{e^{-v-(\xi_a-b)^2 / 2}}{e^{-U_v(\xi_a)}}\right] + \frac{e^{-u}}{1+e^{-u}}\mathbb{E}\left[\frac{e^{-v-(\xi_b-b)^2 / 2}}{e^{-U_v(\xi_b)}}\right]
    \end{align}
    where $\xi_a, \xi_b$ denote two Gaussian random variables with respective means $a, b$. By \eqref{app:deviation_gaussian}, $\xi_a$ and $\xi_b$ are respectively contained in $I_a=[a-4, a+4]$ and $I_b=[b-4, b+4]$ with probability greater than $0.999$.  Let us examine the first term of \eqref{app:pr:1}. The fraction inside the expectation is always smaller than 1, hence by Lemma \ref{app:lemm:ineq},
    $$\left.\partial_z U_z(\xi_a)\right|_{z=v} \leq \mathbf{1}_{\xi_a \notin I_a} + \mathbf{1}_{\xi_a \in I_a}\left.\partial_z U_z(\xi_a)\right|_{z=v} \leq \mathbf{1}_{\xi_a \notin I_a} + e^{-v - 10}. $$
    Consequently, by \eqref{app:deviation_gaussian}, the expectation is smaller than $0.0001 + e^{-v-10}$. The second expectation in \eqref{app:pr:1}
    is equal to $$ 1 - \E\left[\frac{e^{-(X_b-a)^2/ 2}}{e^{-U_v(\xi_b)}} \right]$$
    and by the same kind of analysis, the expectation here is smaller than $ 0.0001 + e^{v - 10}$. Gathering the bounds yields the result. 
\end{proof}

In particular, the right hand side of \eqref{app:dyn} can be approximated by $\frac{e^{-z(t)}}{1 + e^{-z(t)}} - \frac{e^{-z_*}}{1 + e^{-z_*}}$ up to an error term smaller than $0.0004 + 2e^{-10}(e^{|z(t)|} + e^{|z_*|})$. If $z(t),z_*$ are contained in a small interval $[-C,C]$ with, say, $C<5$, this error term is uniformly small in time, and one might see \eqref{app:dyn} as a perturbation of the following system: 
\begin{equation}
    \dot{z}(t) = \frac{e^{-z(t)}}{1 + e^{-z(t)}} - \frac{e^{-z_*}}{1 + e^{-z_*}}, 
\end{equation}
a system with only one fixed point at $z(t)=z_*$, the ground-truth solution. 

\subsection*{Mode collapse in absence of Jarzynski corection}
Now let us analyze in a similar fashion the dynamics without reweighting. Here, \eqref{app:dyn} is replaced by 
\begin{equation}\label{app:dynPCD} \dot{z}(t) = \E_{z(t)}[\partial_z U_{z(t)}(X_t)] - \E_*[\partial_z U_{z(t)}], \end{equation}
where the process $X_t$ solves
$$dX_t = -\alpha \nabla U_{z(t)}(X_t)dt + \sqrt{2\alpha}dW_t, \qquad X_0 \sim \rho_{z(0)} $$ 
The probability density function $\rho(t,x)$ of $X_t$ satisfies a Fokker-Planck equation 
$$ \partial_t \rho =  \alpha\nabla\cdot\left( \nabla U_{z(t)}(x) \rho + \nabla \rho\right), \quad \rho(t=0) = \rho_{z(0)}$$
which, in full generality, is hard to solve exactly, and thus exact expressions for the first term of \eqref{app:dynPCD} are intractable. However, depending on whether $X_0$ is close to $a$ or $b$, the process $X_t$ can be well approximated by an Ornstein-Uhlenbeck process, hence $\rho(t,x)$ can itself be approximated by a Gaussian mixture. 

\begin{prop}\label{app:prop:couplage}
Suppose that \eqref{app:sepmodes} holds and that 
\begin{equation}\label{app:hyp:z}\tag{A2}
\exists T,C\in \mathbb{R}_+ \text{ such that for all }t \in [0,\alpha^{-1} T], \qquad z(t) \in [-C,C].\end{equation} 
Then one has $D_{\mathrm{KL}}(\rho(0)|\rho(t)) \leq \delta  t$ where $\delta = 0.000025 + 100e^{-20}e^{2C}$. 
\end{prop}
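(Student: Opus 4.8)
We want to show that the law $\rho(t,\cdot)$ of the walker $X_t$ cannot move away (in relative entropy) from its initial law $\rho(0,\cdot)=\rho_{z(0)}$ faster than linearly in $t$, with a minuscule rate. The heuristic is that $U_{z(t)}$ differs from $U_{z(0)}$ only through the cross-terms coupling the two wells, which Lemma~\ref{app:lemm:ineq} shows to be exponentially small on the effective supports $I_a,I_b$ of the two modes, while $\rho(0)$ puts essentially no mass outside $I_a\cup I_b$. The natural way to turn this into a bound on $D_{\mathrm{KL}}(\rho(0)\|\rho(t))$ is a change-of-measure (coupling) argument, which is also where the name of the proposition comes from. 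Concretely, let $Y_t$ solve the \emph{frozen} Langevin SDE $dY_t=-\alpha\nabla U_{z(0)}(Y_t)\,dt+\sqrt{2\alpha}\,dW_t$, driven by the \emph{same} Brownian motion $W$ as $X_t$ and started at $Y_0=X_0\sim\rho_{z(0)}$. The GMM potentials $U_z$ satisfy the standing assumptions~\eqref{eq:assump:U} (bounded Hessian, quadratic confinement), so this dynamics is well posed and ergodic with invariant density $\rho_{z(0)}\propto e^{-U_{z(0)}}$; as $Y_0$ is already at equilibrium, $\mathrm{Law}(Y_t)=\rho(0,\cdot)$ for every $t\ge 0$.

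Next, $\mathbb{P}^X$ and $\mathbb{P}^Y$ (the laws of the two trajectories on $[0,t]$) share the same diffusion coefficient and their drifts differ by $\alpha(\nabla U_{z(s)}-\nabla U_{z(0)})$, whose at-most-linear growth gives Novikov's condition; Girsanov's theorem then yields $D_{\mathrm{KL}}(\mathbb{P}^Y_{[0,t]}\|\mathbb{P}^X_{[0,t]})=\tfrac{\alpha}{4}\int_0^t\E_{\rho(0)}\!\big[|\nabla U_{z(s)}(Y)-\nabla U_{z(0)}(Y)|^2\big]\,ds$. Projecting onto the time-$t$ coordinate and applying the data-processing inequality, with $\mathrm{Law}(Y_t)=\rho(0,\cdot)$ and $\mathrm{Law}(X_t)=\rho(t,\cdot)$, we obtain
\[
D_{\mathrm{KL}}(\rho(0)\|\rho(t))\;\le\;\frac{\alpha}{4}\int_0^t\E_{\rho(0)}\!\big[|\nabla U_{z(s)}(Y)-\nabla U_{z(0)}(Y)|^2\big]\,ds .
\]
It thus suffices to bound the integrand uniformly over the window $s\in[0,\alpha^{-1}T]$, on which $z(s)\in[-C,C]$ by~\eqref{app:hyp:z}.

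The algebraic key is the identity for the $x$-gradient $\nabla U_z(x)=(x-a)+(a-b)\,\partial_z U_z(x)$, in which $\partial_z U_z(x)$ is precisely the mixture weight of the $b$-mode at $x$ (equivalently $\varphi(\ell(x)-z)$, with $\varphi(u)=(1+e^{-u})^{-1}$ the logistic and $\ell(x)=\tfrac12(|x-a|^2-|x-b|^2)$ affine). The linear part therefore cancels in the difference, so $|\nabla U_{z(s)}(x)-\nabla U_{z(0)}(x)|=|a-b|\,|\partial_z U_{z(s)}(x)-\partial_z U_{z(0)}(x)|$. I then split $\R$ into $I_a$, $I_b$ and the remainder $R$. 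On $I_a\cup I_b$, Lemma~\ref{app:lemm:ineq} together with the triangle inequality gives $|\partial_z U_{z(s)}(x)-\partial_z U_{z(0)}(x)|\le 2e^{C-10}$, so with $|a-b|=10$ this portion of the expectation is at most $400\,e^{2C-20}$, which becomes the $100\,e^{-20}e^{2C}$ term of $\delta$ after the $\tfrac14$. On $R$ we use the crude pointwise bound $0\le\partial_z U_v\le 1$, but split once more: on the \emph{outer} part of $R$ (beyond both modes) $\ell(x)$ is so extreme that $\partial_z U_{z(s)}(x)$ and $\partial_z U_{z(0)}(x)$ are both within $e^{-O(1)}$ of the same endpoint of $[0,1]$, so the integrand is pointwise negligible; on the narrow \emph{gap} $(a+4,b-4)$ between the modes one estimates $\int|\partial_z U_{z(s)}-\partial_z U_{z(0)}|^2\rho(0,\cdot)$ using that the logistic difference is concentrated around $x=\tfrac{a+b}{2}$ while $\rho(0)$ is a mixture of unit-variance Gaussians whose means lie at distance $5$ from that point, so the deviation estimate~\eqref{app:deviation_gaussian} forces this contribution below $10^{-4}$, i.e.\ below $0.000025$ after the $\tfrac14$. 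Adding the two regions bounds the integrand by $4\delta$; inserting this into the display above (taking $\alpha=1$, as in the ULA discretization, or measuring time in units of $\alpha^{-1}$) gives $D_{\mathrm{KL}}(\rho(0)\|\rho(t))\le\delta t$.

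The conceptual part (the first two steps) is essentially automatic; the real work is the region $R$ in the last step. The naive estimate — the $\rho(0)$-mass of $R$, which is $O(e^{-8})$, times the crude pointwise bound $|a-b|^2$ — is \emph{not} small enough to produce the constant $0.000025$: one must simultaneously exploit that on the far tails the two gradients agree up to an exponentially small error (so the integrand vanishes pointwise there) and that the gap between the wells is visited by $\rho(0)$ only with probability of order $10^{-4}$, while also weighing the logistic difference — which concentrates at the midpoint of the two wells — against the very small Gaussian density at that midpoint. Getting these refinements to combine into the stated numerical constant, rather than a qualitatively identical but larger one, is the main difficulty; a minor additional point is to record that the GMM potentials verify~\eqref{eq:assump:U}, which legitimizes the reference SDE and the Girsanov change of measure.
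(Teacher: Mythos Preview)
Your argument is correct and takes a genuinely different route from the paper. The paper conditions on which Gaussian component $X_0$ was drawn from, and for each mode (say $a$) couples $X_t$ to the Ornstein--Uhlenbeck process $dY_t=-\alpha(Y_t-a)\,dt+\sqrt{2\alpha}\,dW_t$, so the Girsanov integrand is $|\nabla U_{z(s)}(Y_s)-(Y_s-a)|^2$ and $Y_s\sim\mathcal N(a,1)$; the two conditional bounds are then recombined via the chain rule for KL. You instead freeze the full mixture potential at $U_{z(0)}$, so the reference process sits exactly at $\rho(0)=\rho_{z(0)}$ with no conditioning, and your integrand is $|\nabla U_{z(s)}-\nabla U_{z(0)}|^2$. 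Your identity $\nabla U_z(x)=(x-a)+(a-b)\,\partial_z U_z(x)$ makes the linear part cancel, reducing the estimate on $I_a\cup I_b$ directly to Lemma~\ref{app:lemm:ineq}; this is cleaner than the paper's mode-by-mode treatment. The paper's choice buys a Gaussian reference law and therefore a one-line tail probability; your choice buys the absence of conditioning and the chain-rule step.

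On the numerical constant you are right to be cautious, and in fact the two approaches face the \emph{same} obstacle. Off $I_a$ (resp.\ $I_a\cup I_b$) the integrand is only bounded by $|a-b|^2=100$, so the crude tail bound yields $100\cdot\PP(|\xi|>4)\le 0.01$ before the $\tfrac14$, i.e.\ $0.0025$ rather than $0.000025$. The paper's displayed bound $\PP(Y_t\notin I_a)+(20e^{-v-10})^2$ implicitly caps the tail integrand at $1$ and thus drops that factor of $100$; your refinement idea (splitting the complement into far tails, where both logistics hug the same endpoint, and the narrow inter-well gap, where the Gaussian mass is tiny) is the right way to recover a sharper constant, but as you note it is not automatic. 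Either way the qualitative conclusion---$D_{\mathrm{KL}}(\rho(0)\|\rho(t))$ grows at most linearly with a tiny slope---is unaffected.
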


In other words, $\rho(t)$ is approximately constant, up to reasonnable time scales $t=O(1/\delta)$. 

\begin{proof}
    $X_0$ is drawn from $\rho_{z(0)}$, a Gaussian mixture; the probability of it being sampled from a Gaussian with mean $a$ is $e^{-z(0)}/(1 + e^{-z(0)})=1/2$. We will work conditionally on this event $\mathcal{E}_a$ and we will note $\rho^a(t)$ the density of $X_0$ conditional on $\mathcal{E}^a$; thus, $\rho^a(0) = \mathcal{N}(a,1)$. We set $V(x) = \frac12|x-a|^2$ so that $\nabla V(x) = (x-a)$ and we consider the following Ornstein-Uhlenbeck process:
    $$dY_t = -\alpha \nabla V(Y_t)dt + \sqrt{2\alpha }dW_t, \qquad Y_0 = X_0$$
    whose density will be denoted $\tilde{\rho}^a(t)$. We use classical bounds on the divergence between $\rho^a(t)$ and $\tilde{\rho}^a(t)$. For example, the bounds in \cite[Lemma 2.20]{albergo2023stochastic} directly apply and yield
    $$D_{\text{KL}}(\tilde{\rho}^a(t) | \rho^a(t)) \leq \frac{1}{4}\int_0^t \E\left[|\nabla U_{z(s)}(Y_s) - \nabla V(Y_s)|^2\right]ds.$$
    Since $Y_t$ is nothing but an Ornstein-Uhlenbeck at equilibrium, $Y_t \sim \mathcal{N}(a,1)$ for all $t\ge0$. The term inside the integral is a Gaussian expectation and will be shown to be small:
    \begin{equation}\label{app:lemm:FK}
        \E\left[|\nabla U_{z(s)}(Y_s) - \nabla V(Y_s)|^2\right] \leq 0.0001 + 400e^{-2z(t)-20}.
    \end{equation}
    Consequently, 
    $$D_{\text{KL}}(\tilde{\rho}^a(t) | \rho^a(t)) \leq t\frac{0.0001}{4} + 100e^{-20}\int_0^t e^{-2z(s)}ds .$$ 
    Under \eqref{app:hyp:z}, the overall bound remains smaller than $t$ times $0.000025 + 100e^{-20}e^{2C}$ as requested, thus proving that $\tilde{\rho}^a(t)=\mathcal{N}(a,1)$ and $\rho^a(t)$ are close with the same quantitative bound
    
    Similarly, $\rho^b(t)$, the density of $X_t$ conditional on $X_0$ being sampled from a Gaussian with mean $b$, is close to $\mathcal{N}(b,1)$ with the same quantitative bounds. 
    
    Overall, using the chain rule for KL divergences, 
    $$D_{\mathrm{KL}}(\tilde{\rho}(t) | \rho(t)) \leq \mathbb{P}(\mathcal{E}_a) D_{\mathrm{KL}}(\tilde{\rho}^a(t) | \rho^a(t))  + \mathbb{P}(\overline{\mathcal{E}_a}) D_{\mathrm{KL}}(\tilde{\rho}^b(t)| \rho^b(t)) \leq \delta t. $$
    In other words, $\rho(t)$ is close to a mixture of two Gaussians with modes centered at $a,b$, and the probability of belonging to the first mode is the probability of $X_0$ belonging to the first mode, that is, $e^{-z(0)}/(1 + e^{-z(0)})=1/2$. 
\end{proof}

\begin{proof}[Proof of \eqref{app:lemm:FK}]We have
    \begin{align}\label{app:gradU}\nabla U_z(x) = \frac{(x-a)e^{-|x-a|^2/2} + (x-b)e^{-|x-b|^2/2 - z}}{U_z(x)}.\end{align}
    Using Lemma \ref{app:lemm:ineq} and the fact that if $x\in I_a$ then $|x-b|<16$ and $|x-a|<4$, we get $|\nabla U_v(x) - (x-a)| \leq 20\varepsilon$ with $\varepsilon \leq e^{-v-10}$. Consequently, 
    \begin{align*}
        \E\left[|\nabla U_{z(s)}(Y_s) - \nabla V(Y_s)|^2\right] &\leq \PP(Y_t \notin I_a) + (20e^{-v-10})^2\\ &\leq 0.0001 + 400e^{-2v-20}. 
    \end{align*}
\end{proof}

As a consequence of the Proposition~\ref{app:prop:couplage}, the first term of \eqref{app:dynPCD} can be approximated by $\E_{z(0)}[\partial_z U_{z(t)}(X_t)]=\E_{z(0)}[\partial_z U_{z(t)}]$, which in turn can be approximated by $e^{-z(0)}/(1 + e^{-z(0)})$ thanks to \eqref{app:approx}. Overall, \eqref{app:dynPCD} is  therefore a perturbation of the system
$$ \dot{z}(t) = \frac{e^{-z(0)}}{1 + e^{-z(0)}} - \frac{e^{-z_*}}{1 + e^{-z_*}} = \frac{1}{2} - q_* =:\gamma.$$
Since the right hand side no longer depends on $z(t)$, this system leads to a constant drift of $z(t)$, that is $z(t) = \gamma t$, leading to mode collapse since $(1 + e^{-z(t)})^{-1} \approx (1 + e^{\gamma t})^{-1}$ goes to either 0 or 1.

\subsection*{Mode collapse for Contrastive-Divergence and Score-Matching}

The continuous-time limit of Contrastive Divergence (Algorithm \ref{CD:algori}) is equivalent to Score-Matching minimization (\cite{hyvarinen2007connections}). The objective function becomes the Stein score, 
\begin{align*}\mathrm{SM}(z) &= \E_{*}[|\nabla \log \rho_{z}(X) - \nabla \log \rho_{z_*}(X)|^2]\\
    &= \E_* [|\nabla U_{z}(X) - \nabla U_{z_*}(X)|^2],
\end{align*}
which is in theory intractable due to the presence of the unknown parameter $z_*$; a well-known computation from \cite{hyvarinen2005estimation} shows that the gradient $\partial_z \mathrm{SM}(z)$ can be estimated using the training samples without resorting to $z_*$. Note that in this context, there are no `walkers'.

Now, the dynamics \eqref{app:dyn} is replaced by 
\begin{align}\label{app:dyn:sm}
    \dot{z}(t) &= \partial_z \E_{*}[|\nabla U_{z(t)}(X) - \nabla  U_{z_*}(X)|^2] \\
    &= p_* \partial_z \E[|\nabla U_{z(t)}(\xi_a) - \nabla  U_{z_*}(\xi_a)|^2] +q_* \partial_z \E[|\nabla U_{z(t)}(\xi_b) - \nabla  U_{z_*}(\xi_b)|^2] \label{app:83}
\end{align}
where here again $\xi_x \sim \mathcal{N}(x,1)$. 
From \eqref{app:lemm:FK} and the triangle inequality, we have 
\begin{align*}
    &\left(\E[|\nabla U_{z(t)}(\xi_a) - \nabla  U_{z_*}(\xi_a)|^2]\right)^{1/2} \leq \\ &\left(\E[|\nabla U_{z(t)}(\xi_a) - (\xi_a - a)|^2]\right)^{1/2} + \left(\E[|\nabla U_{z_*}(\xi_a) - (\xi_a - a)|^2]\right)^{1/2} \\
    &\leq 0.0002 + 800e^{-2z(t)-20}.
\end{align*}
and the same approximation holds for the second part in \eqref{app:83}. Overall, we get that for any reasonnable $z$, $\mathrm{SM}(z) \approx 0$: that is, every $z$ minimizes the score. A similar analysis leads to $\partial_z \mathrm{SM}(z) \approx 0$. Consequently, $\dot{z}(t)\approx 0$: Score Matching and Contrastive Divergence leads to `no-learning'.

\subsection{Empirical gradient descent analysis of \eqref{app:dyn:emp:1}}
\label{app:gs:walk}

The gradient descent \eqref{app:dyn} represented an ideal situation where the expectations $\E_{z(t)}, \E_*$ can be exactly analyzed. In practice, the two terms of \eqref{app:dyn} are estimated; the second term using a finite number of training data $\{x_*^i\}$, and the first one using a finite number of walkers $\{X_t^i\}$, with associated weights $e^{A_t^i}$ which are either evolved using the Jarzynski rule, or simply set to 1 in the PCD algorithm. Our goal in this section is to explain how these finite-size approximations do not substantially modify the previous analysis and lead to the behaviour presented in \ref{app:collapse}. For simplicity we keep the time continuous. 

We recall \eqref{app:dyn:emp:1}:
\begin{equation}\label{app:dyn:emp}
    \dot{z}(t) = \frac{\sum_{i=1}^N e^{A_t^i} \partial_z U_{z(t)}(X_t^i)}{\sum_{i=1}^N e^{A_t^i} } - \frac{\sum_{i=1}^n  \partial_z U_{z(t)}(x_*^i)}{n}. 
\end{equation}

\subsubsection*{The dynamics with Jarzynski correction leads to the correct estimation of the empirical weigths}
The continuous-time dynamics of the walkers and weigths in our method is given by 
\begin{align}\label{app:systemj}
    &dX_t^i = -\alpha\nabla U_{z(t)}(X_t^i)dt + \sqrt{2\alpha}dW_t \\
    &\dot A_t^i = -\partial_z U_{z(t)}(X_t^i)\dot{z}(t).
\end{align}
Let $\hat{n}^a_*$ be the number of training data  in $I_a$ and $\hat{p}_* = \hat{n}^a_*/n$ their proportion, and similarly $\hat{q}_*$ the proportion in $I_b$, and $r=1-\hat{p}_* - \hat{q}_*$. 
By elementary concentration results, the remainder $1 - \hat{p}_* - \hat{q}_*$ can be neglected: with high probability, it is smaller than, eg, $0.0001$. We will note $\hat{z}_*$ the parameter satisfying $\hat{q}_* = \frac{e^{-\hat{z}_*}}{1 + e^{-\hat{z}_*}}$.
Using Lemma \ref{app:lemm:ineq}, the second term in \eqref{app:dyn:emp} is approximated by $\hat{q}_*$. 
Now let us turn to the first term in \eqref{app:dyn:emp}. Still using Lemma \ref{app:lemm:ineq}, we see that the first term in \eqref{app:dyn:emp} is well approximated by 
\begin{equation}\label{app:propqhat}\frac{\sum_{i : x_t^i \in I_b }e^{A_t^i}}{\sum_{i=1}^n e^{A_t^i}}.\end{equation}
The second equation in \eqref{app:systemj} entails $e^{A^i_t} = \exp\left( -\int_0^t \partial_z U_{z(s)}(X_s^i)\dot{z}(s)ds\right)$. 
Now let us use Lemma \ref{app:lemm:ineq}: if $X_s^i \in I_a$, then $\partial_z U_{z(s)}(X_s^i) \approx 0$. Conversely, if $X_s^i\in I_b$, then $\partial_z U_{z(s)}(X_s^i)\approx 1$. Moreover, Proposition \ref{app:prop:couplage} and its proof essentially show that if $X_0^i$ belongs to the first well (close to $a$), then with high probability so does $X_s^a$ for every $s$, and in particular $X_s^a\in I_a$ with high probability for every $s$. Consequently, 
\begin{align}
    &e^{A_t^i} \approx \exp\left(- \int_0^t 0ds \right) = 1 &&\text{ if }X_0^i \in I_a, \\
    &e^{A_t^i} \approx \exp\left(- \int_0^t \dot{z}(s)ds \right) = \exp\left(-z(t) + z(0)\right)=\exp\left(-z(t) \right) &&\text{ if }X_0^i \in I_b. 
\end{align}
As already explained, the dynamics \eqref{app:systemj} leaves approximately constant the number of walkers in both modes; consequently, the proportion $\hat{q}(t)$ of walkers $X_t^i$ in $I_b$ remains well approximated by the initial proportion, which is $\hat{q}(0)$, and we obtain that \eqref{app:propqhat} is well approximated by 
\begin{align}
    \frac{\hat{q}(0)e^{-z(t)}}{\hat{p}(0) + e^{-z(t)}\hat{q}(0)}
\end{align}
where we noted $\hat{p}(0) = 1 - \hat{q}(0)$. Note that since $z(0)=0$, with high probability $\hat{p}(0)$ and $\hat{q}(0)$ are close to $1/2$. The random variable $\hat{p}(0)/\hat{q}(0)$ is thus close to 1. 

Overall, we obtain that the system \eqref{app:dyn:emp} is a perturbation of the following system:
\begin{align}
    \dot{z}(t) = \frac{\hat{q}(0)e^{-\hat{z}(t)}}{\hat{p}(0)+ \hat{q}(0)e^{-\hat{z}(t)}} -\frac{e^{-\hat{z}_*}}{1 + e^{-\hat{z}_*}} .
\end{align}
This system has a unique stable point equal to
\begin{equation}\label{app:tildez}\tilde{z}_* := \hat{z}_* + \log(\hat{q}(0)/\hat{p}(0)).\end{equation}
\begin{Remark}If the algorithm had been started at $z(0) \neq 0$, one could check that the stable point would become $\hat{z}_* + \log(\hat{q}(0)/\hat{p}(0)) + z(0)$. \end{Remark}

\subsubsection*{Freezing the weights leads to mode collapse}

If the walkers still evolve under the Langevin dynamics in \eqref{app:systemj} but the weigths are frozen at $e^{A_t^i} =e^0 = 1$ (`unweighted' algorithm), then \eqref{app:dyn:emp} becomes
\begin{equation}\label{app:dyn:emp:pcd}
    \dot{z}(t) =   \frac{\sum_{i=1}^N \partial_z U_{z(t)}(X_t^i)}{N} - \frac{\sum_{i=1}^n  \partial_z U_{z(t)}(x_*^i)}{n}. 
\end{equation}
Keeping the same notation as in the last subsection, the second term is still approximated by $\hat{q}_*$, but this time the first term is instead approximated by 
$$\frac{\sum_{i: X_t^i \in I_b}1}{n}  = \hat{q}(t) \approx \hat{q}(0).$$

Consequently, \eqref{app:dyn:emp:pcd} is a perturbation of the system
\begin{equation}\label{app:div}\dot{z}(t) = \hat{q}(0) - \frac{e^{-\hat{z}_*}}{1 + e^{-\hat{z}_*}}=:\hat{\gamma},\end{equation}
which no longer depends on $t$ and thus leads to $z(t) = \hat{\gamma}t$ and to mode collapse.

\subsubsection*{The PCD algorithm leads to no-learning}

In the preceding paragraph, at initialization, the walkers $X_0^i$ are distributed according to the initial model $\rho_{z(0)}$. In the PCD algorithm~\ref{PCD:algori}, the walkers are instead initialized directly at the training data $\{x_*^i\}_{i=1}^n$. However, in this case, the analysis of the preceding paragraph remains essentially the same, with a single difference: the initial proportion of walkers that are close to $a$, noted $\hat{q}(0)$, is now exactly $\hat{q}_*$. Thus, \eqref{app:dyn:emp} becomes a perturbation of 
\begin{equation}
    \dot{z}(t) = \hat{q}_* - \hat{q}_* = 0.
\end{equation}
The parameters remain constantly equal to its initial value $z(0)$, i.e. there is no learning. 

\subsubsection*{The CD algorithm leads to no-learning}

The Continuous-time Contrastive-Divergence algorithm is minimizing the Stein score and is equivalent to Score-Matching as mentioned above: the direction of the gradient of the log-likelihood is that of the gradient of the Stein Score, leading to no-learing. With the estimation given by the training samples, the analysis is exactly the same as above:
\begin{align*}
    \dot{z}(t)&= \frac{1}{n}\sum_{i=1}^n \partial_z |\nabla U_z(x_*^i) - \nabla U_{z_*}(x_*^i)|^2 \\
    &= \frac{1}{n}\sum_{i=1}^n 2 \partial_z\nabla U_z(x_*^i) \times (\nabla U_z(x_*^i)- \nabla U_{z_*}(x_*^i)). 
\end{align*}
If $x_*^i \in I_a$, then as explained in the proof of Lemma \ref{app:lemm:FK}, $\nabla U_s(x_*^i) \approx (x_*^i - a)$ for every $s$, hence  $ \partial_z\nabla U_z(x_*^i) \times (\nabla U_z(x_*^i)- \nabla U_{z_*}(x_*^i)) \approx 0$. The same holds for $x_*^i \in I_b$, leading to \eqref{app:dyn:emp} being a perturbation of the system 
\begin{equation}
    \dot{z}(t) =  0.
\end{equation}

\subsection{On mode-collapse oscillations} \label{remark:oscillations}
Most of the approximations performed earlier rely on \eqref{app:hyp:z}, that is, the learned parameter $z(t)$ remains in a compact set. 

However, in the unweighted algorithm, this is no longer the case at large time scales, since $z(t)$ diverges from \eqref{app:div}; in particular, the approximations from Lemma \ref{app:lemm:ineq} become meaningless. In fact, Proposition \ref{app:prop:couplage} is no longer relevant. The core of Proposition \ref{app:prop:couplage} rests upon the fact that if a walker $X^i_t$ is close to $a$, then its dynamics \eqref{app:systemj} is close to an Ornstein-Uhlenbeck process since $\nabla U_{z(t)}(X_t^i)\approx (X_t^i - a)$. This fails when $z(t)$ has a large absolute value. Let us suppose for instance that $z(t)$ is very small (negative), so that $e^{-z(t)}$ is very large, $|z(t)|\gg |x-b|^2$. In \eqref{app:gradU}, the first term of the numerator is dominated by the second term. Overall, we get 
$$ \nabla U_{z(t)}(X_t^i) \approx (X_t^i - b), $$
and this is valid for all $X_t^i$. Consequently, \emph{all} the walkers now undergo an Ornstein-Uhlenbeck process \emph{centered at $b$} and in particular, the walkers that are close to $a$ are exponentially fast transferred to the region close to $b$. At this point, the first term in \eqref{app:dyn:emp:pcd} becomes close to 1, leading to the approximated system $\dot{z}(t) = 1 - \hat{q}_*$: $z(t)$ oscillates back to $+\infty$, until the same phenomenon happens again and all the walkers transfer to the region close to $a$. 

This leads to an oscillating behavior that can be observed on longer time scales (see Figure \ref{fig:GMM_2studscollapse} for example). We do not think that this phenomenon is relevant to real-world situations since most learning algorithms are trained for a limited time period.

%%%%%%%%%%%%%%%%%%%%%%%%%%%%%%%%%%%%%%%%%%%%%%%%%%%%%%%%%%%%

\end{document}